\def\eqref#1{equation~\ref{#1}}
\def\1{\bm{1}}
\def\rr{{\textnormal{r}}}
\def\va{{\bm{a}}}
\def\vc{{\bm{c}}}
\def\vf{{\bm{f}}}
\def\vv{{\bm{v}}}
\def\vw{{\bm{w}}}
\def\vx{{\bm{x}}}
\def\vy{{\bm{y}}}
\DeclareMathAlphabet{\mathsfit}{\encodingdefault}{\sfdefault}{m}{sl}
\SetMathAlphabet{\mathsfit}{bold}{\encodingdefault}{\sfdefault}{bx}{n}
\def\dd{\mathrm{d}}
\def\*#1{\mathbf{#1}}
\def\good{\mathrm{good}}
\def\bad{\mathrm{bad}}
\def\baseline{\mathrm{b}}
\def\circle#1{{\small \textcircled{\raisebox{-0.9pt}{#1}}}}
\title{Multi-objective Optimization by Learning Space Partitions}
\author{%
   Yiyang Zhao \\
  Worcester Polytechnic Institute \\
   \And
   Linnan Wang \\
  Brown University \\
   \And
   Kevin Yang \\
  UC Berkeley \\
   \And
  Tianjun Zhang \\
  UC Berkeley \\
   \AND
   Tian Guo \\
  Worcester Polytechnic Institute \\
   \And
   Yuandong Tian \\
  Facebook AI Research\\

}
\def\vx{\mathbf{x}}
\def\vy{\mathbf{y}}
\def\vf{\mathbf{f}}
\def\vc{\mathbf{c}}
\def\vw{\mathbf{w}}
\def\ours{LaMOO}
\def\rr{\mathbb{R}}
\def\qehvi{qEHVI}
\def\cmaes{CMA-ES}
\def\laqehvi{\ours{} with qEHVI}
\def\lacmaes{\ours{} with CMA-ES}
\newtheorem{lemma}{Lemma}
\newtheorem{theorem}{Theorem}
\newtheorem{definition}{Definition}
\newtheorem{observation}{Observation}
\begin{document}

\maketitle

\begin{abstract}

In contrast to single-objective optimization (SOO), multi-objective optimization (MOO) requires an optimizer to find the Pareto frontier, a subset of feasible solutions that are not dominated by other feasible solutions. In this paper, we propose \ours{}, a novel multi-objective optimizer that learns a model from observed samples to partition the search space and then focus on promising regions that are likely to contain a subset of the Pareto frontier. The partitioning is based on the \textit{dominance number}, which measures ``how close'' a data point is to the Pareto frontier among existing samples. To account for possible partition errors due to limited samples and model mismatch, we leverage Monte Carlo Tree Search (MCTS) to exploit promising regions while exploring suboptimal regions that may turn out to contain good solutions later. Theoretically, we prove the efficacy of learning space partitioning via \ours{} under certain assumptions. Empirically, on the HyperVolume (HV) benchmark, a popular MOO metric, LaMOO substantially outperforms strong baselines on multiple real-world MOO tasks, by up to 225\% in sample efficiency for neural architecture search on Nasbench201, and up to 10\% for molecular design.

\end{abstract}

\vspace{-0.1in}
\section{Introduction}
\vspace{-0.1in}
\label{sec:intro}
Multi-objective optimization (MOO) has been extensively used in many practical scenarios involving trade-offs between multiple objectives. For example, in automobile design~\citep{case1}, we must maximize the performance of the engine while simultaneously minimizing emissions and fuel consumption. In finance~\citep{case2}, one prefers a portfolio that maximizes the expected return while minimizing risk.

Mathematically, in MOO we optimize $M$ objectives $\vf(\vx) = [f_1(\vx), f_2(\vx), \ldots, f_M(\vx)]\in \rr^M$:
\begin{eqnarray}
\min\;& f_1(\vx), f_2(\vx), ..., f_M(\vx)  \label{eq:problem-setting}  \label{prob-formulation} \\
\mathrm{s.t.}\;& \vx \in \Omega \nonumber
\end{eqnarray}
While we could set arbitrary weights for each objective to turn it into a single-objective optimization (SOO) problem, modern MOO methods aim to find the problem's entire \emph{Pareto frontier}: the set of solutions that are not \emph{dominated} by any other feasible solutions\footnote{Here we define \emph{dominance} $\vy \prec_\vf \vx$ as $f_i(\vx) \leq f_i(\vy)$ for all functions $f_i$, and exists at least one $i$ s.t. $f_i(\vx) < f_i(\vy)$, $1\le i \le M$. That is, solution $\vx$ is always better than solution $\vy$, regardless of how the $M$ objectives are weighted.} (see Fig.~\ref{fig:concepts} for illustration). The Pareto frontier yields a global picture of optimal solution structures rather than focusing on one specific weighted combination of objectives.

As a result, MOO is fundamentally different from SOO. Instead of focusing on a single optimal solution, a strong MOO optimizer should cover the search space broadly to explore the Pareto frontier. Popular quality indicators in MOO, such as hypervolume (HV), capture this aspect by computing the volume of the currently estimated frontier. Specifically, given a reference point $R \in \rr^M$, as shown in Fig.~\ref{fig:concepts}(a), the \emph{hypervolume} of a finite approximate Pareto set $\mathcal{P}$ is the M-dimensional Lebesgue measure $\lambda_{M}$ of the space dominated by $\mathcal{P}$ and bounded from below by $R$. That is, $HV(\mathcal{P}, R) = \lambda_{M} (\cup_{i=1}^{|\mathcal{P}|}[R, y_{i}])$, where $[R, y_{i}]$ denotes the hyper-rectangle bounded by reference point $R$ and $y_{i}$. Consequently, the optimizer must consider the diversity of solutions in addition to their optimality.

While several previous works have proposed approaches to capture this diversity-optimality trade-off ~\citep{nsga-ii, nsgaiii, cmaes_moo, qehvi, qparego}, in this paper, we take a fundamentally different route by \emph{learning} promising candidate regions from past explored samples. Ideally, to find the Pareto frontier in as few function evaluations as possible, we want to sample heavily in the Pareto optimal set $\Omega_P$, defined as the region of input vectors that corresponds to the Pareto frontier.

One way to focus samples on $\Omega_P$ is to gradually narrow the full search space down to the subregion containing $\Omega_P$ via partitioning. For example, in the case of quadratic objective functions, $\Omega_P$ can be separated from the non-optimal set $\Omega \backslash \Omega_P$ via simple linear classifiers (see Observation~\ref{obs:isotropic},\ref{obs:cutting-planes}). Motivated by these observations, we thus design \ours{}, a novel MOO meta-optimizer that progressively partitions regions into sub-regions and then focuses on sub-regions that are likely to contain Pareto-optimal regions, where existing solvers can help. Therefore, \ours{} is a meta-algorithm.  

Unlike cutting-plane methods~\citep{loganathan1987convergent, vieira2019cutting, hinder2018cutting} that leverage the (sub)-gradient of convex objectives as the cutting plane, with global optimality guarantees, \ours{} is data-driven: it leverages previous samples to build classifiers to \textit{learn} the partition and focuses future samples in these promising regions. No analytical formula of objectives or their sub-gradients is needed. \ours{} is a multi-objective extension of recent works ~\citep{wang2020learning, plalam} that also learn space partitions but for a single black-box objective.

Empirically, \ours{} outperforms existing approaches on many benchmarks, including standard benchmarks in multi-objective black-box optimization, and real-world multi-objective problems like neural architecture search (NAS)~\citep{ofa, proxylessnas} and molecule design. For example, as a meta-algorithm, \ours{} combined with CMA-ES as an inner routine requires only 62.5\%, 8\%, and 29\% as many samples to reach the same hypervolume as the original CMA-ES~\citep{cma-es} in BraninCurrin~\citep{bc_func}, VehicleSafety~\citep{vehicle_safety} and Nasbench201~\citep{nasbench201}, respectively. On average, compared to qEHVI, \ours{} uses 50\% samples to achieve the same performance in these problems. In addition, \ours{} with qEHVI~\citep{qehvi} and CMA-ES require 71\% and 31\% fewer samples on average, compared to naive qEHVI and CMA-ES, to achieve the same performance in molecule discovery. Our code can be found at \url{https://github.com/aoiang/LaMOO}.

\vspace{-0.1in}
\section{Related Work}
\vspace{-0.1in}
\label{related-works}

\begin{figure}
	\begin{minipage}{0.43\linewidth}
		\includegraphics[width=58mm]{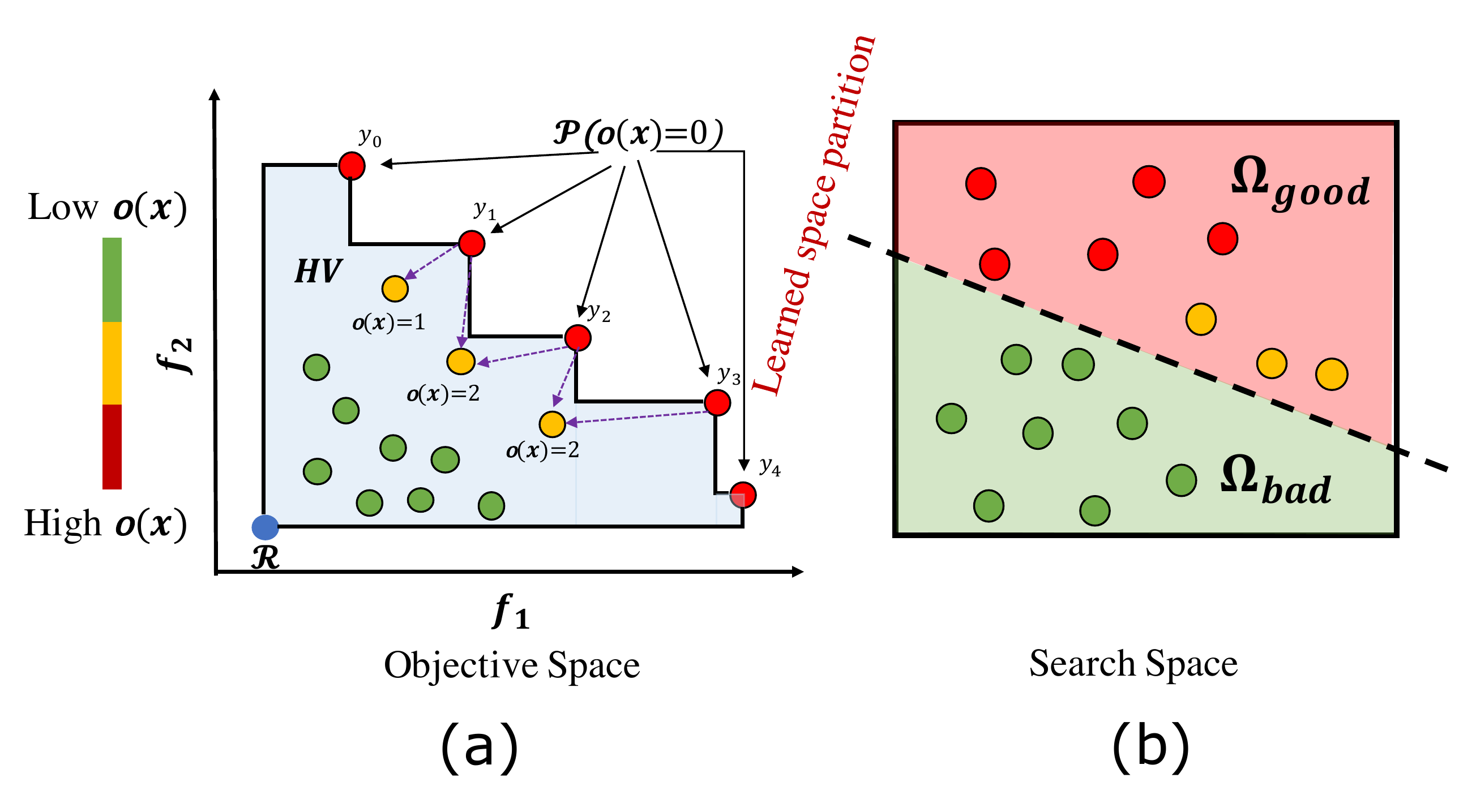}

	\end{minipage}\hfill	
	\begin{minipage}{0.58\linewidth}
		\scriptsize

        \begin{tabular}{|c|c|c|}
            \hline
            \textbf{MOO methods} & \textbf{Sampling Method}                     & \textbf{Objectives\textgreater 3} \\ \hline
            MOEA/D~\citep{moead}      & \multirow{4}{*}{Evolution}             & $\times$                         \\ \cline{3-3} 
            CMA-ES~\citep{cma-es}      &                                        & $\times$                         \\ \cline{3-3} 
            NSGA-II~\citep{nsga-ii}     &                                        & $\times$                         \\ \cline{3-3} 
            NAGA-III~\citep{nsgaiii}    &                                        & $\surd$                          \\ \hline
            qPAREGO~\citep{qparego}     & \multirow{2}{*}{Bayesian optimization} & $\surd$                          \\ \cline{3-3} 
            qEHVI~\citep{qehvi}       &                                        & $\surd$                          \\ \hline
            \textbf{LaMOO} (our approach)     & Space partition                        & $\surd$                          \\ \hline
            \end{tabular}
	\end{minipage}
	\vspace{-0.2in}
	\caption{\small \textbf{Left}: A basic setting in Multi-objective Optimization (MOO), optimizing $M=2$ objectives in Eqn.~\ref{prob-formulation}. (a) depicts the objective space $(f_1, f_2)$ and (b) shows the search space $\vx \in \Omega$. In (a), $P$ denotes the Pareto frontier, $R$ is the reference point, the hypervolume $HV$ is the space of the shaded area, and $o(\vx)$ are the dominance numbers. In (b), once a few samples are collected within $\Omega$, \ours{} \emph{learns to partition} the search space $\Omega$ into sub-regions (i.e. $\Omega_{good}$ and $\Omega_{bad}$) according to the dominance number in objective space, and then focuses future sampling on the good regions that are close to the Pareto Frontier. This procedure can be repeated to further partition $\Omega_{good}$ and $\Omega_{bad}$. \textbf{Right}: A table shows the properties of MOO methods used in experiments.}
\label{fig:concepts}
\end{figure}

\textbf{Bayesian Optimization (BO)}~\citep{qparego, qehvi, 1197687, YANG2019945, couckuyt2014fast, thompson_sample, sms_ego} is a popular family of methods to optimize black-box single and multi-objectives. Using observed samples, BO learns a surrogate model $\hat f(\vx)$, search for new promising candidates based on \emph{acquisition function} built on $\hat f(\vx)$, and query the quality of these candidates with the ground truth black-box objective(s). In multi-objective Bayesian optimization (MOBO), most approaches leverage Expected Hypervolume Improvement (EHVI) as their acquisition function~\citep{1197687, YANG2019945, couckuyt2014fast}, since finding the Pareto frontier is equivalent to maximizing the hypervolume given a finite search space~\citep{hv_prove}. 
There are methods~\citep{qparego, thompson_sample, sms_ego} that use different acquisition functions like expected improvement~\citep{jones1998efficient} and Thompson sampling~\citep{thompson1933likelihood}. 
EVHI is computationally expensive: its cost increases exponentially with the number of objectives. To address this problem, qEHVI~\citep{qehvi} accelerates optimization by computing EHVI in parallel, and has become the state-of-the-art MOBO algorithm. In this paper, we leverage qEHVI as a candidate inner solver in our proposed \ours{} algorithm. 

\textbf{Evolutionary algorithms (EAs)}~\citep{nsga-ii, improved_nsga, moead, SMS-EMOA,cma-es} are also popular methods for MOO tasks. MOO-EAs can be roughly categorized into three groups. One category~\citep{nsga, nsga-ii, nsgaiii} leverages Pareto dominance to simultaneously optimize all objectives. A second category (e.g.,~\citep{moead}) decomposes a multi-objective optimization problem into a number of single-objective sub-problems, converting a difficult MOO into several SOOs. Another category is quality indicator-based methods, such as ~\citep{SMS-EMOA} and ~\citep{cma-es}. They scalarize the current Pareto frontier using quality indicators (e.g., HV) and transfer a MOO to a SOO. New samples are generated by crossover and mutation operations from existing ones. While EAs have shown success in two-objective problems, non-quality indicator-based methods (i.e., the first two categories) struggle to handle a larger number of objectives. Specifically, for MOO with many objectives, NSGA-II~\citep{nsga-ii} easily gets stuck in a dominance resistant solution~\citep{improved_nsgaii} which is far from the true Pareto frontier. Determining the number of weight vectors, and efficiently generating and initializing them, are the main challenges of MOEA/D in many-objective ($>3$) problems~\citep{moead, moead_survey}.

\textbf{Quality Indicators}. Besides hypervolume, there are several other quality indicators~\citep{gd, igd, maxspread, spacing, error_ratio} for evaluating sample quality, which can be used to scalarize the MOO to SOO.  The performance of a quality indicator can be evaluated by three metrics~\citep{QI_res1, QI_res2}, including convergence (closeness to the Pareto frontier), uniformity (the extent of the samples satisfy the uniform distribution), and spread (the extent of the obtained approximate Pareto frontier). Sec.~\ref{app:qi} specifically illustrates the merits of each quality indicator. 
HyperVolume is the only metric we explored that can simultaneously satisfy the evaluation of convergence, uniformity, and spread without the knowledge of the true Pareto frontier. Therefore, throughout this work, we use HV to evaluate the optimization performance of different algorithms.

\vspace{-0.1in}
\section{Learning Space Partitions: A Theoretical Understanding}
\vspace{-0.1in}
\label{sec:theoretical-understanding}
Searching in high-dimensional space to find the optimal solution to a function is in general a challenging problem, especially when the function's properties are unknown to the search algorithm. The difficulty is mainly due to the curse of dimensionality: to adequately cover a $d$-dimensional space, in general, an exponential number of samples are needed.  

For this, many works use a ``coarse-to-fine'' approach: partition the search space and then focusing on promising regions. Traditionally, manually defined criteria are used, e.g., axis-aligned partitions~\citep{munos2011optimistic}, Voronoi diagrams~\citep{kim2020monte}, etc. Recently,~\citep{lanas,wang2020learning,plalam} \emph{learn} space partitions based on the data collected thus far, and show strong performance in NeurIPS black box optimization challenges~\citep{comp_4th, comp_9th}. 

On the other hand, there is little quantitative understanding of space partition. In this paper, we first give a formal theoretical analysis on why learning plays an important role in space-partition approaches for SOO. Leveraging our understanding of how space partitioning works, we propose \ours{} which empirically outperforms existing SoTA methods on multiple MOO benchmarks. 

\vspace{-0.1in}
\subsection{Problem Setting} 
\vspace{-0.1in}
Intuitively, learning space partitions will yield strong performance if the classifier can determine which regions are promising given few data points. We formalize this intuition below and show why it is better than fixed and manually defined criteria for space partitioning.  

Consider the following sequential decision task. We have $N$ samples in a \emph{discrete} subset $S_0$ and there exists one sample $\vx^*$ that achieves a minimal value of a scalar function $f$. Note that $f$ can be any property we want, e.g., in the Pareto optimal set. The goal is to construct a subset $S_T \subseteq S_0$ after $T$ steps, so that (1) $\vx^* \in S_T$ and (2) $|S_T|$ is as small as possible. More formally, we define the reward function $r$ as the probability that we get $\vx^*$ by randomly sampling from the resulting subset $S_T$:
\begin{equation}
    r := \frac{1}{|S_T|}P(\vx^* \in S_T) 
\end{equation}
It is clear that $0 \le r \le 1$. $r = 1$ means that we already found the optimal sample $\vx^*$. 

Here we use discrete case for simplicity and leave continuous case (i.e., partitioning a region $\Omega_0$ instead of a discrete set $S_0$) to future work. Note $N$ could be large, so here we consider it infeasible to enumerate $S_0$ to find $\vx^*$. However, sampling from $S_0$, as well as comparing the quality of sampled solutions are allowed. An obvious baseline is to simply set $S_T := S_0$, then $r_\baseline = N^{-1}$. Now the question is: can we do better? Here we seek help from the following \emph{oracle}:
\begin{definition}[$(\alpha, \eta)$-Oracle] 
Given a subset $S$ that contains $\vx^*$, after taking $k$ samples from $S$, the oracle can find a \emph{good} subset $S_\good$ with $|S_\good| \le |S| / 2$ and 
\begin{equation}
    P\left(\vx^* \in S_\good | \vx^* \in S\right) \ge 1 - \exp\left(- \frac{k}{\eta |S|^\alpha}\right)
\end{equation}
\end{definition}
\begin{lemma}
The algorithm to uniformly draw $k$ samples in $S$, pick the best and return is a $(1,1)$-oracle.  
\end{lemma}
See Appendix for proof. Note that a $(1, 1)$-oracle is very weak, and is of little use in obtaining higher reward $r$. We typically hope for an oracle with smaller $\alpha$ and $\eta$ (i.e., both smaller than 1). Intuitively, such oracles are more sample-efficient: with few samples, they can narrow down the region containing the optimal solution $\vx^*$ with high probability.

Note that $\alpha < 1$ corresponds to \emph{semi-parametric models}. In these cases, the oracle has \emph{generalization property}: with substantially fewer samples than $N$ (i.e., on the order of $N^\alpha$), the oracle is able to put the optimal solution $\vx^*$ on the right side. In its extreme case when $\alpha=0$ (or \emph{parametric models}), whether we classify the optimal solution $\vx^*$ on the correct side only depends on the \emph{absolute} number of samples collected in $S$, and is independent of its size. For example, if the function to be optimized is linear, then with $d+1$ samples, we can completely characterize the property of all $|S|$ samples. 

\textbf{Relation with cutting plane.} Our setting can be regarded as a data-driven extension of cutting plane methods~\citep{loganathan1987convergent, vieira2019cutting, hinder2018cutting} in optimization, in which a cutting plane is found at the current solution to reduce the search space. For example, if $f$ is convex and its gradient $\nabla f(\vx)$ is available, then we can set $S_\good := \{\vx: \nabla f(\vx_0)^\top(\vx - \vx_0) \le 0, \vx \in S_0\}$, since for any $\vx \in S_0 \setminus S_\good$, convexity gives $f(\vx) \ge f(\vx_0) + \nabla f(\vx_0)^\top(\vx - \vx_0) > f(\vx_0)$ and thus $\vx$ is not better than current $\vx_0$. 

However, the cutting plane method relies on certain function properties like convexity. In contrast, learning space partition can leverage knowledge about the function forms, combined with observed samples so far, to better partition the space.

\vspace{-0.1in}
\subsection{Rewards under optimal action sequence} 
\vspace{-0.1in}
We now consider applying the $(\alpha, \eta)$-oracle iteratively for $T$ steps, by drawing $k_t$ samples from $S_{t-1}$ and setting $S_t := S_{\good,t-1}$. We assume a total sample budget $K$, so $\sum_{t=1}^T k_t = K$. Note that $T \le \log_2 N$ since we halve the set size with each iteration.

Now the question is twofold. (1) How can we determine the action sequences $\{k_t\}$ in order to maximize the total reward $r$? (2) Following the optimal action sequences $\{k^*_t\}$, can $r^*$ be better than the baseline $r_\baseline = N^{-1}$? The answer is yes.

\begin{theorem}
\label{thm:reward}
The algorithm yields a reward $r^*$ lower bounded by the following:
\begin{equation}
    r^* \ge r_\baseline \exp\left[\left(\log 2 - \frac{\eta N^\alpha \phi(\alpha, T)}{K}\right)T\right] 
\end{equation}
where $r_\baseline := N^{-1}$ and $\phi(\alpha, T) := (1 - 2^{-\alpha T}) / (1 - 2^{-\alpha})$.
\end{theorem}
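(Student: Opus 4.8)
The plan is to decompose the reward into its two factors and bound each separately. Writing $r = \frac{1}{|S_T|}\,P(\vx^* \in S_T)$, the size factor is immediate: since each oracle call returns a good set with $|S_{\good}| \le |S|/2$, the surviving set satisfies $|S_t| \le N/2^t$, so $1/|S_T| \ge 2^T/N = 2^T r_\baseline$. Because $r^*$ is the reward under the \emph{optimal} action sequence, it suffices to exhibit one concrete budget allocation $\{k_t\}$ with $\sum_t k_t = K$ and lower-bound its reward; the theorem's bound then follows a fortiori. Thus the whole problem reduces to lower-bounding the survival probability $P(\vx^* \in S_T)$.

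Next I would chain the per-step oracle guarantees. Since $S_t \subseteq S_{t-1}$, the events $\{\vx^* \in S_t\}$ are nested, so
\[
P(\vx^* \in S_T) = \prod_{t=1}^T P\!\left(\vx^* \in S_t \mid \vx^* \in S_{t-1}\right) \ge \prod_{t=1}^T \left(1 - e^{-a_t}\right),
\]
where $a_t := k_t/(\eta|S_{t-1}|^\alpha)$. Using $|S_{t-1}| \le N/2^{t-1}$ yields the clean lower bound $a_t \ge k_t\,2^{\alpha(t-1)}/(\eta N^\alpha)$; note this is the same maximal-halving regime that was binding for the size factor, so the two bounds are mutually consistent. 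Taking logarithms turns the product into a sum, $-\log P(\vx^* \in S_T) \le \sum_{t=1}^T -\log(1 - e^{-a_t})$.

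The crux is to linearize the awkward term $-\log(1 - e^{-a})$. I would apply $-\log(1-x) \le x/(1-x)$ with $x = e^{-a}$, followed by $e^{a}-1 \ge a$, to obtain the single clean inequality $-\log(1 - e^{-a}) \le 1/a$. This collapses the log-sum into a sum of reciprocals $\sum_{t=1}^T 1/a_t$, at which point the allocation becomes tractable. Choosing the uniform allocation $k_t = K/T$ makes the sum telescope exactly:
\[
-\log P(\vx^* \in S_T) \le \sum_{t=1}^T \frac{\eta N^\alpha}{k_t\,2^{\alpha(t-1)}} = \frac{\eta N^\alpha T}{K}\sum_{t=1}^T 2^{-\alpha(t-1)} = \frac{\eta N^\alpha T}{K}\,\phi(\alpha,T),
\]
using $\sum_{t=1}^T 2^{-\alpha(t-1)} = (1-2^{-\alpha T})/(1-2^{-\alpha}) = \phi(\alpha,T)$. (Alternatively one can solve the relaxed allocation problem exactly --- its minimizer is $k_t \propto 2^{-\alpha(t-1)/2}$ --- and then bound by Cauchy--Schwarz, $\big(\sum_t 2^{-\alpha(t-1)/2}\big)^2 \le T\,\phi(\alpha,T)$, recovering the same factor.) Exponentiating gives $P(\vx^* \in S_T) \ge \exp\!\big(-\eta N^\alpha T\phi(\alpha,T)/K\big)$, and multiplying by $2^T r_\baseline$ produces $r^* \ge r_\baseline \exp\!\big[(\log 2 - \eta N^\alpha \phi(\alpha,T)/K)T\big]$, as claimed.

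The step I expect to be the main obstacle is the linearization $-\log(1-e^{-a}) \le 1/a$: without replacing the survival product by a sum of reciprocals, optimizing the budget split across the geometrically scaled per-step difficulties $a_t$ is unwieldy, and it is precisely this inequality that both makes the allocation analysis elementary and produces the geometric sum $\phi(\alpha,T)$. A secondary point requiring care is confirming that maximal halving $|S_t| = N/2^t$ is simultaneously binding for the volume factor $1/|S_T|$ and for each difficulty $a_t$, so that bounding both under the same regime is legitimate rather than optimistic.
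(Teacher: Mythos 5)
Your proof is correct, and it reaches the theorem by a genuinely different route than the paper. The shared skeleton is the same (it is essentially forced): bound $1/|S_T| \ge 2^T/N$ from the halving guarantee, and chain the nested events to get $P(\vx^* \in S_T) \ge \prod_{t=1}^T \bigl(1 - e^{-k_t/\eta|S_{t-1}|^\alpha}\bigr)$. Where you diverge is in handling the budget allocation: the paper solves the constrained maximization of $\sum_t \log(1-e^{-z_t})$ exactly via a Lagrangian (its Lemma~\ref{lemma:optimal-action}), obtaining $\max \log P \ge -\sum_t \log(1+\lambda w_t)$, and then needs a separate monotonicity/bounding lemma for $g^{-1}$ (its Lemma~\ref{lemma:g-property}) to establish $\lambda = g^{-1}(K) \le T/K$ before applying $\log(1+x)\le x$. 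You sidestep the optimization entirely: since $r^*$ is the value at the optimum, any feasible allocation certifies a lower bound, and the uniform choice $k_t = K/T$ combined with the elementary linearization $-\log(1-e^{-a}) \le 1/(e^a-1) \le 1/a$ collapses everything to the geometric sum $\sum_t 2^{-\alpha(t-1)} = \phi(\alpha,T)$. The two derivations land on \emph{exactly} the same final bound, so nothing is lost quantitatively; your version replaces two auxiliary lemmas with one two-line inequality and is self-contained. What the paper's heavier machinery buys is the explicit optimal action sequence $k^*_t = w_t\log(1+1/(\lambda w_t))$ with two-sided bounds on $\lambda$, which it exploits in a remark to argue that constant per-stage sampling is approximately optimal in the semi-parametric regime --- a conclusion your proof implicitly corroborates, since the uniform allocation already attains the same bound. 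Two small points of hygiene, neither fatal and both shared with the paper: integrality of the $k_t$ is glossed over in both arguments, and your closing worry about whether maximal slow shrinkage $|S_t| = N/2^t$ can be assumed simultaneously for both factors is moot --- each of the two inequalities holds deterministically for every realization of the set sizes, so multiplying the two valid lower bounds requires no common binding regime.
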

\textbf{Remarks.} Following Theorem~\ref{thm:reward}, a key condition to make $r^* > r_\baseline$ is to ensure $\log 2 > \frac{\eta N^\alpha \phi(\alpha, T)}{K}$. This holds if when $\frac{\eta N^\alpha \phi(\alpha, T)}{K} \rightarrow 0$. Note that since $T \le \log_2 N$, the final reward $r^*$ is upper bounded by 1 (rather than goes to $+\infty$). We consider some common practical scenarios below.

\emph{Non-parametric models ($\alpha=1$).} In this case, $\phi(\alpha, T) \le 2$  and the condition becomes $\frac{1}{2}\log 2 > \eta N / K$. This happens when the total sample budget $K = \Theta(N)$, i.e., on the same order of $N$, which means that the partitioning algorithm obtains little advantage over exhaustive search.

\emph{Semi-parametric models ($\alpha<1$).} In this case, $\phi(\alpha, T) \le 1 / (1 - 2^{-\alpha})$ and the condition becomes $(1 - 2^{-\alpha})\log 2 > \eta N^\alpha / K$. This happens when the total sample budget $K = \Theta(N^\alpha)$. In this case, we could use many fewer samples than exhaustive search to achieve better reward, thanks to the generalization property of the oracle. 

\emph{Parametric models ($\alpha=0$).} Now $\phi(\alpha, T) = T$ and the condition becomes $\log 2 > \frac{\eta T}{K}$. Since $T \le \log_2 N$, the total sample budget can be set to be $K = \Theta(\log N)$. Intuitively, the algorithm performs iterative halving (or binary search) to narrow down the search toward promising regions.

\label{sec:method}
\subsection{Extension to Multi-Objective Optimization}
\vspace{-0.1in}
Given our understanding of space partitioning, we now extend this idea to MOO. Intuitively, we want ``good'' regions to be always picked by the space partition. For SOO, it is possible since the optimal solution is a single point. How about MOO?

Unlike SOO, in MOO we aim for a continuous region, the \emph{Pareto optimal set} $\Omega_P := \{\vx: \nexists \vx'\neq \vx: \ \vf(\vx') \prec \vf(\vx)\}$. A key variable is the regularity of $\Omega_P$: if it is highly non-regular and not captured by a simple partition boundary (ideally a parametric boundary), then learning a space partition would be difficult. Interestingly, the shape of $\Omega_P$ can be characterized for quadratic objectives:
\begin{observation}
\label{obs:isotropic}
If all $f_j$ are isotropic, $f_j(\vx) = \|\vx - \vc_j\|^2_2$, then $\Omega_P = \mathrm{ConvexHull}(\vc_1, \ldots, \vc_q)$.
\end{observation}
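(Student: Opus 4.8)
The plan is to prove the two inclusions $\mathrm{ConvexHull}(\vc_1,\dots,\vc_q) \subseteq \Omega_P$ and $\Omega_P \subseteq \mathrm{ConvexHull}(\vc_1,\dots,\vc_q)$ separately, working over the unconstrained domain $\Omega = \R^d$ (or any convex $\Omega$ containing the hull). Write $C := \mathrm{ConvexHull}(\vc_1,\dots,\vc_q)$. The one computation I would do up front is the minimizer of a weighted scalarization: for weights $w$ on the probability simplex, $g_w(\vx) := \sum_j w_j \|\vx - \vc_j\|_2^2 = \|\vx\|_2^2 - 2\big(\sum_j w_j \vc_j\big)^\top \vx + \mathrm{const}$, which is strictly convex with unique minimizer $\vx^\star(w) = \sum_j w_j \vc_j$. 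As $w$ ranges over the simplex, $\vx^\star(w)$ ranges over exactly $C$.

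For the inclusion $C \subseteq \Omega_P$, I would take any $\vx = \sum_j w_j \vc_j \in C$ and use the fact that a \emph{unique} minimizer of a nonnegatively-weighted sum of the objectives is Pareto optimal. Concretely, $\vx$ is the unique minimizer of $g_w$; if some $\vx' \ne \vx$ dominated $\vx$ (i.e. $f_i(\vx') \le f_i(\vx)$ for all $i$ with one strict), then $g_w(\vx') \le g_w(\vx)$, contradicting uniqueness. Strict convexity of $g_w$ (guaranteed since $\sum_j w_j = 1$) is what makes this go through even for boundary points of $C$ where several weights vanish, e.g. the vertices $\vc_j$ themselves.

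The substantive direction is $\Omega_P \subseteq C$, which I would argue by contradiction using the Euclidean projection onto the convex hull. Suppose $\vx \in \Omega_P$ but $\vx \notin C$; let $\vp := \mathrm{proj}_C(\vx)$, so $\vp \ne \vx$ and the projection inequality gives $(\vx - \vp)^\top(\vy - \vp) \le 0$ for every $\vy \in C$, in particular for every center $\vc_j$. Stepping from $\vx$ toward $\vp$ along $\vx_t := \vx + t(\vp - \vx)$, a direct expansion gives $f_j(\vx_t) - f_j(\vx) = 2t\,(\vp - \vx)^\top(\vx - \vc_j) + t^2\|\vp - \vx\|_2^2$, whose $t$-derivative at $0$ equals $2(\vp - \vx)^\top(\vx - \vc_j) = -2\|\vp - \vx\|_2^2 + 2(\vx - \vp)^\top(\vc_j - \vp)$. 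The first term is strictly negative and the second is $\le 0$ by the projection inequality, so every objective has a strictly negative initial slope; since there are only $q$ objectives I can pick one common $t > 0$ small enough that $f_j(\vx_t) < f_j(\vx)$ simultaneously for all $j$. Then $\vx_t$ dominates $\vx$, contradicting $\vx \in \Omega_P$, so $\vx \in C$.

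The main obstacle is making direction 2 fully rigorous rather than leaning on a black-box scalarization theorem: the key insight is that the projection direction $\vp - \vx$ is a common descent direction for \emph{all} objectives at once, which is precisely where the projection inequality $(\vx - \vp)^\top(\vc_j - \vp)\le 0$ earns its keep. A secondary subtlety worth stating explicitly is the domain assumption ($\vx_t$ must stay feasible, which holds for $\Omega = \R^d$ or any convex $\Omega \supseteq C$) together with the strict-convexity handling of $\partial C$ in direction 1. An equivalent, slightly slicker route for direction 2 is to invoke the convex-scalarization theorem (every Pareto point of a convex MOO minimizes some nonnegatively-weighted sum) and combine it with the up-front identity $\vx^\star(w) \in C$; I prefer the elementary projection argument since it is self-contained.
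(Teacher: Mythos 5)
Your proof is correct, and while your first inclusion coincides with the paper's argument, your second inclusion takes a genuinely different and more self-contained route. The paper's proof is a pure stationarity computation: it asserts the scalarization characterization $\Omega_P = \{\vx : \exists \mu \in \Delta,\ \nabla_\vx \sum_j \mu_j f_j(\vx) = 0\}$ with $\Delta$ the probability simplex (written there as if it were a definition, which quietly embeds the nontrivial ``every Pareto point is a weighted minimizer'' direction, a Geoffrion-type theorem for convex objectives), and then solves $\sum_j \mu_j (\vx - \vc_j) = 0$ to get $\vx = \sum_j \mu_j \vc_j$, identifying the stationary set with the hull in one step. Your direction 1 is the easy half of that scalarization argument, sharpened by the strict-convexity/uniqueness point that correctly handles boundary weights (e.g.\ the vertices $\vc_j$), a detail the paper glosses over since a dominating point could tie the weighted sum when the strict improvement occurs in a zero-weight coordinate. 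Your direction 2 replaces the black-box half entirely with the elementary projection argument: the obtuse-angle inequality $(\vx - \vp)^\top(\vc_j - \vp) \le 0$ at $\vp := \mathrm{proj}_C(\vx)$ makes $\vp - \vx$ a common strict descent direction for all objectives. In fact your own expansion gives $f_j(\vx_t) - f_j(\vx) \le t(t-2)\|\vp - \vx\|_2^2 < 0$ for all $t \in (0,2)$, so you may simply take $t=1$ and conclude that $\vp$ itself strictly dominates $\vx$ in every objective, eliminating the ``choose a common small $t$'' bookkeeping. The trade-off: the paper's stationarity identity is two lines modulo the unproved characterization and transfers directly to the anisotropic setting of Observation 2 (where the same identity $\sum_j \mu_j H_j(\vx - \vc_j) = 0$ is reused), whereas your argument is fully rigorous as stated but exploits the isotropic geometry --- the Euclidean projection inequality controls $(\vp - \vx)^\top(\vx - \vc_j)$ only because $-\nabla f_j$ points from $\vx$ toward $\vc_j$, which fails for general positive definite $H_j$.
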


\begin{observation}
\label{obs:cutting-planes}
If $M = 2$ and $f_j(\vx) = (\vx - \vc_j)^\top H_j (\vx-\vc_j)$ where $H_j$ are positive definite symmetric matrices, then there exists $\vw_1 := H_2(\vc_2 - \vc_1)$ and $\vw_2:= H_1(\vc_1-\vc_2)$, so that for any $\vx \in \Omega_P$, $\vw_1^\top(\vx - \vc_1) \ge 0$ and $\vw_2^\top(\vx - \vc_2) \ge 0$. 
\end{observation}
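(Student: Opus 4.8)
The plan is to reduce Pareto optimality to a first-order gradient condition and then finish with a short linear-algebra computation. Each $f_j(\vx) = (\vx - \vc_j)^\top H_j(\vx - \vc_j)$ is strictly convex (since $H_j$ is positive definite), with unique minimizer $\vc_j$ and gradient $\nabla f_j(\vx) = 2H_j(\vx - \vc_j)$. I would first establish the standard necessary condition for a convex bi-objective problem: if $\vx \in \Omega_P$, then there exist multipliers $\lambda_1, \lambda_2 \ge 0$, not both zero, with $\lambda_1 \nabla f_1(\vx) + \lambda_2 \nabla f_2(\vx) = \vzero$. The argument is by contraposition via a theorem of the alternative (Gordan): if no such multipliers exist, then $\vzero$ lies outside the convex cone generated by $\nabla f_1(\vx)$ and $\nabla f_2(\vx)$, so there is a direction $\vd$ with $\nabla f_1(\vx)^\top \vd < 0$ and $\nabla f_2(\vx)^\top \vd < 0$; a small step along $\vd$ strictly decreases both objectives and hence yields a point dominating $\vx$, contradicting $\vx \in \Omega_P$.

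Next I would dispose of the two degenerate endpoints. If $\lambda_2 = 0$ then $\lambda_1 > 0$ forces $\nabla f_1(\vx) = \vzero$, i.e. $\vx = \vc_1$ by invertibility of $H_1$; symmetrically $\lambda_1 = 0$ gives $\vx = \vc_2$. At $\vx = \vc_1$ the first inequality is the equality $\vw_1^\top(\vc_1-\vc_1) = 0$, while the second reads $(\vc_1-\vc_2)^\top H_1(\vc_1-\vc_2) \ge 0$, which holds since $H_1$ is positive definite; the case $\vx=\vc_2$ is symmetric. (One can also check directly that $\vc_1, \vc_2 \in \Omega_P$, since $f_j(\vx') > 0 = f_j(\vc_j)$ for every $\vx' \ne \vc_j$, so neither center can be dominated.)

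The main case is $\lambda_1, \lambda_2 > 0$. Writing $\mu := \lambda_2/\lambda_1 > 0$, stationarity becomes $H_1(\vx - \vc_1) = -\mu H_2(\vx - \vc_2)$, which I solve for $\vx$ using that $H_1 + \mu H_2$ is positive definite, hence invertible:
\begin{equation}
\vx - \vc_1 = \mu (H_1 + \mu H_2)^{-1} H_2(\vc_2 - \vc_1), \qquad \vx - \vc_2 = (H_1 + \mu H_2)^{-1} H_1(\vc_1 - \vc_2). \nonumber
\end{equation}
Substituting the first into $\vw_1^\top(\vx - \vc_1)$ with $\vw_1 = H_2(\vc_2 - \vc_1)$ yields $\mu\, \vu^\top (H_1 + \mu H_2)^{-1} \vu$ where $\vu := H_2(\vc_2 - \vc_1)$, and substituting the second into $\vw_2^\top(\vx - \vc_2)$ yields $\vv^\top (H_1 + \mu H_2)^{-1}\vv$ where $\vv := H_1(\vc_1 - \vc_2)$. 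Both expressions are nonnegative because $(H_1 + \mu H_2)^{-1}$ is positive definite and $\mu > 0$, which is exactly the claim.

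The step I expect to be the main obstacle is the first one: justifying cleanly that Pareto optimality implies gradient alignment. The geometric picture (no common strict descent direction) is intuitive, but making the separating-hyperplane / cone argument rigorous, and correctly matching the strict-dominance definition of $\Omega_P$ against the weak form delivered by the alternative theorem, takes care. Once that first-order condition is secured, the remainder is purely mechanical and rests only on the positive definiteness of $H_1 + \mu H_2$.
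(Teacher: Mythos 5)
Your proof is correct, and its computational core is the same as the paper's: both solve the stationarity equation $\lambda_1 H_1(\vx-\vc_1)+\lambda_2 H_2(\vx-\vc_2)=\vzero$ for the displacement of $\vx$ from a center and conclude by positive definiteness of the inverse of a positive combination of $H_1,H_2$ (the paper writes $\vx=\vc_2+M^{-1}\mu_1 H_1(\vc_1-\vc_2)$ with $M:=\mu_1H_1+\mu_2H_2$ and uses that $H_1M^{-1}H_1$ is PSD; your substitution yielding $\mu\,\vu^\top(H_1+\mu H_2)^{-1}\vu$ and $\vv^\top(H_1+\mu H_2)^{-1}\vv$ is the same algebra up to normalization). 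The genuine difference is upstream, and it is exactly the step you flagged as the main obstacle: the paper never derives the first-order condition from the dominance definition of $\Omega_P$. In the proof of Observation 1 it simply \emph{defines} $\Omega_P$ as the set of points where some convex combination $\mu\in\Delta$ makes $\nabla_\vx J(\vx;\mu)=\vzero$, and the proof of Observation 2 inherits that scalarization characterization without argument. You supply the missing justification via a Gordan-type theorem of the alternative: if no nonnegative nontrivial multipliers exist, there is a direction $\vd$ strictly decreasing both objectives, and a small step along $\vd$ produces a point dominating $\vx$ — rigorous here since both objectives are smooth. Your explicit treatment of the endpoint cases $\lambda_1=0$ or $\lambda_2=0$ (forcing $\vx=\vc_1$ or $\vx=\vc_2$, where the inequalities hold by positive definiteness) is also required by your ratio normalization $\mu=\lambda_2/\lambda_1$, whereas the paper's simplex parametrization absorbs the endpoints uniformly because any $\mu\in\Delta$ keeps $\mu_1H_1+\mu_2H_2$ positive definite. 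In short: the paper's route buys brevity and uniformity with Observation 1; yours buys a self-contained proof that dominance-based Pareto optimality implies the stationarity condition, which is the only non-mechanical step in the whole argument.
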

In both cases, $\Omega_P$ can be separated from non-Pareto regions $\Omega\backslash \Omega_P$ via a linear hyperplane. Empirically, $\Omega_P$ only occupies a small region of the entire search space (Sec.~\ref{sec:procedure}), and quickly focusing samples on the promising regions is critical for high sample efficiency. 

In the general case, characterizing $\Omega_P$ is analytically hard and requires domain knowledge about the objectives~\citep{QI_res2}. However, for MOO algorithms in practice, knowing that $\Omega_P$ can be separated from $\Omega\backslash\Omega_P$ via simple decision planes is already useful: we could learn such decision planes given previous data that are already collected, and sample further in promising regions.

\vspace{-0.1in}
\section{\ours{}: Latent Action Multi-objective Optimization}
\vspace{-0.1in}
\label{sec:procedure}
In Sec.~\ref{sec:theoretical-understanding}, for convenience, we only analyze a greedy approach, which makes decisions on space partitions and never revises them afterwards. While this greedy approach indeed works (as shown in Sec.~\ref{sec:ablation_hyper_params}), an early incorrect partition could easily rule out regions that turn out to be good but weren't identified with few samples. 
In practice, we want to keep the decision softer: while \emph{exploiting} the promising region, we also \emph{explore} regions that are currently believed to be sub-optimal given limited samples. It is possible that these regions turn out to contain good solutions when more samples are available, and the oracle can then make a different partition.  

To balance the trade-off between exploration and exploitation to cope with the generalization error of the learned classifier, we leverage Monte Carlo Tree Search (MCTS)~\citep{mcts} and propose our algorithm \ours{}. As shown in Alg.~\ref{alg:lamoo}, \ours{} has four steps: (1) learn to partition the search space given previous observed data points $D_{t}$, which are collected \{$\mathbf{x}_i$, f($\mathbf{x}_i$)\} from iterations $0$ to $t$. (2) With this information, we partition the region into promising and non-promising regions, and learn a classifier $h(\cdot)$ to separate them. (3) We select the region to sample from, based on the UCB value of each node. (4) We sample selected regions to obtain future data points $D_{t+1}$.

\algdef{SE}[SUBALG]{Indent}{EndIndent}{}{\algorithmicend\ }%
\algtext*{Indent}
\algtext*{EndIndent}


\def\root{\mathrm{root}}

\begin{algorithm*}[t]
    \small
	\caption{\ours{} Pseudocode.}
	\begin{algorithmic}[1]
	\State {\bfseries Inputs:} Initial $D_0$ from uniform sampling, sample budget $T$.
	\For{$t = 0, \dots, T$}
	\State Set $\mathcal{L} \leftarrow \{\Omega_\root\}$ (collections of regions to be split). 
	\While{$\mathcal{L} \neq \emptyset$}
	\State $\Omega_j \leftarrow \mathrm{pop\_first\_element}(\mathcal{L}),\ \  D_{t,j} \leftarrow D_t \cap \Omega_j, \ \ n_{t,j} \leftarrow |D_{t,j}|$. 
	\State Compute dominance number $o_{t,j}$ of $D_{t,j}$ using Eqn.~\ref{eq:dominance} and train SVM model $h(\cdot)$.
	\State \textbf{If} $(D_{t,j}, o_{t,j})$ is splittable by SVM, \textbf{then} $\mathcal{L} \leftarrow \mathcal{L} \cup \mathrm{Partition}(\Omega_j, {h(\cdot)})$.
	\EndWhile
	\For{$k = \root$, $k$ is not leaf node}
	    \State $D_{t,k} \leftarrow D_t \cap \Omega_k, \ \ v_{t,k} \leftarrow \mathrm{HyperVolume}(D_{t,k}),\ \ n_{t,k} \leftarrow$ $|D_{t,k}|$.
	    \State $k \leftarrow \displaystyle\arg\max_{c\ \in \ \mathrm{children}(k)} \mathrm{UCB}_{t,c}$, where $\mathrm{UCB}_{t,c} := v_{t,c} + 2 C_p \sqrt{\frac{2\log(n_{t,k}}{n_{t,c}}}$
	\EndFor
	\State $D_{t+1} \leftarrow$ $D_{t} \cup D_{\mathrm{new}}$, where $D_{\mathrm{new}}$ is drawn from $\Omega_{k}$ based on qEHVI or CMA-ES.  
    \EndFor

  \end{algorithmic}
\label{alg:lamoo}
\end{algorithm*}

\begin{figure}[ht]
\centering 
\includegraphics[width=1.02\columnwidth]{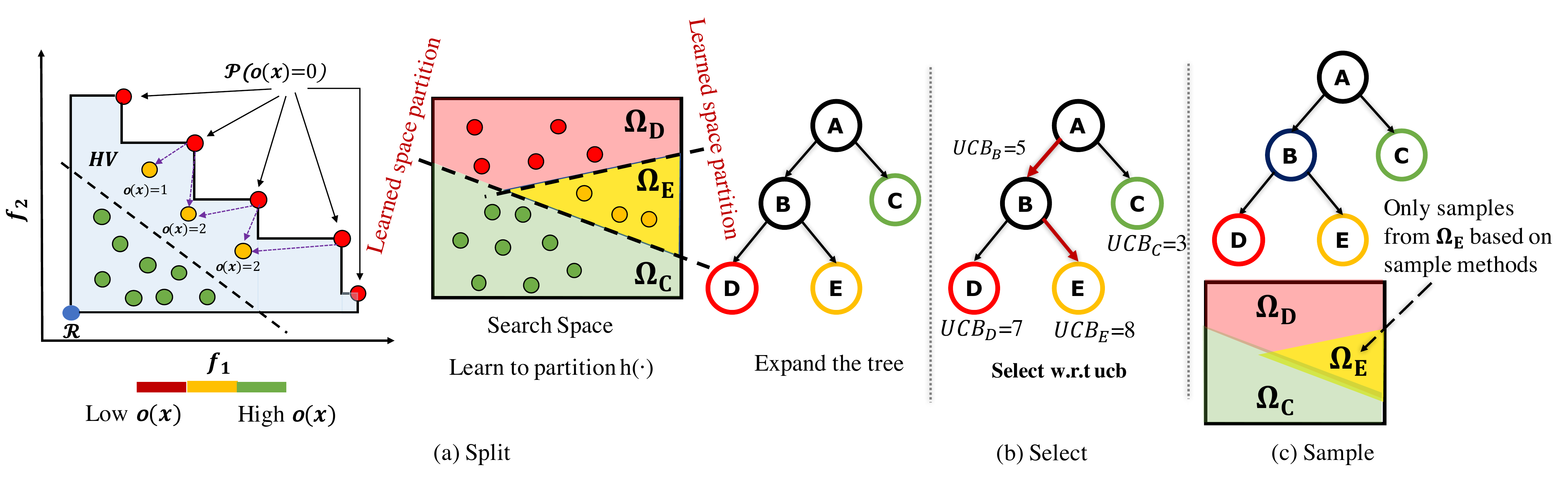}
\vspace{-0.1in}
 \caption{\small (a) The leaf nodes D and E that correspond to the non-splittable space $\Omega_{D}$ and $\Omega_{E}$. (b). The node selection procedure based on the UCB value. (c). The new samples generation from the selected space $\Omega_{E}$ for bayesian optimization.}
\label{fig:workflow}
\end{figure}

\textbf{Learning Space Partitions}. We construct the partition oracle using the \emph{dominance number}. Let $D_t$ be the collected samples up to iteration $t$ and $D_{t,j}:= D_t \cap \Omega_j$ be the samples within the region $\Omega_j$ we want to partition. For each sample $\vx \in D_{t,j}$, its dominance number $o_{t,j}(\vx)$ at iteration $t$ is defined as the number of samples in $\Omega_j$ that dominate $\vx$ (here $\mathbb{I}[\cdot]$ is the indicator function): 
\begin{equation}
o_{t,j}(\vx) := \sum_{\vx_i \in D_{t,j}}\mathbb{I}[\vx \prec_\vf \vx_i,\ \vx \neq \vx_i]
\label{eq:dominance}
\end{equation}
While naive computation requires $O(|D_{t,j}|^2)$ operations, we use Maxima Set~\citep{maximaset} which runs in $O(|D_{t,j}|\log |D_{t,j}|)$. For $\vx \in \Omega_P$, $o(\vx)=0$. 

For each $D_{t,j}$, we then get good (small $o(\vx)$) and bad (large $o(\vx)$) samples by ranking them according to $o(\vx)$. The smallest 50\% are labeled to be positive while others are negative. 

Based on the labeled samples, a classifier (e.g., Support Vector Machine (SVM)) is trained to learn a decision boundary as the latent action. We choose SVM since the classifier needs to be decent in regions with few samples, and has the flexibility of being parametric or non-parametric. 

\label{sec:procedures}
\textbf{Exploration Using UCB}. As shown in Fig.~\ref{fig:workflow}, \ours{} selects the final leaf node by always choosing the child node with larger UCB value. The \emph{UCB value} for a node $j$ is defined as $\mathrm{UCB}_{j} := v_{j} + 2C_{p} \sqrt{2\log n_{\mathrm{parent(j)}}/n_{j}}$, where $n_{j}$ is the number of samples in node $j$, $C_{p}$ is a tunable hyperparameter which controls the degree of exploration, and $v_{j}$ is the hypervolume of the samples in node $j$. The selected leaf corresponds the partitioned region $\Omega_{k}$ as shown in Alg.~\ref{alg:lamoo}.

\textbf{Sampling in Search Region}. 
We use existing algorithms as a sampling strategy in a leaf node, e.g., \qehvi{}~\citep{qehvi}) and CMA-ES~\citep{cma-es}. Therefore, \ours{} can be regarded as a \emph{meta-algorithm}, applicable to any existing SOO/MOO solver to boost its performance.

\underline{\laqehvi{}}. As a multi-objective solver, \qehvi{} finds data points to maximize a parallel version of Expected Hypervolume Improvement (EHVI) via Bayesian Optimization (BO). To incorporate \qehvi{} into LaMOO's sampling step, we confine \qehvi{}'s search space using the tree-structured partition to better search MOO solutions.

\underline{\lacmaes}. \cmaes{} is an evolutionary algorithm (EA) originally designed for single-objective optimization. As a leaf sampler, CMA-ES is used to pick a sample that maximizes the dominance number $o(\vx)$ within the leaf. Since $o(\vx)$ changes over iterations, at iteration $t$, we first update $o_{t'}(\vx)$ of all previous samples at $t'<t$ to $o_t(\vx)$, then use \cmaes{}. Similar to the \qehvi{} case, we constrain our search to be within the leaf region. 

Once a set of new samples $D_{\mathrm{new}}$ is obtained (as well as its multiple function values $\vf(D_{\mathrm{new}})$), we update all partitions along its path and the entire procedure is repeated.

\def\va{\mathbf{a}}
\def\vv{\mathbf{v}}
    
\vspace{-0.1in}
\section{Experiments}
\vspace{-0.1in}

We evaluate the performance of \ours{} in a diverse set of scenarios. This includes synthetic functions, and several real-world MOO problems like neural architecture search, automobile safety design, and molecule discovery. In such real problems, often a bunch of criteria needs to be optimized at the same time. For example, for molecule (drug) discovery, one wants the designed drug to be effective towards the target disease, able to be easily synthesized, and be non-toxic to human body.   
\vspace{-0.1in}
\subsection{Small-scale Problems}
\vspace{-0.1in}
\paragraph{Synthetic Functions.} 
Branin-Currin~\citep{bc_func} is a function with 2-dimensional input and 2 objectives. DTLZ2~\citep{dtlz} is a classical scalable multi-objective problem and is popularly used as a benchmark in the MOO community. We evaluate \ours{} as well as baselines in DTLZ2 with 18 dimensions and 2 objectives, and 12 dimensions and 10 objectives, respectively.

\paragraph{Structural Optimization in Automobile Safety Design (vehicle safety)} is a real-world problem with 5-dimensional input and 3 objectives, including (1) the mass of the vehicle, (2) the collision acceleration in a full-frontal crash, and (3) the toe-board intrusion~\citep{vehicle_safety}. 

\paragraph{Nasbench201} is a public benchmark to evaluate NAS algorithms~\citep{nasbench201}. There are 15625 architectures in Nasbench201, with groundtruth \#FLOPs and accuracy in CIFAR10~\citep{cifar10}. Our goal is to minimize \#FLOPs and maximize accuracy in this search space. We normalized \#FLOPs to range $[-1, 0]$ and accuracy to $[0, 1]$.

\begin{figure}[H]
\centering 
\includegraphics[height=2.5in]{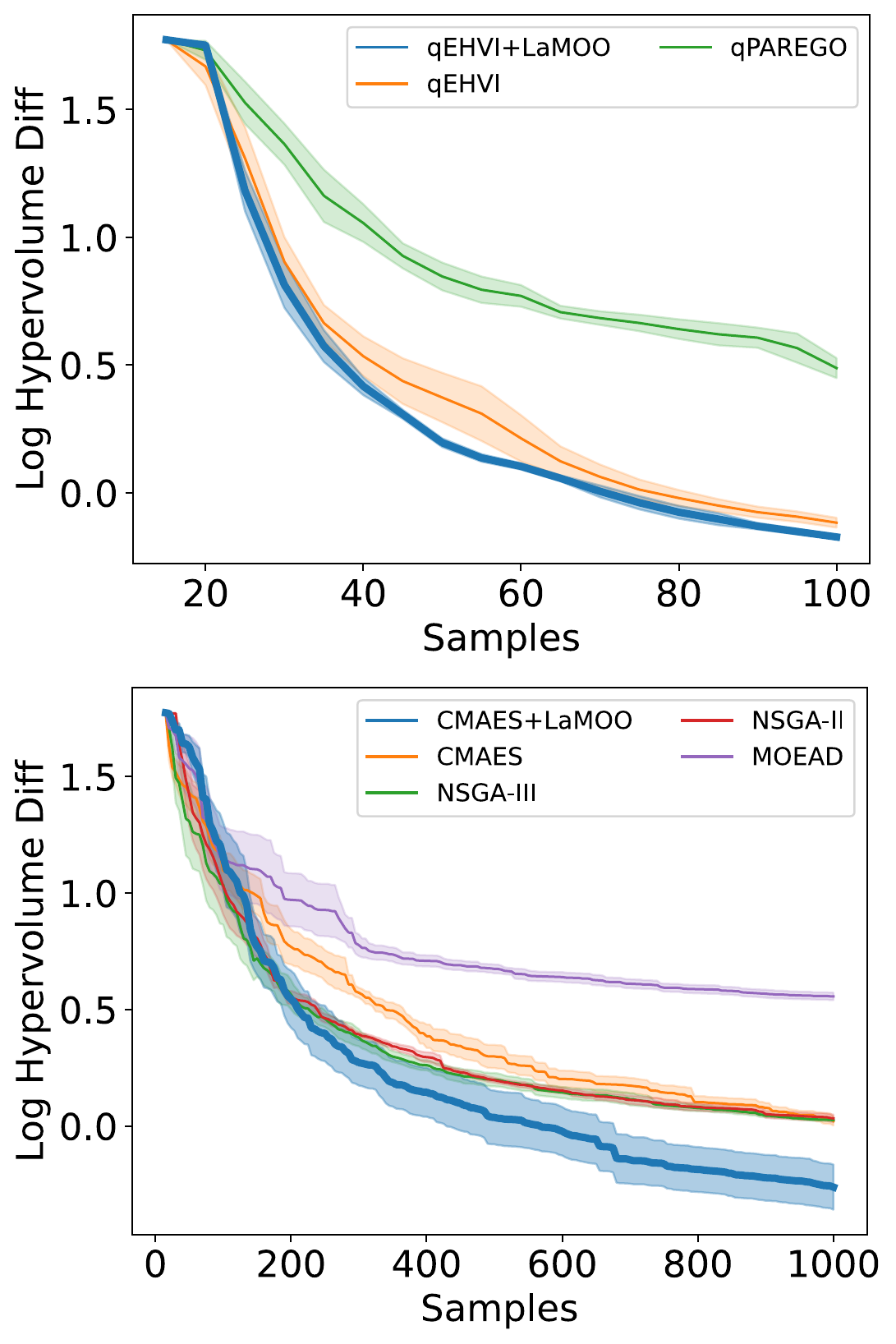}
\includegraphics[height=2.5in]{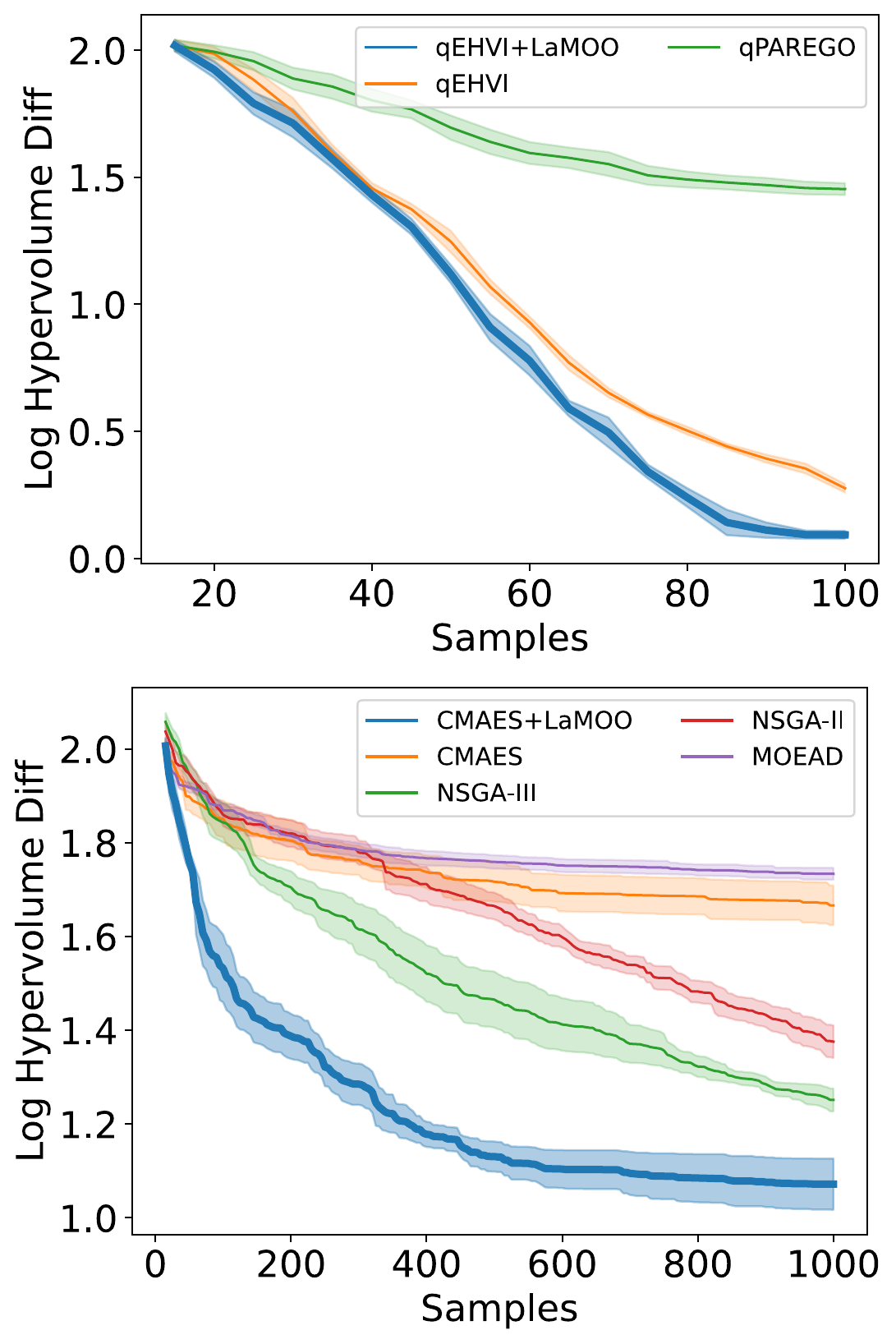}
\includegraphics[height=2.5in]{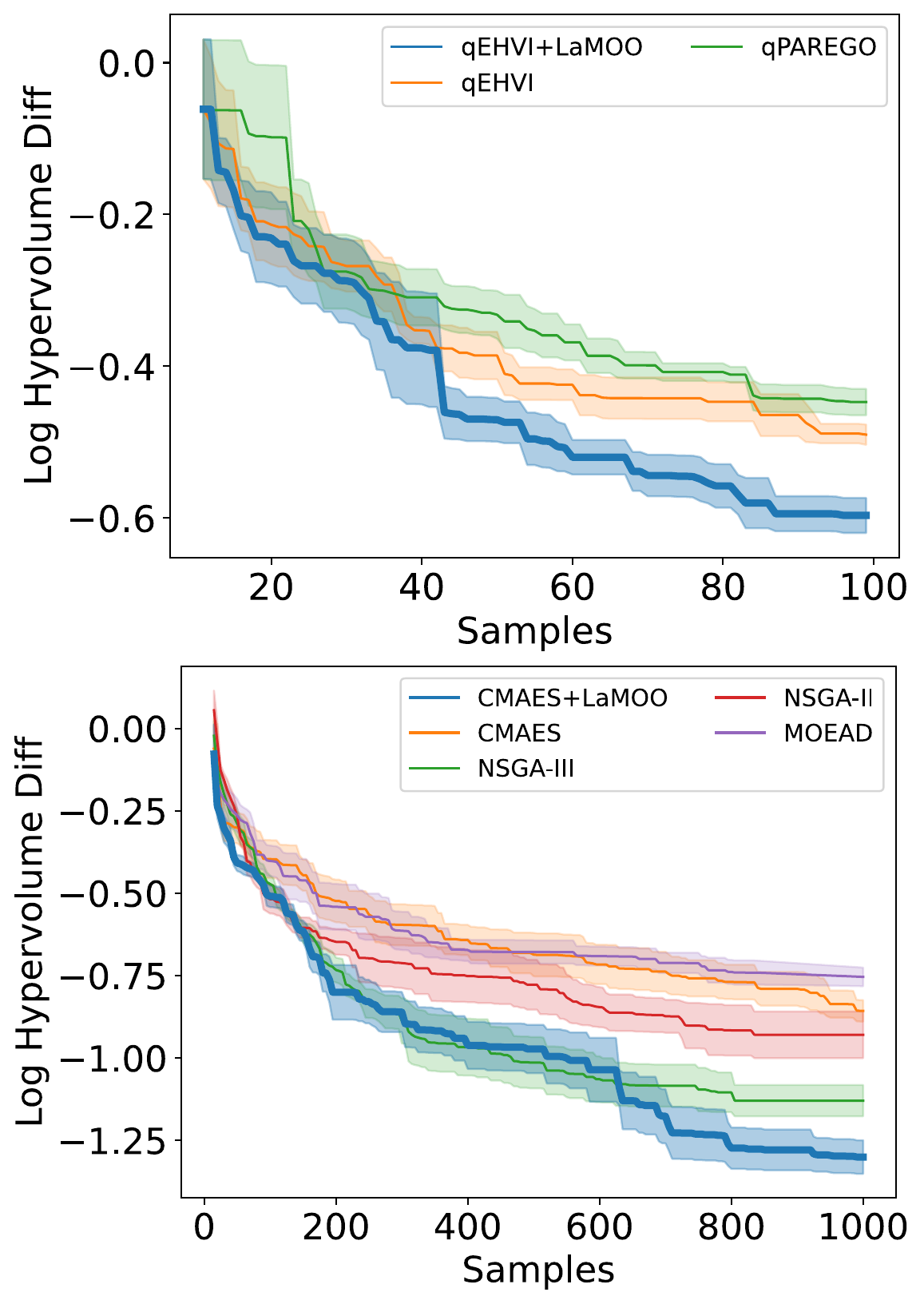}
\vspace{-0.1in}
\caption{\small \textbf{Left}: Branin-Currin with 2 dimensions and 2 objectives. \textbf{Middle}: VehicleSafety with 5 dimensions and 3 objectives. \textbf{Right}: Nasbench201 with 6 dimensions and 2 objectives. We ran each algorithm 7 times (shaded area is $\pm$ std of the mean). \textbf{Top}: Bayesian Optimization w/o \ours{}. \textbf{Bottom}: evolutionary algorithms w/o \ours{}. Note the two algorithm families show very different sample efficiency in MOO tasks.}
\label{fig:small_problems}
\end{figure}

We compare \ours{} with 4 classical evolutionary algorithms (CMA-ES~\citep{cma-es}, MOEA/D~\citep{moead}, NSGA-II~\citep{nsga-ii}, and NSGA-III~\citep{nsgaiii}) and 2 state-of-the-art BO methods (qEHVI~\citep{qehvi} and qParego~\citep{qparego}). 

\textbf{Evaluation Criterion}. we first obtain the maximal hypervolume (either by ground truth or from the estimation of massive sampling), then run each algorithm and compute the log hypervolume difference~\citep{qehvi}:
\begin{equation}
HV_{\mathrm{log\_diff}} := \log(HV_{\mathrm{max}} - HV_{\mathrm{cur}})
\end{equation}
where $HV_{\mathrm{cur}}$ is the hypervolume of current samples obtained by the algorithm with given budget.

\textbf{Result.} As shown in Fig.~\ref{fig:small_problems}, \ours{} with qEHVI outperforms all our BO baselines and \ours{} with CMA-ES outperforms all our EA baselines, in terms of $HV_{\mathrm{log\_diff}}$.

Evolutionary algorithms rely on mutation and crossover of previous samples to generate new ones, and may be trapped into local optima. Thanks to MCTS, LaMOO also considers exploration and greatly improves upon vanilla CMA-ES over three different tasks with 1000/200 samples in small-scale/many objective problems.. In addition, by plugging in BO, \ours{}+qEHVI achieve 225\% sample efficiency compared to other BO algorithms on Nasbench201. This result indicates that for high-dimensional problems (6 in Nasbench201), space partitioning leads to faster optimization. We further analyze very high-dimensional problems on Sec.~\ref{sec:Modules}. For visualization of Pareto frontier by LaMOO+qEHVI, see Fig.~\ref{fig:pareto} in Appendix.

\begin{figure}[t]
\centering 
\includegraphics[height=0.98in]{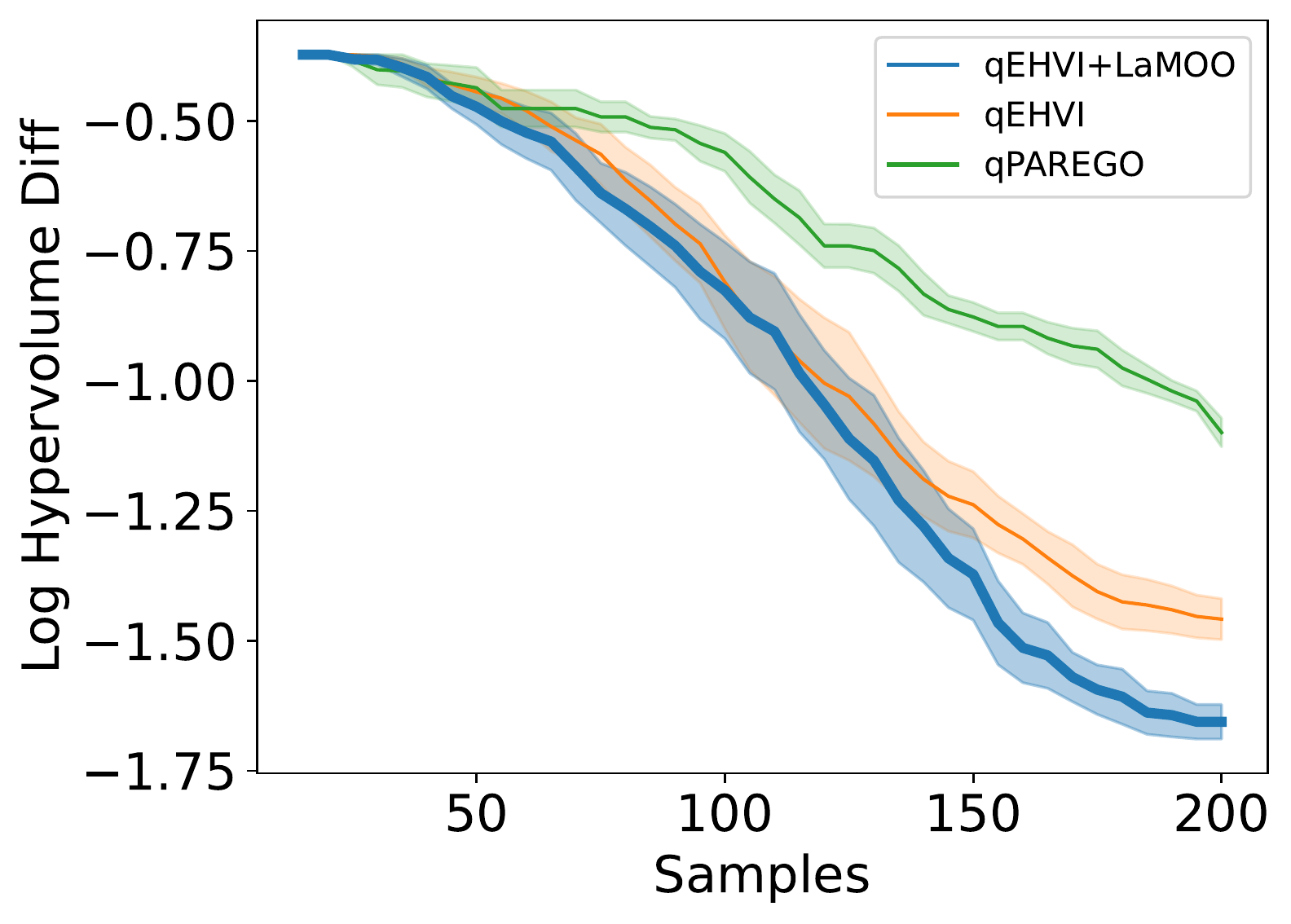}
\includegraphics[height=0.98in]{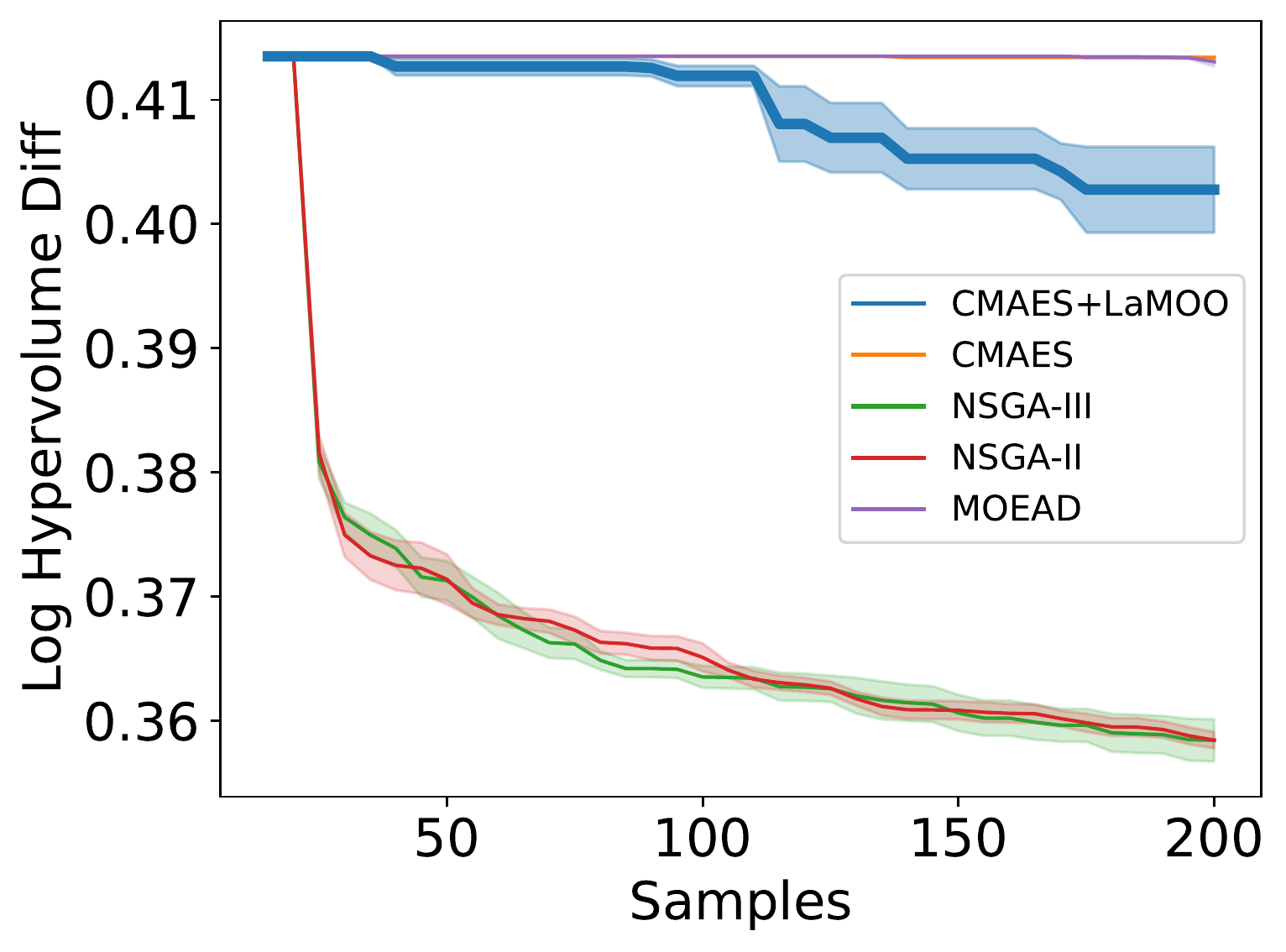}
\includegraphics[height=0.98in]{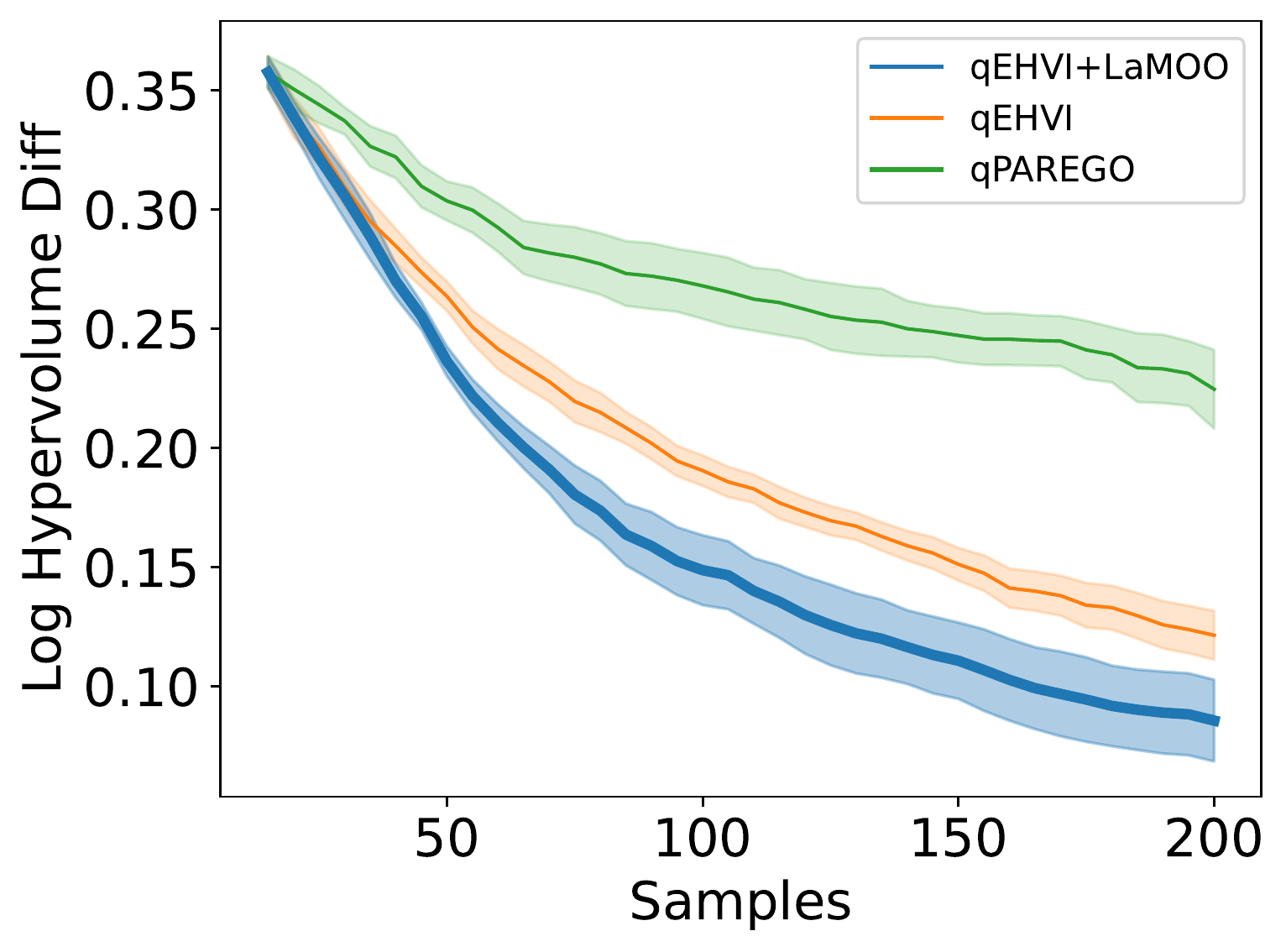}
\includegraphics[height=0.98in]{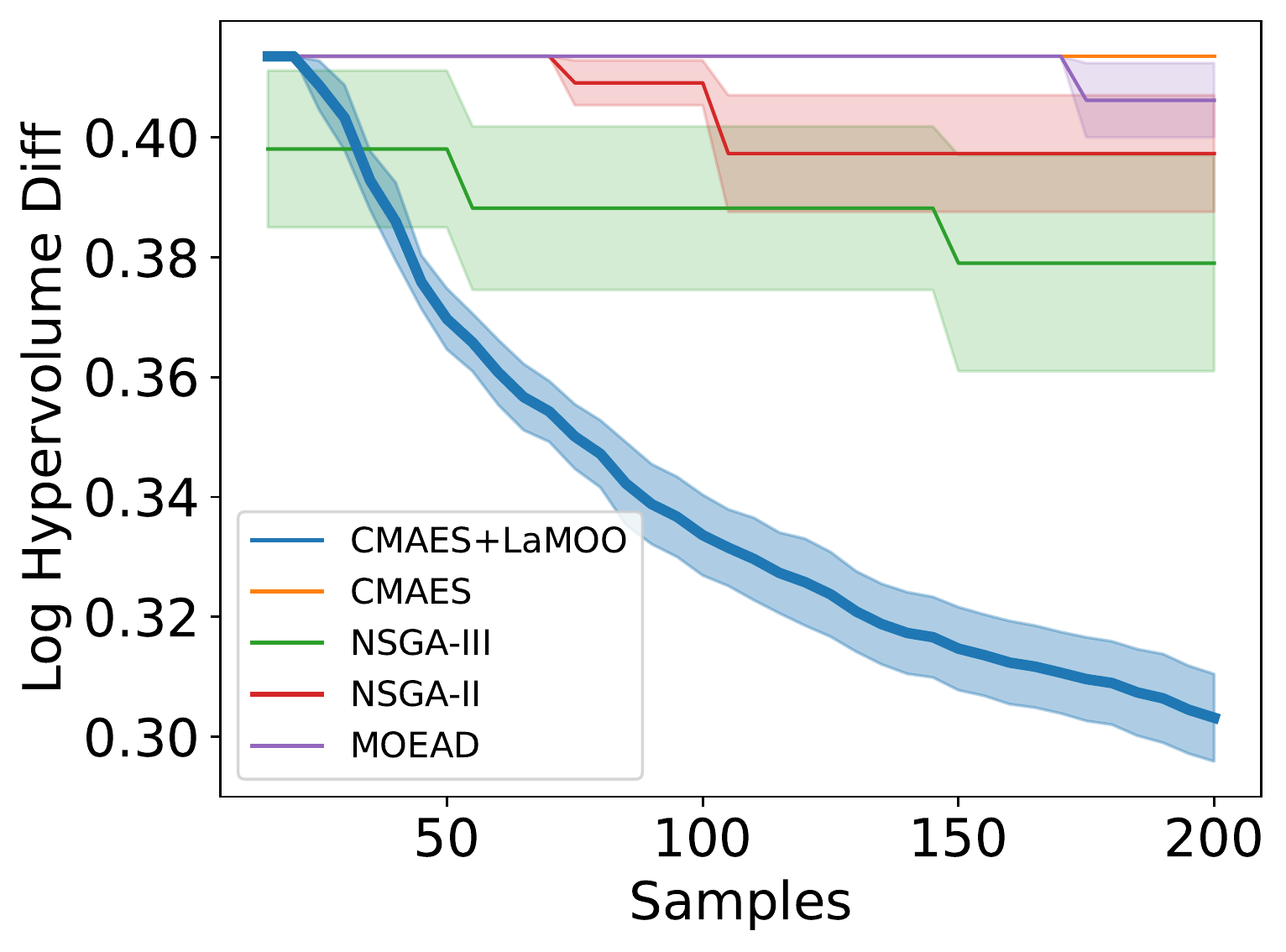}
\vspace{-0.1in}
\caption{\small DTLZ2 with many objectives, We ran each algorithm 7 times (shaded area is $\pm$ std of the mean). From \textbf{left} to \textbf{right}: BO with 2 objectives; EA with 2 objectives; BO with 10 objectives; EA with 10 objectives. } 
\label{fig:DTLZ2}
\end{figure}

\textbf{Optimization of Many Objectives}.
While NSGA-II and NSGA-III perform well in the two-objective problems, all evolutionary-based baselines get stuck in the ten-objective problems. In contrast, LaMOO performs reasonably well. From Fig.~\ref{fig:DTLZ2}, qEHVI+LaMOO shows strong performance in ten objectives. When combined with a CMA-ES, \ours{} help it escape the initial region to focus on a smaller promising region by space partitioning.

\vspace{-0.1in}
\subsection{Multi-Objective Molecule Discovery}
\vspace{-0.1in}
\label{sec:Modules}

\begin{figure}[H]
\centering 
\includegraphics[height=1.33in]{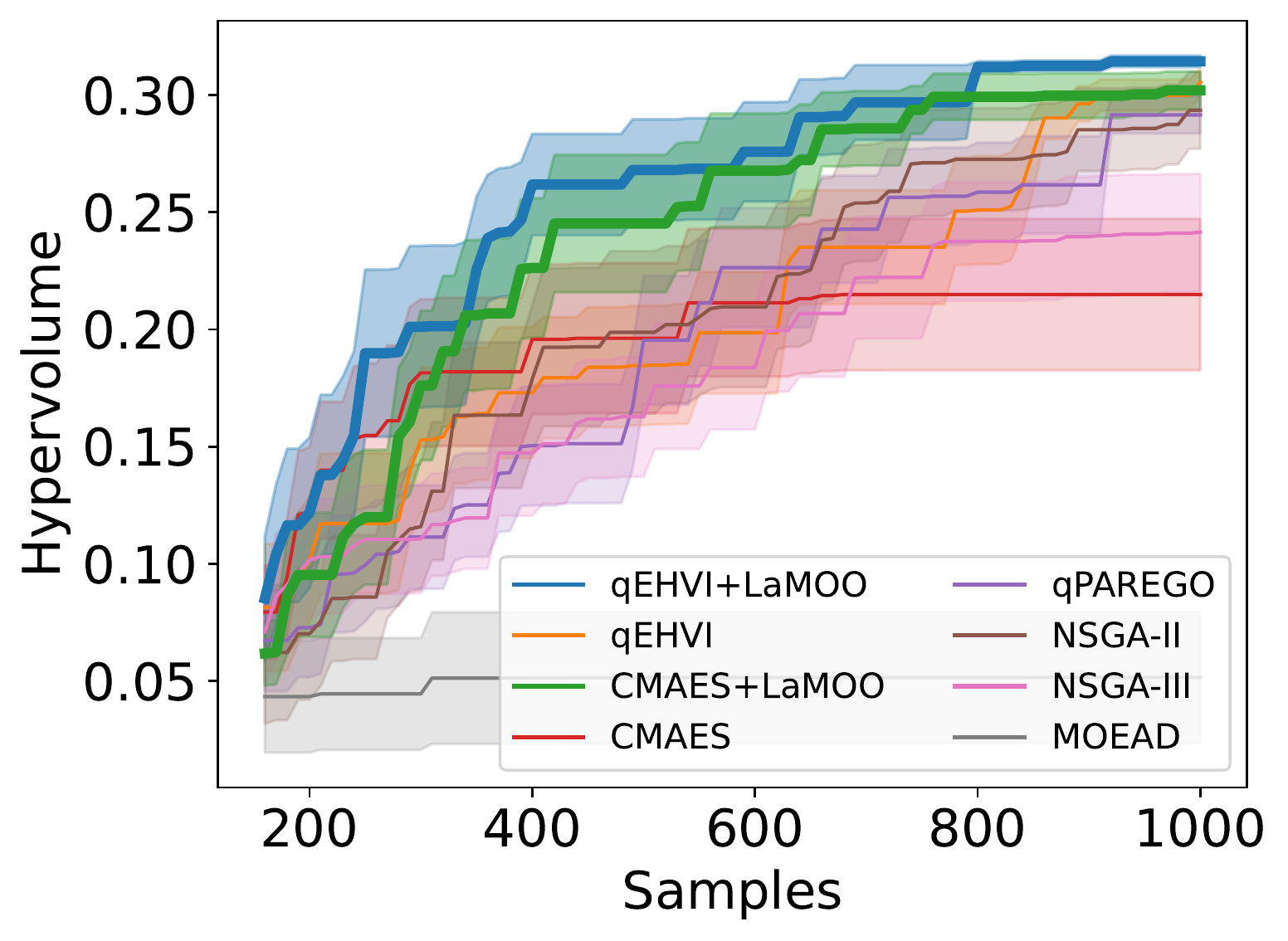}
\includegraphics[height=1.33in]{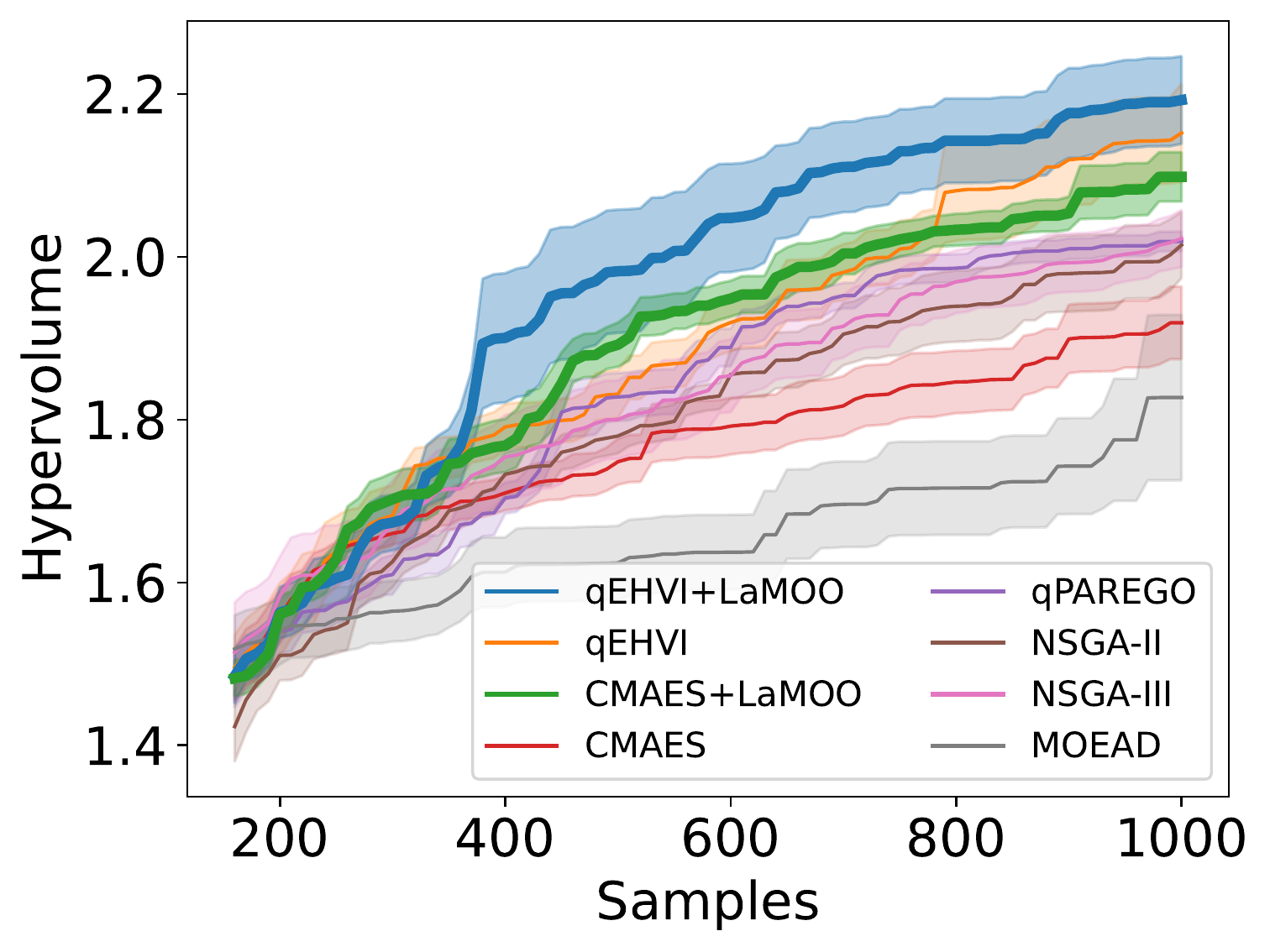}
\includegraphics[height=1.33in]{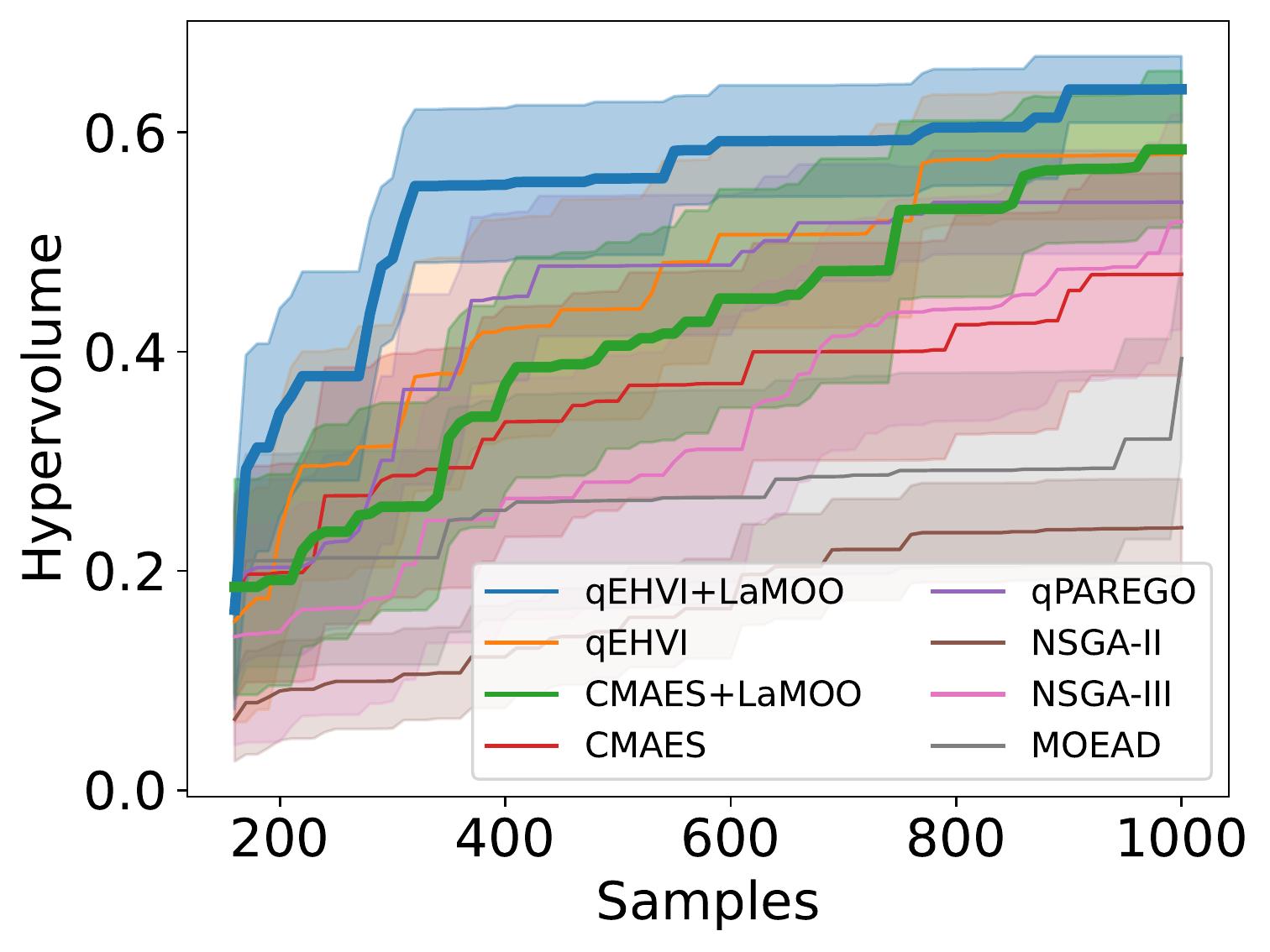}
\vspace{-0.1in}
\caption{\small Molecule Discovery: \textbf{Left}: Molecule discovery with two objectives (GSK3$\beta$+JNK3). \textbf{Middle}: Molecule discovery with three objectives (QED+SA+SARS). \textbf{Right}: Molecule Discovery with four objectives (GSK3$\beta$+JNK3+QED+SA). We ran each algorithm 15 times (shaded area is $\pm$ std of the mean).}
\label{fig:molecule}
\end{figure}

Next, we tackle the practical problem of multi-objective molecular generation, which is a high-dimensional problem (search space is 32-dimensional). Molecular generation models are a critical component of pharmaceutical drug discovery, wherein a cheap-to-run \textit{in silico} model proposes promising molecules which can then be synthesized and tested in a lab~\citep{vamathevan2019applications}. However, one commonly requires the generated molecule to satisfy multiple constraints: for example, new drugs should generally be non-toxic and ideally easy-to-synthesize, in addition to their primary purpose. Therefore, in this work, we consider several multi-objective molecular generation setups from prior work on molecular generation~\citep{yu2019evaluating,jin2020multi,plalam}: (1) activity against biological targets GSK3$\beta$ and JNK3, (2) the same targets together with QED (a standard measure of ``drug-likeness'') and SA (a standard measure of synthetic accessibility), and (3) activity against SARS together with QED and SA. In each task, we propose samples from a pre-trained 32-dimensional latent space from \citep{jin2020hierarchical}, which are then decoded into molecular strings and fed into the property evaluators from prior work.

Fig.~\ref{fig:molecule} shows that LaMOO+qEHVI outperforms all baselines by up to 10\% on various combinations of objectives. While EA struggles to optimize these high-dimensional problems due to the limitations mentioned in Sec.~\ref{related-works}, \ours{} help them (e.g., CMA-ES) to performs much better.

\vspace{-0.1in}
\subsection{Ablation studies}
\vspace{-0.1in}

\begin{figure}[t]
\centering 
\includegraphics[width=1.02\columnwidth]{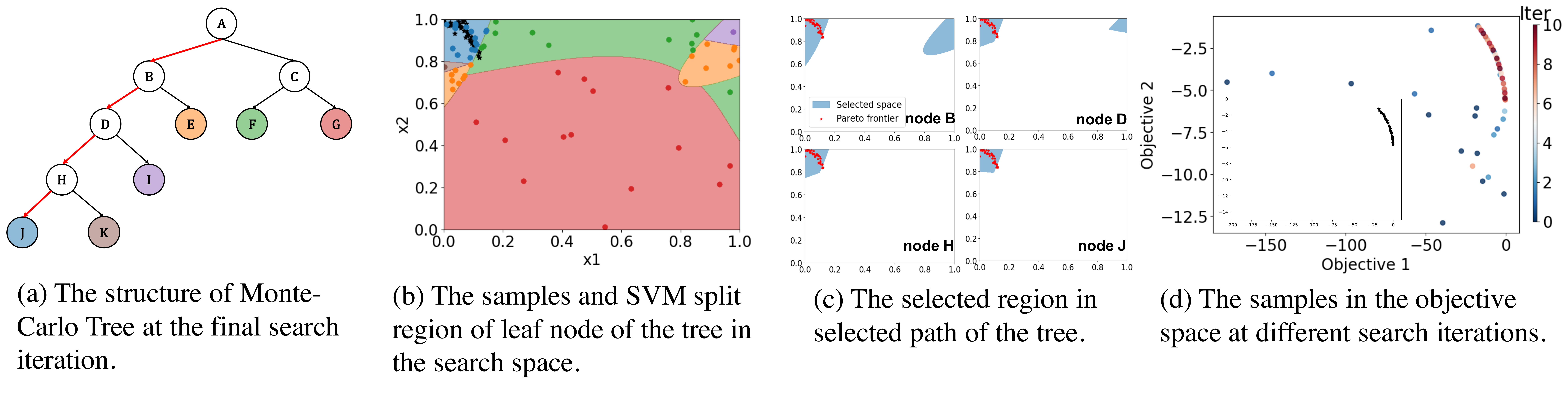}
\vspace{-0.2in}
 \caption{\small Visualization of selected region at different search iterations and nodes. (a) The Monte-Carlo tree with colored leaves. Selected path is marked in red. (b) Visualization of the regions($\Omega_{J}, \Omega_{K} , \Omega_{I}, \Omega_{E}, \Omega_{F}, \Omega_{G}$) that are consistent with leaves in (a) in the search space.  (c) Visualization of selected path at final iteration.  (d) Visualization of samples during search; bottom left is the Pareto frontier estimated from one million samples.}
\label{fig:verf}
\end{figure}

\textbf{Visualization of \ours{}.} To understand how \ours{} works, we visualize its optimization procedure for Branin-Currin. First, the Pareto optimal set $\Omega_P$ is estimated from $10^6$ random samples (marked as black stars), as shown in both search and objective space (Fig.~\ref{fig:verf}(b) and bottom left of Fig.~\ref{fig:verf}(c)). Over several iterations, \ours{} progressively prunes away unpromising regions so that the remaining regions approach $\Omega_P$ (Fig~\ref{fig:verf}(c)). Fig~\ref{fig:verf}(a) shows the final tree structure. The color of each leaf node corresponds to a region in the search space (Fig~\ref{fig:verf}(b)). The selected region is recursively bounded by SVM classifiers corresponding to nodes on the selected path (red arrow in Fig~\ref{fig:verf}(a)). The new samples are only generated from the most promising region $\Omega_{J}$, improving sample efficiency.

\label{sec:ablation_hyper_params}

\begin{figure}[H]
\centering 
\includegraphics[height=1.33in]{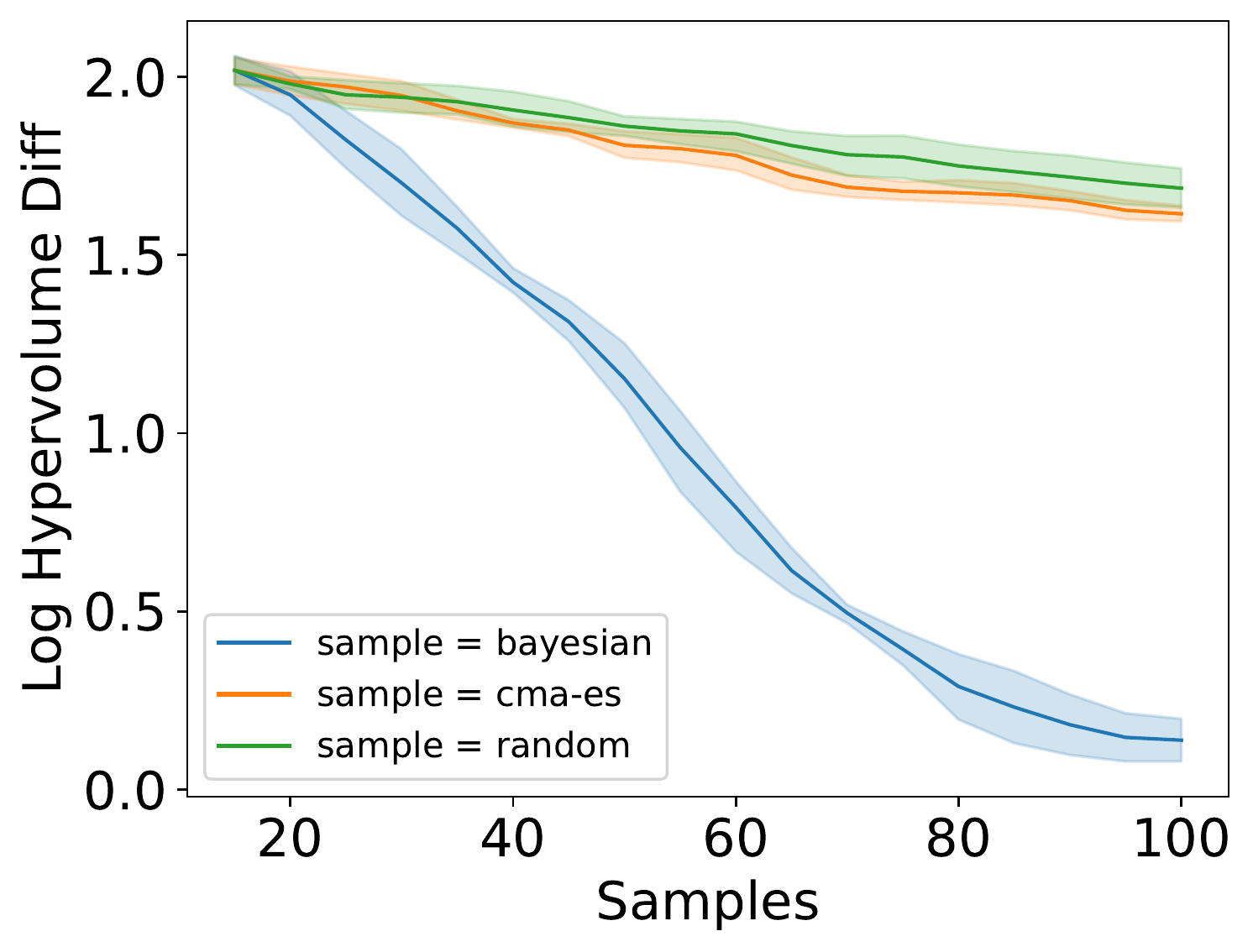}
\includegraphics[height=1.33in]{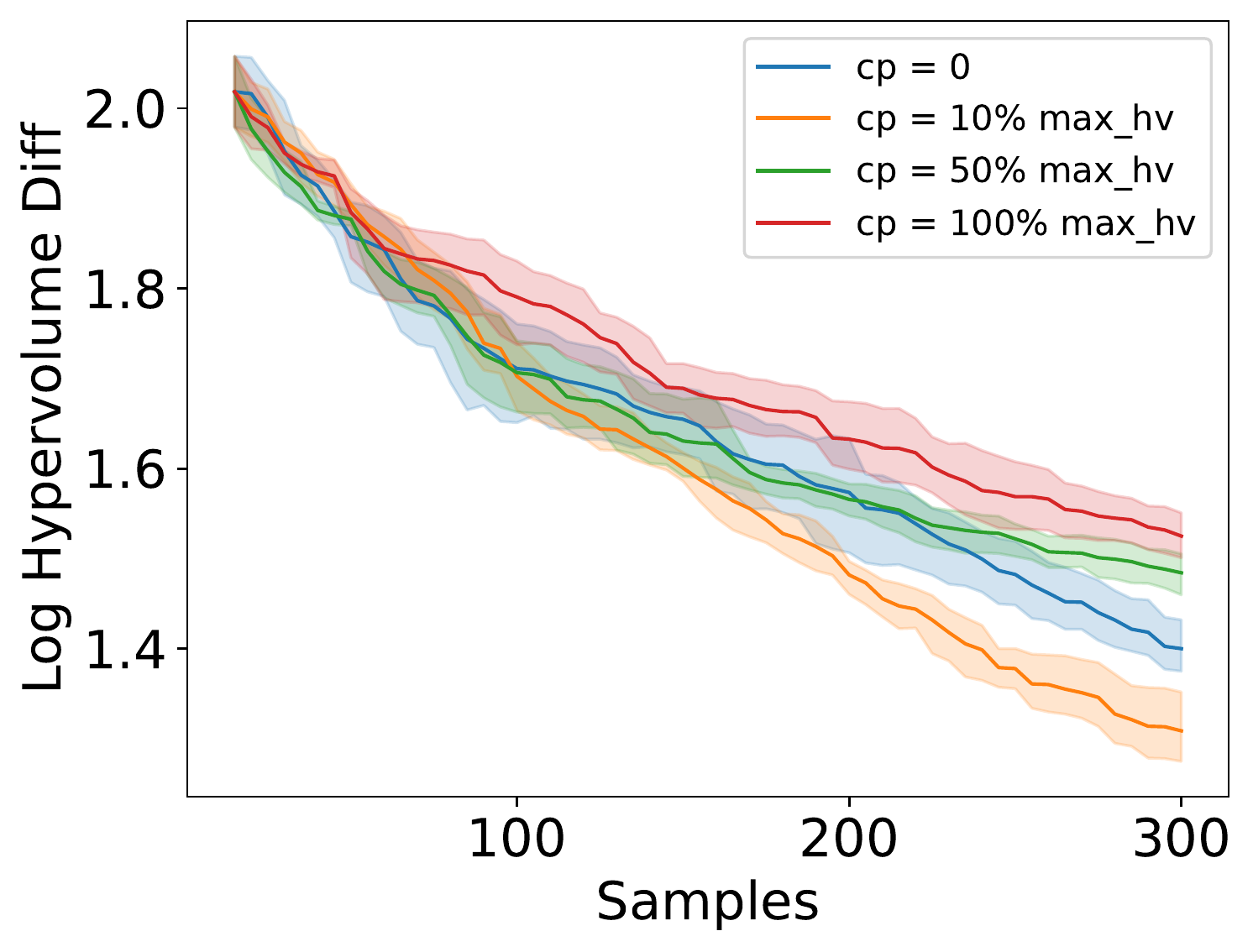}
\includegraphics[height=1.33in]{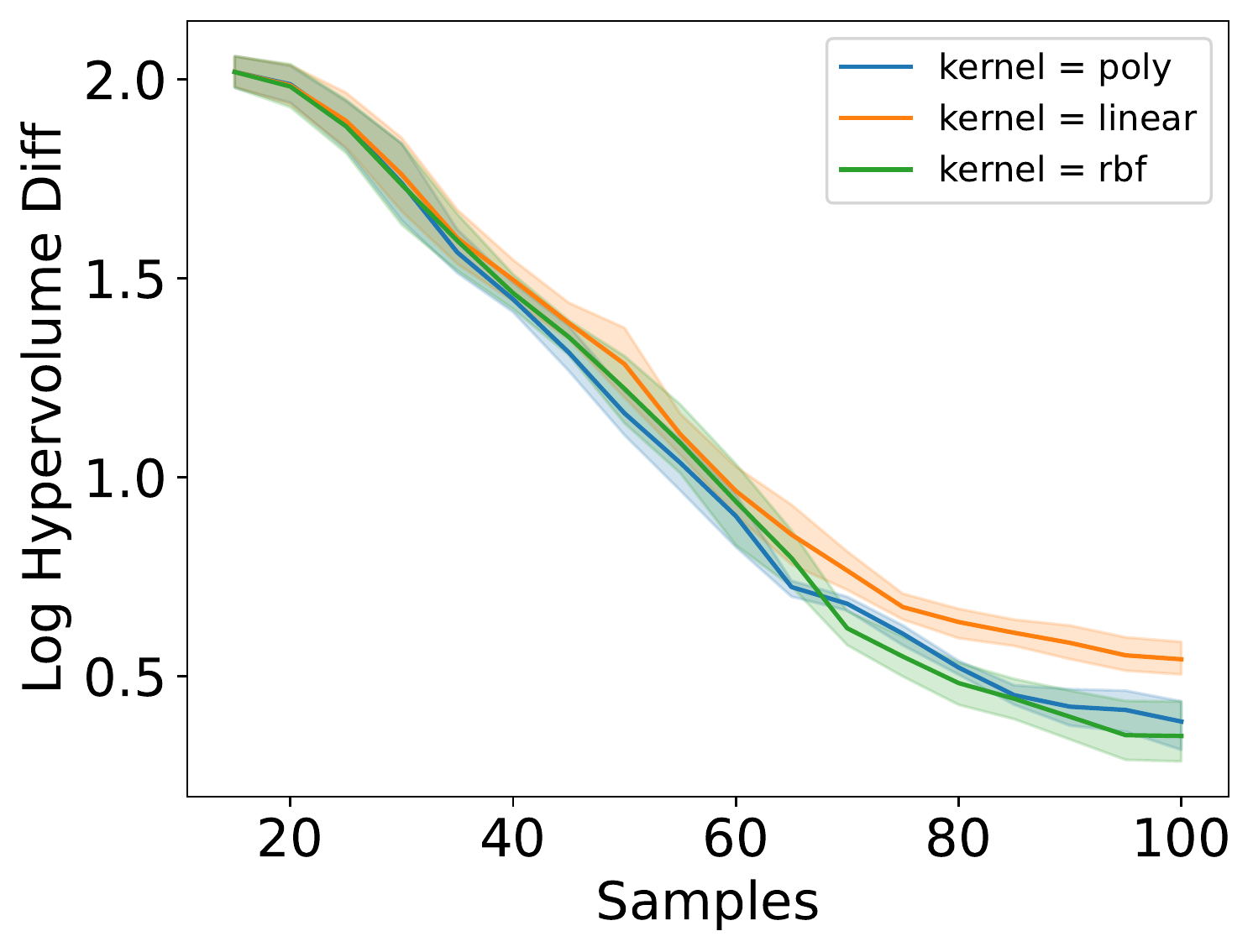}
\vspace{-0.1in} 
\caption{\small Ablation studies on hyperparameters and sampling methods in \ours{}. \textbf{Left}: Sampling without Bayesian/CMA-ES. \textbf{Middle}: Sampling with different $C_p$. \textbf{Right}: Partitioning with different svm kernels}
\label{fig:ablation}
\end{figure}

\vspace{-0.1in}

\textbf{Ablation of Design Choices.} We show how different hyperparameters and sampling methods play a role in the performance. We perform the study in VehicleSafety below.  

\emph{Sampling methods}. LaMOO can be integrated with different sample methods, including Bayesian Optimization (e.g., qEHVI) and evolutionary algorithms (e.g., CMA-ES). Fig.~\ref{fig:ablation}(left) shows that compared to random sampling, qEHVI improves a lot while CMA-ES only improves slightly. This is consistent with our previous finding that for MOO, BO is much more efficient than EA. 

\emph{The exploration factor $C_p$} controls the balance of exploration and exploitation. A larger $C_p$ guides LaMOO to visit the sub-optimal regions more often. Based on the results in Fig.~\ref{fig:ablation}(middle), greedy search ($C_p=0$) leads to worse performance compared to a proper $C_p$ value (i.e. 10\% of maximum hypervolume), which justifies our usage of MCTS. On the other hand, over-exploration can also yield even worse results than greedy search. Therefore, a "rule of thumb" is to set the $C_p$ to be roughly 10\% of the maximum hypervolume $HV_{\max}$. When $HV_{\max}$ is unknown, $C_p$ can be set empirically.

\emph{SVM kernels}. As shown in Fig.~\ref{fig:ablation}(right), we find that the RBF kernel performs the best, in agreement with ~\citep{wang2020learning}. Thanks to the non-linearity of the polynomial and RBF kernels, their region partitions perform better compared to a linear one.

\section{Conclusion}
We propose a search space partition optimizer called \ours{} as a meta-algorithm that extends prior single-objective works~\citep{wang2020learning, plalam} to multi-objective optimization. We demonstrated both theoretically and via experiments on multiple MOO tasks that \ours{} significantly improves the search performance compared to strong baselines like qEHVI and CMA-ES. In particular, \ours{} + qEHVI achieves state-of-the-art results on a variety of tasks including synthetic functions, neural architecture search, and molecule discovery. 

\section{Acknowledgement}
This work was supported in part by NSF Grants \#1815619 and \#2105564, a VMWare grant, and computational resources supported by the Academic \& Research Computing group at Worcester Polytechnic Institute.

\bibliography{output}

\begin{thebibliography}{63}
\providecommand{\natexlab}[1]{#1}
\providecommand{\url}[1]{\texttt{#1}}
\expandafter\ifx\csname urlstyle\endcsname\relax
  \providecommand{\doi}[1]{doi: #1}\else
  \providecommand{\doi}{doi: \begingroup \urlstyle{rm}\Url}\fi

\bibitem[Bandyopadhyay et~al.(2004)Bandyopadhyay, Pal, and Aruna]{spacing}
Sanghamitra Bandyopadhyay, Sankar~K Pal, and B~Aruna.
\newblock Multiobjective gas, quantitative indices, and pattern classification.
\newblock \emph{IEEE Transactions on Systems, Man, and Cybernetics, Part B
  (Cybernetics)}, 34\penalty0 (5):\penalty0 2088--2099, 2004.

\bibitem[Belakaria et~al.(2019)Belakaria, Deshwal, and Doppa]{bc_func}
Syrine Belakaria, Aryan Deshwal, and Janardhan~Rao Doppa.
\newblock Max-value entropy search for multi-objective bayesian optimization.
\newblock In H.~Wallach, H.~Larochelle, A.~Beygelzimer, F.~d\textquotesingle
  Alch\'{e}-Buc, E.~Fox, and R.~Garnett (eds.), \emph{Advances in Neural
  Information Processing Systems}, volume~32. Curran Associates, Inc., 2019.
\newblock URL
  \url{https://proceedings.neurips.cc/paper/2019/file/82edc5c9e21035674d481640448049f3-Paper.pdf}.

\bibitem[Beume \& Rudolph(2006)Beume and Rudolph]{hv_comp1}
Nicola Beume and Günter Rudolph.
\newblock Faster s-metric calculation by considering dominated hypervolume as
  klee’s measure problem, 2006.

\bibitem[Beume et~al.(2007)Beume, Naujoks, and Emmerich]{SMS-EMOA}
Nicola Beume, Boris Naujoks, and Michael Emmerich.
\newblock Sms-emoa: Multiobjective selection based on dominated hypervolume.
\newblock \emph{European Journal of Operational Research}, 181\penalty0
  (3):\penalty0 1653--1669, 2007.
\newblock ISSN 0377-2217.
\newblock \doi{https://doi.org/10.1016/j.ejor.2006.08.008}.
\newblock URL
  \url{https://www.sciencedirect.com/science/article/pii/S0377221706005443}.

\bibitem[Beume et~al.(2009)Beume, Fonseca, Lopez-Ibanez, Paquete, and
  Vahrenhold]{hv_comp2}
Nicola Beume, Carlos~M. Fonseca, Manuel Lopez-Ibanez, LuÍs Paquete, and Jan
  Vahrenhold.
\newblock On the complexity of computing the hypervolume indicator.
\newblock \emph{IEEE Transactions on Evolutionary Computation}, 13\penalty0
  (5):\penalty0 1075--1082, 2009.
\newblock \doi{10.1109/TEVC.2009.2015575}.

\bibitem[Bosman \& Thierens(2003)Bosman and Thierens]{igd}
Peter~AN Bosman and Dirk Thierens.
\newblock The balance between proximity and diversity in multiobjective
  evolutionary algorithms.
\newblock \emph{IEEE transactions on evolutionary computation}, 7\penalty0
  (2):\penalty0 174--188, 2003.

\bibitem[Cai et~al.(2019)Cai, Zhu, and Han]{proxylessnas}
Han Cai, Ligeng Zhu, and Song Han.
\newblock Proxyless{NAS}: Direct neural architecture search on target task and
  hardware.
\newblock In \emph{International Conference on Learning Representations}, 2019.
\newblock URL \url{https://arxiv.org/pdf/1812.00332.pdf}.

\bibitem[Cai et~al.(2020)Cai, Gan, Wang, Zhang, and Han]{ofa}
Han Cai, Chuang Gan, Tianzhe Wang, Zhekai Zhang, and Song Han.
\newblock Once for all: Train one network and specialize it for efficient
  deployment.
\newblock In \emph{International Conference on Learning Representations}, 2020.
\newblock URL \url{https://arxiv.org/pdf/1908.09791.pdf}.

\bibitem[Chang(2015)]{case1}
Kuang-Hua Chang.
\newblock Chapter 5 - multiobjective optimization and advanced topics.
\newblock In Kuang-Hua Chang (ed.), \emph{Design Theory and Methods Using
  CAD/CAE}, pp.\  325--406. Academic Press, Boston, 2015.
\newblock ISBN 978-0-12-398512-5.
\newblock \doi{https://doi.org/10.1016/B978-0-12-398512-5.00005-0}.
\newblock URL
  \url{https://www.sciencedirect.com/science/article/pii/B9780123985125000050}.

\bibitem[Couckuyt et~al.(2014)Couckuyt, Deschrijver, and
  Dhaene]{couckuyt2014fast}
Ivo Couckuyt, Dirk Deschrijver, and Tom Dhaene.
\newblock Fast calculation of multiobjective probability of improvement and
  expected improvement criteria for pareto optimization.
\newblock \emph{Journal of Global Optimization}, 60\penalty0 (3):\penalty0
  575--594, 2014.

\bibitem[Daulton et~al.(2020)Daulton, Balandat, and Bakshy]{qehvi}
Samuel Daulton, Maximilian Balandat, and Eytan Bakshy.
\newblock Differentiable expected hypervolume improvement for parallel
  multi-objective bayesian optimization.
\newblock \emph{arXiv preprint arXiv:2006.05078}, 2020.

\bibitem[Deb et~al.(2002{\natexlab{a}})Deb, Pratap, Agarwal, and
  Meyarivan]{nsga-ii}
K.~Deb, A.~Pratap, S.~Agarwal, and T.~Meyarivan.
\newblock A fast and elitist multiobjective genetic algorithm: Nsga-ii.
\newblock \emph{IEEE Transactions on Evolutionary Computation}, 6\penalty0
  (2):\penalty0 182--197, 2002{\natexlab{a}}.
\newblock \doi{10.1109/4235.996017}.

\bibitem[Deb et~al.(2002{\natexlab{b}})Deb, Thiele, Laumanns, and
  Zitzler]{dtlz}
K.~Deb, L.~Thiele, M.~Laumanns, and E.~Zitzler.
\newblock Scalable multi-objective optimization test problems.
\newblock In \emph{Proceedings of the 2002 Congress on Evolutionary
  Computation. CEC'02 (Cat. No.02TH8600)}, volume~1, pp.\  825--830 vol.1,
  2002{\natexlab{b}}.
\newblock \doi{10.1109/CEC.2002.1007032}.

\bibitem[Deb \& Jain(2014)Deb and Jain]{nsgaiii}
Kalyanmoy Deb and Himanshu Jain.
\newblock An evolutionary many-objective optimization algorithm using
  reference-point-based nondominated sorting approach, part i: Solving problems
  with box constraints.
\newblock \emph{IEEE Transactions on Evolutionary Computation}, 18\penalty0
  (4):\penalty0 577--601, 2014.
\newblock \doi{10.1109/TEVC.2013.2281535}.

\bibitem[Deng et~al.(2007)Deng, Huang, and Tang]{QI_res1}
Guoqiang Deng, Zhangcan Huang, and Min Tang.
\newblock Research in the performance assessment of multi-objective
  optimization evolutionary algorithms.
\newblock In \emph{2007 International Conference on Communications, Circuits
  and Systems}, pp.\  915--918. IEEE, 2007.

\bibitem[Dong \& Yang(2020)Dong and Yang]{nasbench201}
Xuanyi Dong and Yi~Yang.
\newblock Nas-bench-201: Extending the scope of reproducible neural
  architecture search.
\newblock In \emph{International Conference on Learning Representations
  (ICLR)}, 2020.
\newblock URL \url{https://openreview.net/forum?id=HJxyZkBKDr}.

\bibitem[Fang et~al.(2018)Fang, Wang, He, Huang, Liu, and Zhang]{improved_nsga}
Xi~Fang, Wenwen Wang, Lang He, Zhangcan Huang, Yang Liu, and Liang Zhang.
\newblock Research on improved nsga-ii algorithm and its application in
  emergency management.
\newblock \emph{Research on Improved NSGA-II Algorithm and Its Application in
  Emergency Management}, 2018.
\newblock \doi{10.1155/2018/1306341}.

\bibitem[Fleischer(2003)]{hv_prove}
M.~Fleischer.
\newblock The measure of pareto optima applications to multi-objective
  metaheuristics.
\newblock In Carlos~M. Fonseca, Peter~J. Fleming, Eckart Zitzler, Lothar
  Thiele, and Kalyanmoy Deb (eds.), \emph{Evolutionary Multi-Criterion
  Optimization}, pp.\  519--533, Berlin, Heidelberg, 2003. Springer Berlin
  Heidelberg.

\bibitem[Gunantara(2018)]{case2}
Nyoman Gunantara.
\newblock A review of multi-objective optimization: Methods and its
  applications.
\newblock \emph{Cogent Engineering}, 5\penalty0 (1):\penalty0 1502242, 2018.
\newblock \doi{10.1080/23311916.2018.1502242}.

\bibitem[Hashimoto et~al.(2018{\natexlab{a}})Hashimoto, Yadlowsky, and
  Duchi]{addr1}
Tatsunori Hashimoto, Steve Yadlowsky, and John Duchi.
\newblock Derivative free optimization via repeated classification.
\newblock In \emph{International Conference on Artificial Intelligence and
  Statistics}, pp.\  2027--2036. PMLR, 2018{\natexlab{a}}.

\bibitem[Hashimoto et~al.(2018{\natexlab{b}})Hashimoto, Yadlowsky, and
  Duchi]{addr3}
Tatsunori Hashimoto, Steve Yadlowsky, and John Duchi.
\newblock Derivative free optimization via repeated classification.
\newblock In \emph{International Conference on Artificial Intelligence and
  Statistics}, pp.\  2027--2036. PMLR, 2018{\natexlab{b}}.

\bibitem[Hinder(2018)]{hinder2018cutting}
Oliver Hinder.
\newblock Cutting plane methods can be extended into nonconvex optimization.
\newblock In \emph{Conference On Learning Theory}, pp.\  1451--1454. PMLR,
  2018.

\bibitem[Igel et~al.(2007)Igel, Hansen, and Roth]{cmaes_moo}
Christian Igel, Nikolaus Hansen, and Stefan Roth.
\newblock Covariance matrix adaptation for multi-objective optimization.
\newblock \emph{Evolutionary Computation}, 15\penalty0 (1):\penalty0 1--28,
  2007.
\newblock \doi{10.1162/evco.2007.15.1.1}.

\bibitem[Igel et~al.(2007a)Igel, Suttorp, and Hansen]{cma-es}
Christian Igel, Thorsten Suttorp, and Nikolaus Hansen.
\newblock Steady-state selection and efficient covariance matrix update in the
  multi-objective cma-es.
\newblock In Shigeru Obayashi, Kalyanmoy Deb, Carlo Poloni, Tomoyuki Hiroyasu,
  and Tadahiko Murata (eds.), \emph{Evolutionary Multi-Criterion Optimization},
  pp.\  171--185, Berlin, Heidelberg, 2007a. Springer Berlin Heidelberg.

\bibitem[Jiang et~al.(2014)Jiang, Ong, Zhang, and Feng]{error_ratio}
Siwei Jiang, Yew-Soon Ong, Jie Zhang, and Liang Feng.
\newblock Consistencies and contradictions of performance metrics in
  multiobjective optimization.
\newblock \emph{IEEE transactions on cybernetics}, 44\penalty0 (12):\penalty0
  2391--2404, 2014.

\bibitem[Jin et~al.(2020{\natexlab{a}})Jin, Barzilay, and
  Jaakkola]{jin2020hierarchical}
Wengong Jin, Regina Barzilay, and Tommi Jaakkola.
\newblock Hierarchical generation of molecular graphs using structural motifs.
\newblock In \emph{International Conference on Machine Learning}, pp.\
  4839--4848. PMLR, 2020{\natexlab{a}}.

\bibitem[Jin et~al.(2020{\natexlab{b}})Jin, Barzilay, and
  Jaakkola]{jin2020multi}
Wengong Jin, Regina Barzilay, and Tommi Jaakkola.
\newblock Multi-objective molecule generation using interpretable
  substructures.
\newblock In \emph{International Conference on Machine Learning}, pp.\
  4849--4859. PMLR, 2020{\natexlab{b}}.

\bibitem[Jones et~al.(1998)Jones, Schonlau, and Welch]{jones1998efficient}
Donald~R Jones, Matthias Schonlau, and William~J Welch.
\newblock Efficient global optimization of expensive black-box functions.
\newblock \emph{Journal of Global optimization}, 13\penalty0 (4):\penalty0
  455--492, 1998.

\bibitem[Kawaguchi et~al.(2015)Kawaguchi, Kaelbling, and
  Lozano-P{\'e}rez]{addr6}
Kenji Kawaguchi, Leslie~Pack Kaelbling, and Tom{\'a}s Lozano-P{\'e}rez.
\newblock Bayesian optimization with exponential convergence.
\newblock 2015.

\bibitem[Kim et~al.(2020)Kim, Lee, Lim, Kaelbling, and
  Lozano-P{\'e}rez]{kim2020monte}
Beomjoon Kim, Kyungjae Lee, Sungbin Lim, Leslie Kaelbling, and Tom{\'a}s
  Lozano-P{\'e}rez.
\newblock Monte carlo tree search in continuous spaces using voronoi optimistic
  optimization with regret bounds.
\newblock In \emph{Proceedings of the AAAI Conference on Artificial
  Intelligence}, volume~34, pp.\  9916--9924, 2020.

\bibitem[Kim et~al.()Kim, Ahn, Kim, and Yun]{comp_9th}
Taehyeon Kim, Jaeyeon Ahn, Nakyil Kim, and Seyoung Yun.
\newblock Adaptive local bayesian optimization over multiple discrete
  variables.
\newblock URL
  \url{https://valohaichirpprod.blob.core.windows.net/papers/kaist_osi.pdf}.

\bibitem[Knowles(2006)]{qparego}
J.~Knowles.
\newblock Parego: a hybrid algorithm with on-line landscape approximation for
  expensive multiobjective optimization problems.
\newblock \emph{IEEE Transactions on Evolutionary Computation}, 10\penalty0
  (1):\penalty0 50--66, 2006.
\newblock \doi{10.1109/TEVC.2005.851274}.

\bibitem[Kocsis \& Szepesv{\'a}ri(2006)Kocsis and Szepesv{\'a}ri]{mcts}
Levente Kocsis and Csaba Szepesv{\'a}ri.
\newblock Bandit based monte-carlo planning.
\newblock In Johannes F{\"u}rnkranz, Tobias Scheffer, and Myra Spiliopoulou
  (eds.), \emph{Machine Learning: ECML 2006}, pp.\  282--293, Berlin,
  Heidelberg, 2006. Springer Berlin Heidelberg.
\newblock ISBN 978-3-540-46056-5.

\bibitem[Krizhevsky(2009)]{cifar10}
Alex Krizhevsky.
\newblock Learning multiple layers of features from tiny images.
\newblock 2009.

\bibitem[Kumar et~al.(2018)Kumar, Dahl, Vasudevan, and Norouzi]{addr2}
Manoj Kumar, George~E. Dahl, Vijay Vasudevan, and Mohammad Norouzi.
\newblock Parallel architecture and hyperparameter search via successive
  halving and classification.
\newblock \emph{CoRR}, abs/1805.10255, 2018.
\newblock URL \url{http://arxiv.org/abs/1805.10255}.

\bibitem[Kung et~al.(1975)Kung, Luccio, and Preparata]{maximaset}
H.~T. Kung, F.~Luccio, and F.~P. Preparata.
\newblock On finding the maxima of a set of vectors.
\newblock \emph{J. ACM}, 22\penalty0 (4):\penalty0 469–476, October 1975.
\newblock ISSN 0004-5411.
\newblock \doi{10.1145/321906.321910}.
\newblock URL \url{https://doi.org/10.1145/321906.321910}.

\bibitem[Li et~al.(2014)Li, Yang, and Liu]{QI_res2}
Miqing Li, Shengxiang Yang, and Xiaohui Liu.
\newblock Diversity comparison of pareto front approximations in many-objective
  optimization.
\newblock \emph{IEEE Transactions on Cybernetics}, 44\penalty0 (12):\penalty0
  2568--2584, 2014.

\bibitem[Liao et~al.(2008)Liao, Li, Yang, Zhang, and Li]{vehicle_safety}
Xingtao Liao, Qing Li, Xujing Yang, Weigang Zhang, and Wei Li.
\newblock Multiobjective optimization for crash safety design of vehicles using
  stepwise regression model.
\newblock \emph{Structural and multidisciplinary optimization}, 35\penalty0
  (6):\penalty0 561--569, 2008.

\bibitem[Loganathan \& Sherali(1987)Loganathan and
  Sherali]{loganathan1987convergent}
GV~Loganathan and Hanif~D Sherali.
\newblock A convergent interactive cutting-plane algorithm for multiobjective
  optimization.
\newblock \emph{Operations Research}, 35\penalty0 (3):\penalty0 365--377, 1987.

\bibitem[Loshchilov et~al.(2010)Loshchilov, Schoenauer, and Sebag]{addr7}
Ilya Loshchilov, Marc Schoenauer, and Mich\`{e}le Sebag.
\newblock A mono surrogate for multiobjective optimization.
\newblock In \emph{Proceedings of the 12th Annual Conference on Genetic and
  Evolutionary Computation}, GECCO '10, pp.\  471–478, New York, NY, USA,
  2010. Association for Computing Machinery.
\newblock ISBN 9781450300728.
\newblock \doi{10.1145/1830483.1830571}.
\newblock URL \url{https://doi.org/10.1145/1830483.1830571}.

\bibitem[Munos(2011{\natexlab{a}})]{addr4}
R\'{e}mi Munos.
\newblock Optimistic optimization of a deterministic function without the
  knowledge of its smoothness.
\newblock In J.~Shawe-Taylor, R.~Zemel, P.~Bartlett, F.~Pereira, and K.~Q.
  Weinberger (eds.), \emph{Advances in Neural Information Processing Systems},
  volume~24. Curran Associates, Inc., 2011{\natexlab{a}}.
\newblock URL
  \url{https://proceedings.neurips.cc/paper/2011/file/7e889fb76e0e07c11733550f2a6c7a5a-Paper.pdf}.

\bibitem[Munos(2011{\natexlab{b}})]{munos2011optimistic}
R{\'e}mi Munos.
\newblock Optimistic optimization of a deterministic function without the
  knowledge of its smoothness.
\newblock \emph{Advances in neural information processing systems},
  24:\penalty0 783--791, 2011{\natexlab{b}}.

\bibitem[Pan et~al.(2019)Pan, He, Tian, Wang, Zhang, and Jin]{addr9}
Linqiang Pan, Cheng He, Ye~Tian, Handing Wang, Xingyi Zhang, and Yaochu Jin.
\newblock A classification-based surrogate-assisted evolutionary algorithm for
  expensive many-objective optimization.
\newblock \emph{IEEE Transactions on Evolutionary Computation}, 23\penalty0
  (1):\penalty0 74--88, 2019.
\newblock \doi{10.1109/TEVC.2018.2802784}.

\bibitem[Pang et~al.(2020)Pang, Ishibuchi, and Shang]{improved_nsgaii}
Lie~Meng Pang, Hisao Ishibuchi, and Ke~Shang.
\newblock Nsga-ii with simple modification works well on a wide variety of
  many-objective problems.
\newblock \emph{IEEE Access}, 8:\penalty0 190240--190250, 2020.
\newblock \doi{10.1109/ACCESS.2020.3032240}.

\bibitem[Paria et~al.(2018)Paria, Kandasamy, and P{\'{o}}czos]{thompson_sample}
Biswajit Paria, Kirthevasan Kandasamy, and Barnab{\'{a}}s P{\'{o}}czos.
\newblock A flexible multi-objective bayesian optimization approach using
  random scalarizations.
\newblock \emph{CoRR}, abs/1805.12168, 2018.
\newblock URL \url{http://arxiv.org/abs/1805.12168}.

\bibitem[Ponweiser et~al.(2008)Ponweiser, Wagner, Biermann, and
  Vincze]{sms_ego}
Wolfgang Ponweiser, Tobias Wagner, Dirk Biermann, and Markus Vincze.
\newblock Multiobjective optimization on a limited budget of evaluations using
  model-assisted-metric selection.
\newblock In G{\"u}nter Rudolph, Thomas Jansen, Nicola Beume, Simon Lucas, and
  Carlo Poloni (eds.), \emph{Parallel Problem Solving from Nature -- PPSN X},
  pp.\  784--794, Berlin, Heidelberg, 2008. Springer Berlin Heidelberg.

\bibitem[Sazanovich et~al.()Sazanovich, Nikolskaya, Belousov, and
  Shpilman]{comp_4th}
Mikita Sazanovich, Anastasiya Nikolskaya, Yury Belousov, and Aleksei Shpilman.
\newblock Solving black-box optimization challenge via learning search space
  partition for local bayesian optimization.
\newblock URL
  \url{https://valohaichirpprod.blob.core.windows.net/papers/jetbrains.pdf}.

\bibitem[Seah et~al.(2012)Seah, Ong, Tsang, and Jiang]{addr8}
Chun-Wei Seah, Yew-Soon Ong, Ivor~W. Tsang, and Siwei Jiang.
\newblock Pareto rank learning in multi-objective evolutionary algorithms.
\newblock In \emph{2012 IEEE Congress on Evolutionary Computation}, pp.\  1--8,
  2012.
\newblock \doi{10.1109/CEC.2012.6252865}.

\bibitem[Srinivas \& Deb(1994)Srinivas and Deb]{nsga}
Nidamarthi Srinivas and Kalyanmoy Deb.
\newblock Muiltiobjective optimization using nondominated sorting in genetic
  algorithms.
\newblock \emph{Evolutionary computation}, 2\penalty0 (3):\penalty0 221--248,
  1994.

\bibitem[Thompson(1933)]{thompson1933likelihood}
William~R Thompson.
\newblock On the likelihood that one unknown probability exceeds another in
  view of the evidence of two samples.
\newblock \emph{Biometrika}, 25\penalty0 (3/4):\penalty0 285--294, 1933.

\bibitem[Vamathevan et~al.(2019)Vamathevan, Clark, Czodrowski, Dunham, Ferran,
  Lee, Li, Madabhushi, Shah, Spitzer, et~al.]{vamathevan2019applications}
Jessica Vamathevan, Dominic Clark, Paul Czodrowski, Ian Dunham, Edgardo Ferran,
  George Lee, Bin Li, Anant Madabhushi, Parantu Shah, Michaela Spitzer, et~al.
\newblock Applications of machine learning in drug discovery and development.
\newblock \emph{Nature Reviews Drug Discovery}, 18\penalty0 (6):\penalty0
  463--477, 2019.

\bibitem[Van~Veldhuizen \& Lamont(1998)Van~Veldhuizen and Lamont]{gd}
David~A Van~Veldhuizen and Gary~B Lamont.
\newblock Evolutionary computation and convergence to a pareto front.
\newblock In \emph{Late breaking papers at the genetic programming 1998
  conference}, pp.\  221--228. Citeseer, 1998.

\bibitem[Vieira \& Lisboa(2019)Vieira and Lisboa]{vieira2019cutting}
Douglas~AG Vieira and Adriano~Chaves Lisboa.
\newblock A cutting-plane method to nonsmooth multiobjective optimization
  problems.
\newblock \emph{European Journal of Operational Research}, 275\penalty0
  (3):\penalty0 822--829, 2019.

\bibitem[Wang et~al.(2019)Wang, Xie, Li, Fonseca, and Tian]{lanas}
Linnan Wang, Saining Xie, Teng Li, Rodrigo Fonseca, and Yuandong Tian.
\newblock Sample-efficient neural architecture search by learning action space.
\newblock \emph{CoRR}, abs/1906.06832, 2019.
\newblock URL \url{http://arxiv.org/abs/1906.06832}.

\bibitem[Wang et~al.(2020)Wang, Fonseca, and Tian]{wang2020learning}
Linnan Wang, Rodrigo Fonseca, and Yuandong Tian.
\newblock Learning search space partition for black-box optimization using
  monte carlo tree search.
\newblock \emph{arXiv preprint arXiv:2007.00708}, 2020.

\bibitem[Wang et~al.(2014)Wang, Shakibi, Jin, and Freitas]{addr5}
Ziyu Wang, Babak Shakibi, Lin Jin, and Nando Freitas.
\newblock Bayesian multi-scale optimistic optimization.
\newblock In \emph{Artificial Intelligence and Statistics}, pp.\  1005--1014.
  PMLR, 2014.

\bibitem[Xu et~al.(2020)Xu, Xu, and Ma]{moead_survey}
Qian Xu, Zhanqi Xu, and Tao Ma.
\newblock A survey of multiobjective evolutionary algorithms based on
  decomposition: Variants, challenges and future directions.
\newblock \emph{IEEE Access}, 8:\penalty0 41588--41614, 2020.
\newblock \doi{10.1109/ACCESS.2020.2973670}.

\bibitem[Yang et~al.(2019)Yang, Emmerich, Deutz, and Bäck]{YANG2019945}
Kaifeng Yang, Michael Emmerich, André Deutz, and Thomas Bäck.
\newblock Multi-objective bayesian global optimization using expected
  hypervolume improvement gradient.
\newblock \emph{Swarm and Evolutionary Computation}, 44:\penalty0 945--956,
  2019.
\newblock ISSN 2210-6502.
\newblock \doi{https://doi.org/10.1016/j.swevo.2018.10.007}.
\newblock URL
  \url{https://www.sciencedirect.com/science/article/pii/S2210650217307861}.

\bibitem[Yang et~al.(2021)Yang, Zhang, Cummins, Cui, Steiner, Wang, Gonzalez,
  Klein, and Tian]{plalam}
Kevin Yang, Tianjun Zhang, Chris Cummins, Brandon Cui, Benoit Steiner, Linnan
  Wang, Joseph~E. Gonzalez, Dan Klein, and Yuandong Tian.
\newblock Learning space partitions for path planning.
\newblock \emph{CoRR}, abs/2106.10544, 2021.
\newblock URL \url{https://arxiv.org/abs/2106.10544}.

\bibitem[Yu et~al.(2019)Yu, Sciuto, Jaggi, Musat, and
  Salzmann]{yu2019evaluating}
Kaicheng Yu, Christian Sciuto, Martin Jaggi, Claudiu Musat, and Mathieu
  Salzmann.
\newblock Evaluating the search phase of neural architecture search.
\newblock \emph{arXiv preprint arXiv:1902.08142}, 2019.

\bibitem[Zhang \& Li(2007)Zhang and Li]{moead}
Qingfu Zhang and Hui Li.
\newblock Moea/d: A multiobjective evolutionary algorithm based on
  decomposition.
\newblock \emph{IEEE Transactions on Evolutionary Computation}, 11\penalty0
  (6):\penalty0 712--731, 2007.
\newblock \doi{10.1109/TEVC.2007.892759}.

\bibitem[Zitzler et~al.(2003)Zitzler, Thiele, Laumanns, Fonseca, and
  da~Fonseca]{1197687}
E.~Zitzler, L.~Thiele, M.~Laumanns, C.M. Fonseca, and V.G. da~Fonseca.
\newblock Performance assessment of multiobjective optimizers: an analysis and
  review.
\newblock \emph{IEEE Transactions on Evolutionary Computation}, 7\penalty0
  (2):\penalty0 117--132, 2003.
\newblock \doi{10.1109/TEVC.2003.810758}.

\bibitem[Zitzler et~al.(2000)Zitzler, Deb, and Thiele]{maxspread}
Eckart Zitzler, Kalyanmoy Deb, and Lothar Thiele.
\newblock Comparison of multiobjective evolutionary algorithms: Empirical
  results.
\newblock \emph{Evolutionary computation}, 8\penalty0 (2):\penalty0 173--195,
  2000.

\end{thebibliography}
\bibliographystyle{iclr2022_conference}

\newpage

\setcounter{lemma}{0}
\setcounter{theorem}{0}
\setcounter{observation}{0}

\appendix
\section{Proofs}\label{sec:proofs}

\begin{lemma}
The algorithm to uniformly draw $k$ samples in $S$, pick the best and return is a $(1,1)$-oracle.  
\end{lemma}
\begin{proof}
Consider the following simple $(1, 1)$-oracle for single-objective optimization: after sampling $k$ samples, we rank them according to their function values, and split them into two $k/2$ smaller subsets $\tilde S_\good$ and $\tilde S_\bad$. Other points are randomly assigned to either of the two subsets. Then if $\vx^*$ happens to be among the $k$ collected samples (which happens with probability $k / |S|$), definitely we have $\vx^*\in S_\good$. Therefore, we have:
\begin{equation}
    P\left(\vx^* \in S_\good | \vx^* \in S\right) \ge \frac{k}{|S|} \ge 1 - \exp\left(- \frac{k}{|S|}\right)
\end{equation}
which is an oracle with $\alpha = \eta = 1$. The last inequality is due to $e^x \ge 1 + x$ (and thus $e^{-x} \ge 1 - x$). 
\end{proof}

\begin{lemma}
\label{lemma:optimal-action}
Define $g(\lambda): \rr^+ \mapsto \rr^+$ as: 
\begin{equation}
g(\lambda): \lambda \mapsto \sum^{T}_{t=1} w_t \log \left( 1 + \dfrac{1}{\lambda w_t}\right)
\end{equation}
The following maximization problem 
\begin{equation}
\max _{\{ z_{t}\} }\sum _{t=1}^{T}\log \left( 1-e^{-z_t}\right)\quad \mathrm{s.t.} \ \sum _{t=1}^T w_{t}z_{t}=K
\end{equation}
has optimal solutions 
\begin{equation}
z^*_{t}=\log\left( 1+\dfrac{1}{\lambda w_t}\right),\quad 1\le t\le T 
\end{equation}
where $\lambda$ is determined by $g(\lambda) = K$. With optimal $\{z^*_t\}$, the objective reaches $-\sum_t \log (1+\lambda w_t)$. 
\end{lemma}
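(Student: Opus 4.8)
The plan is to treat this as a standard equality-constrained concave maximization and solve it by the method of Lagrange multipliers, then verify that the resulting stationary point is in fact the global maximizer. First I would form the Lagrangian $L(\{z_t\}, \lambda) = \sum_{t=1}^T \log(1 - e^{-z_t}) - \lambda\left(\sum_{t=1}^T w_t z_t - K\right)$ and impose stationarity $\partial L / \partial z_t = 0$. Using $\frac{\dd}{\dd z}\log(1 - e^{-z}) = e^{-z}/(1 - e^{-z})$, the condition becomes $e^{-z_t}/(1 - e^{-z_t}) = \lambda w_t$. This is a single-variable equation in $e^{-z_t}$: solving it gives $e^{-z_t} = \lambda w_t/(1 + \lambda w_t)$, equivalently $z_t = \log\left(1 + \frac{1}{\lambda w_t}\right)$, which is exactly the claimed $z_t^*$.

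Next I would argue that this stationary point is the unique global maximum, not merely a critical point. The key observation is that $h(z) := \log(1 - e^{-z})$ satisfies $h''(z) = -e^z/(e^z - 1)^2 < 0$ for $z > 0$, so the objective $\sum_t h(z_t)$ is strictly concave while the constraint $\sum_t w_t z_t = K$ is affine. Hence the problem is strictly concave maximization over a convex feasible set, and the first-order Lagrange conditions are both necessary and sufficient for global optimality. I would also note that $h(z) \to -\infty$ as any $z_t \downarrow 0$, which confirms that the maximizer lies in the interior $\{z_t > 0\}$ where $h$ is defined, so no boundary analysis is required.

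Then I would pin down the multiplier $\lambda$. Substituting $z_t^*$ into the constraint gives precisely $\sum_t w_t \log\left(1 + \frac{1}{\lambda w_t}\right) = g(\lambda) = K$. To see that this has a unique solution $\lambda > 0$, I would check that $g$ is continuous and strictly decreasing in $\lambda$ — each summand has derivative $-1/[\lambda(1 + \lambda w_t)] < 0$ — with $g(\lambda) \to +\infty$ as $\lambda \downarrow 0$ and $g(\lambda) \to 0$ as $\lambda \to \infty$. Thus for every target $K > 0$ there is exactly one admissible $\lambda$, determined implicitly by $g(\lambda) = K$.

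Finally I would recover the optimal value by back-substitution. At the optimum, $1 - e^{-z_t^*} = 1/(1 + \lambda w_t)$, so $\log(1 - e^{-z_t^*}) = -\log(1 + \lambda w_t)$, and summing over $t$ yields the objective value $-\sum_t \log(1 + \lambda w_t)$ as claimed. The algebra of solving the stationarity equation and performing the back-substitution is entirely mechanical; the only steps that genuinely need care — and the parts I would write out in full — are the concavity argument that upgrades the stationary point to a \emph{global} optimizer and the monotonicity argument guaranteeing that $\lambda$ is well defined.
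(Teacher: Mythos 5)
Your proof is correct and takes essentially the same route as the paper's: form the Lagrangian, solve the stationarity condition $e^{-z_t}/(1-e^{-z_t}) = \lambda w_t$ to obtain $z_t^* = \log\left(1+\frac{1}{\lambda w_t}\right)$, and back-substitute to get the value $-\sum_t \log(1+\lambda w_t)$. You actually go a bit further than the paper, whose proof stops at the stationarity computation --- your strict-concavity argument upgrading the stationary point to a global maximizer, and the existence/uniqueness of $\lambda$ via monotonicity and limits of $g$ (which the paper only partially addresses in its next lemma), fill in details left implicit there; the one trivial slip is that each summand of $g$ has derivative $-w_t/\left[\lambda(1+\lambda w_t)\right]$ rather than $-1/\left[\lambda(1+\lambda w_t)\right]$, which changes nothing since only the sign matters.
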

\begin{proof}
Its Lagrange is:
\begin{equation}
J\left(\{z_{t}\}\right) =\sum_{t=1}^{T}\log\left( 1-e^{-z_t}\right) -\lambda \left( \sum_{t=1}^T w_{t}z_{t}-K\right)
\end{equation}

Taking derivative w.r.t. $z_t$ and we have:
\begin{equation}
\begin{aligned}\dfrac{\partial J }{\partial z_{t}}=\dfrac{e^{-z_t}}{1-e^{-z_t}}-\lambda w_{t}=0.\\
\dfrac{1}{1-e^{-z_t}}-1-\lambda w_{t}=0\\
\dfrac{1}{1-e^{-z_t}}=1+\lambda w_{t}\\
1-e^{-z_{t}}=\dfrac{1}{1+\lambda w_t}\\
e^{-z_{t}}=1-\dfrac{1}{1+\lambda w_{t}}=\dfrac{\lambda w_{t}}{1+\lambda w _{t}}\\
z_{t}=-\log \dfrac{\lambda w_{t}}{1+\lambda w_{t}}=\log \dfrac{1+\lambda w _{t}}{\lambda w_{t}} =\log\left( 1+\dfrac{1}{\lambda w_t}\right) \end{aligned}
\end{equation}
\end{proof}

\begin{lemma}
\label{lemma:g-property}
Both $g(\lambda)$ and $g^{-1}(y)$ are monotonously decreasing. Furthermore, let $\bar w := T \left(\sum_{t=1}^T w_t^{-1}\right)^{-1}$ be the Harmonic mean of $\{w_t\}$ and $w_{\max} := \max_{t=1}^T w_t$, we have:
\begin{equation}
    \frac{\bar w^{-1}}{\exp(\bar w^{-1} y / T) - 1} \le g^{-1}(y) \le \frac{w^{-1}_{\max}}{\exp(w^{-1}_{\max} y / T) - 1} \le \frac{T}{y}.
\end{equation}
\end{lemma}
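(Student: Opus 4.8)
The plan is to route every claim through the single-variable function
\[
\chi(u) := u^{-1}\log\!\left(1 + \frac{u}{\lambda}\right), \qquad u > 0,
\]
and to reduce the lemma to three elementary properties of $\chi$. The substitution $u_t := 1/w_t$ rewrites the definition as $g(\lambda) = \sum_{t=1}^T \chi(u_t)$. I would first establish: (i) $\chi$ is strictly decreasing in $u$; (ii) $\chi$ is convex in $u$; and (iii) for each fixed $u$, $\chi$ is strictly decreasing in $\lambda$. Claims (i) and (iii) follow by differentiation. For (ii) the cleanest route is the averaged representation $\chi(u) = \int_0^1 \frac{\lambda^{-1}}{1 + tu/\lambda}\,\dd t$, obtained from $\chi(u)=u^{-1}\int_0^u \frac{\lambda^{-1}}{1+s/\lambda}\,\dd s$ by the change of variables $s = tu$; since each integrand is convex in $u$, so is their average.

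Monotonicity is then immediate. By (iii), $g$ is a sum of strictly decreasing functions of $\lambda$, hence strictly decreasing; together with $g(0^+) = +\infty$ and $g(+\infty) = 0^+$ and continuity, $g$ is a decreasing continuous bijection of $(0,\infty)$ onto itself, so $g^{-1}$ exists and is strictly decreasing. To bound $g^{-1}$, I would sandwich $g$ between two functions of the same shape. Property (i) gives $\chi(u_t) \le \chi(1/w_{\max})$ for every $t$ (since $u_t \ge 1/w_{\max}$), hence $g(\lambda) \le H(\lambda) := T\,\chi(1/w_{\max})$. Property (ii) combined with Jensen's inequality gives $\frac{1}{T}\sum_t \chi(u_t) \ge \chi\big(\tfrac1T\sum_t u_t\big) = \chi(1/\bar w)$, because $\frac1T\sum_t 1/w_t = 1/\bar w$ by the definition of the harmonic mean; hence $g(\lambda) \ge h(\lambda) := T\,\chi(1/\bar w)$.

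It remains to invert the envelopes and compare. Solving $y = T c\,\log(1 + 1/(c\lambda))$ for $\lambda$ yields $\lambda = c^{-1}/(\exp(c^{-1}y/T)-1)$; taking $c = w_{\max}$ gives $H^{-1}(y) = w_{\max}^{-1}/(\exp(w_{\max}^{-1}y/T)-1)$ and $c = \bar w$ gives $h^{-1}(y) = \bar w^{-1}/(\exp(\bar w^{-1}y/T)-1)$. Since $h \le g \le H$ are all strictly decreasing bijections, I would invoke the order-reversal fact that $f_1 \le f_2$ (both decreasing) implies $f_1^{-1} \le f_2^{-1}$ — proved by evaluating $f_1$ at $f_2^{-1}(y)$ and using that $f_1$ reverses order — to conclude $h^{-1}(y) \le g^{-1}(y) \le H^{-1}(y)$, which are exactly the two middle inequalities. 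The final inequality $H^{-1}(y) \le T/y$ follows from $e^s - 1 \ge s$ applied at $s = w_{\max}^{-1}y/T$.

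The only nonroutine step is the convexity of $\chi$ in (ii), on which the harmonic-mean lower envelope rests; the integral-average representation makes it painless, but it is the crux of the argument. Everything else — differentiating $\chi$, checking the boundary limits of $g$, inverting the two envelopes, and the order-reversal of inverses — is elementary bookkeeping with standard logarithmic inequalities.
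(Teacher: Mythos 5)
Your proof is correct, and it takes a genuinely different route from the paper's. The paper works in $y$-space with an ODE-comparison argument: it sets $\mu(y) := 1/g^{-1}(y)$, differentiates the implicit identity $y = \sum_{t} w_t \log(1 + \mu(y)/w_t)$ to obtain $1 = \mu'(y)\sum_t \left(1 + \mu(y)/w_t\right)^{-1}$, bounds $\mu'$ from above via the AM--HM inequality (giving the $\bar w$ side) and from below by replacing each $w_t$ with $w_{\max}$, and then integrates the two separable differential inequalities from the initial condition $\mu(0)=0$. You instead work statically in $\lambda$-space: with $u_t = 1/w_t$ and $\chi(u) = u^{-1}\log(1+u/\lambda)$ you sandwich $g(\lambda) = \sum_t \chi(u_t)$ between the closed-form envelopes $h(\lambda) = T\bar w\log\left(1 + \frac{1}{\lambda \bar w}\right)$ (via Jensen, using the convexity of $\chi$ from the representation $\chi(u) = \int_0^1 \frac{\lambda^{-1}}{1+tu/\lambda}\,\mathrm{d}t$) and $H(\lambda) = Tw_{\max}\log\left(1+\frac{1}{\lambda w_{\max}}\right)$ (via monotonicity of $\chi$), then invert both envelopes explicitly and apply order-reversal of inverses of decreasing bijections. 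I verified the key steps: the integral representation and convexity are right, the Jensen direction matches ($\frac{1}{T}\sum_t\chi(u_t) \ge \chi(1/\bar w)$ since $\frac{1}{T}\sum_t u_t = 1/\bar w$), the envelope inversions yield exactly the two middle bounds, and the order-reversal fact is proved correctly. Your convexity/Jensen step and the paper's AM--HM step are the same inequality in different guises — the paper applies it to the differentiated identity, you apply it to $g$ itself — but your argument avoids implicit differentiation and the integration of differential inequalities entirely, and it makes transparent that both bounds become equalities when all $w_t$ coincide, which is precisely the behavior illustrated in \figref{fig:bounds} (a small span of $\{w_t\}$ gives tight bounds). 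What the paper's differential approach buys in exchange is a template that still works when $g$ is accessible only through an implicit equation rather than term-by-term envelopes. Both proofs conclude identically, getting $H^{-1}(y) \le T/y$ from $e^x \ge 1+x$.
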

\begin{figure}
    \centering
    \includegraphics[width=\textwidth]{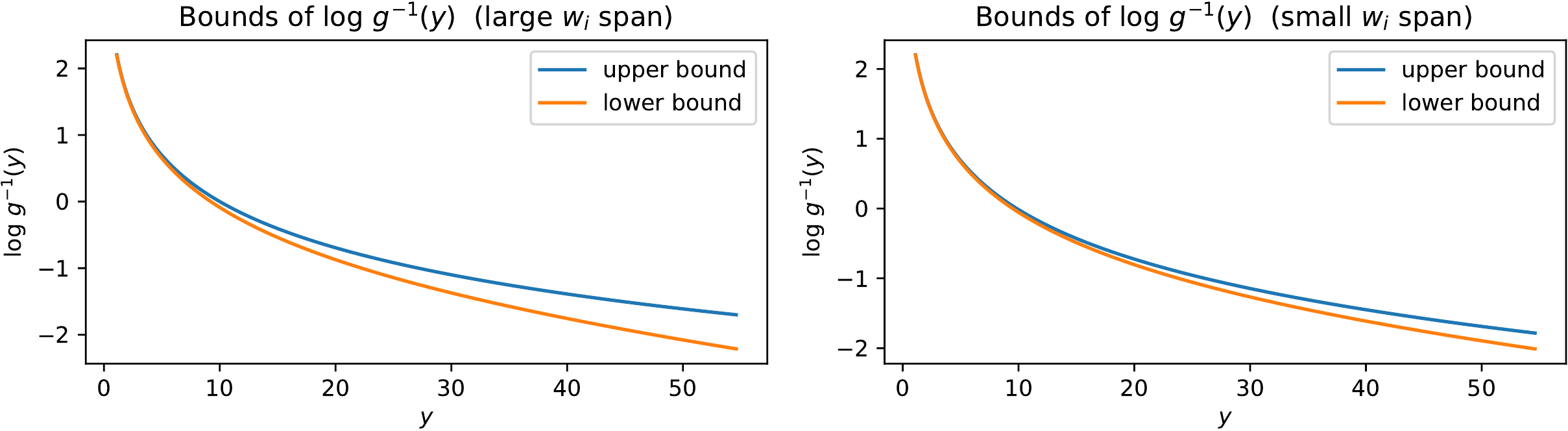}
    \caption{Upper and lower bounds of $g^{-1}(y)$ with different $\{w_i\}$. Left: $w_i = 2^{\mathrm{linspace}(-0.1, 10)}$. Right: $w_i = 2^{\mathrm{linspace}(2, 5)}$. Small $\{w_i\}$ span leads to better bounds.}
    \label{fig:bounds}
\end{figure}

\begin{proof}
It is easy to see when $\lambda$ increases, each term in $g(\lambda)$ decreases and thus $g(\lambda)$ is a decreasing function of $\lambda$. Therefore, its inverse mapping $g^{-1}(y)$ is also decreasing. 

Let $\mu(y) := 1 / g^{-1}(y) > 0$. Then we have:
\begin{equation}
y = \sum_{t=1}^T w_t \log\left(1 + \frac{\mu(y)}{w_t}\right)  
\end{equation}
It is clear that when $y = 0$, $\mu(y) = 0$. Taking derivative with respect to $y$ in both side, we have:
\begin{equation}
1 = \mu'(y) \sum_{t=1}^T \frac{1}{1 + \frac{\mu(y)}{w_t}}  
\end{equation}
where $\mu'(y) = \frac{\dd \mu(y)}{\dd y}$ is the derivative of $\mu(y)$. Using the property of Harmonic mean, we have:
\begin{equation}
    \mu'(y) = \left(\sum_{t=1}^T \frac{1}{1 + \frac{\mu(y)}{w_t}}\right)^{-1} \le \frac{\sum_{t=1}^T 1 + \frac{\mu(y)}{w_t}}{T^2} = \frac{1}{T}\left(1 + \frac{\mu(y)}{\bar w}\right)
\end{equation}
This gives:
\begin{equation}
    \frac{\mu'(y)}{1 + \mu(y) / \bar w} \le \frac{1}{T}
\end{equation}
Integrate on both side starting from $y = 0$, we have:
\begin{equation}
    \bar w \log(1 + \mu(y) / \bar w)\Bigg|_0^y \le \frac{y}{T}\Bigg|_0^y 
\end{equation}
Using $\mu(0) = 0$ we thus have:
\begin{equation}
    \bar w \log(1 + \mu(y) / \bar w) \le \frac{y}{T}
\end{equation}
This leads to $\mu(y) \le \bar w \left[\exp(y\bar w^{-1} / T) - 1\right]$. With $g^{-1}(y) = 1 / \mu(y)$, we arrive the final lower bound for $g^{-1}(y)$.  

For an alternative upper bound of $g^{-1}(y)$, we just notice that (here $w_{\max} := \max_t w_t$):
\begin{equation}
    \mu'(y) = \left(\sum_{t=1}^T \frac{1}{1 + \frac{\mu(y)}{w_t}}\right)^{-1} \ge \left(\frac{T}{1 + \frac{\mu(y)}{w_{\max}}}\right)^{-1} = \frac{1}{T}\left(1 + \frac{\mu(y)}{w_{\max}}\right)
\end{equation}
Using the same technique as above, we have $\mu(y) \ge w_{\max} \left[\exp(y w^{-1}_{\max} / T) - 1\right]$ and the upper bound of $g^{-1}(y)$ follows. 

Finally, note that $e^x \ge 1 + x$, we have
\begin{equation}
 \frac{w^{-1}_{\max}}{\exp(w^{-1}_{\max} y / T) - 1} \le \frac{w^{-1}_{\max}}{w^{-1}_{\max} y / T} =    \frac{T}{y}    
\end{equation}

\end{proof}
\begin{theorem}
\label{thm:reward}
Following optimal sequence, the algorithm yields a reward $r^*$ lower bounded by the following:
\begin{equation}
    r^* \ge r_\baseline \exp\left[\left(\log 2 - \frac{\eta N^\alpha \phi(\alpha, T)}{K}\right)T\right] 
\end{equation}
where $r_\baseline := N^{-1}$ and $\phi(\alpha, T) := (1 - 2^{-\alpha T}) / (1 - 2^{-\alpha})$.
\end{theorem}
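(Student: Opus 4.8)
The plan is to write the reward achieved by a fixed action sequence $\{k_t\}$ in closed form, reduce the problem of maximizing it to the abstract optimization already solved in Lemma~\ref{lemma:optimal-action}, and then bound the resulting optimal value using Lemma~\ref{lemma:g-property}. First I would record the two deterministic facts the iteration produces: after $T$ halvings the final set satisfies $|S_T| \le N 2^{-T}$, and since the subsets are nested with $\vx^* \in S_0$ deterministically, the chain rule gives $P(\vx^* \in S_T) = \prod_{t=1}^T P(\vx^* \in S_t \mid \vx^* \in S_{t-1})$. Applying the $(\alpha,\eta)$-oracle bound at each step, together with the crude size bound $|S_{t-1}| \le N 2^{-(t-1)}$ (which only strengthens each factor, since $x \mapsto x^\alpha$ is increasing and enters with a negative sign in the exponent), yields $P(\vx^* \in S_t \mid \vx^* \in S_{t-1}) \ge 1 - \exp(-k_t 2^{\alpha(t-1)}/(\eta N^\alpha))$. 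Combining these two facts and using $1/|S_T| \ge 2^T/N$, I obtain $r \ge (2^T/N)\prod_{t=1}^T (1 - e^{-z_t})$, where I set $z_t := k_t 2^{\alpha(t-1)}/(\eta N^\alpha)$.

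Taking logarithms turns this into $\log r \ge T\log 2 - \log N + \sum_{t=1}^T \log(1 - e^{-z_t})$, and the budget constraint $\sum_t k_t = K$ becomes $\sum_t w_t z_t = K$ with weights $w_t := \eta N^\alpha 2^{-\alpha(t-1)}$. This is exactly the maximization solved by Lemma~\ref{lemma:optimal-action}: optimizing over $\{z_t\}$ (equivalently over $\{k_t\}$) makes the sum equal to $-\sum_{t=1}^T \log(1 + \lambda w_t)$, where $\lambda = g^{-1}(K)$. Since the maximum of a lower bound is a lower bound for the maximum, this gives $\log r^* \ge T\log 2 - \log N - \sum_{t=1}^T \log(1 + \lambda w_t)$.

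It remains to control the penalty term. Using $\log(1+x) \le x$ and the geometric sum $\sum_{t=1}^T w_t = \eta N^\alpha \sum_{t=1}^T 2^{-\alpha(t-1)} = \eta N^\alpha \phi(\alpha, T)$, I get $\sum_{t=1}^T \log(1+\lambda w_t) \le \lambda \eta N^\alpha \phi(\alpha, T)$. Lemma~\ref{lemma:g-property} then supplies $\lambda = g^{-1}(K) \le T/K$, so the penalty is at most $\eta N^\alpha \phi(\alpha, T)\, T / K$. Substituting back and exponentiating gives $r^* \ge N^{-1}\exp[(\log 2 - \eta N^\alpha \phi(\alpha,T)/K)T]$, which is the claim with $r_\baseline = N^{-1}$. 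I expect the only genuinely delicate part to be the setup in the first two paragraphs: choosing the change of variables $z_t = k_t 2^{\alpha(t-1)}/(\eta N^\alpha)$ so that the reward maximization aligns exactly with the weights $w_t$ of Lemma~\ref{lemma:optimal-action}, and verifying that replacing the realized set sizes by their worst-case values $N 2^{-(t-1)}$ (the largest they can be) yields a valid lower bound on $r$ rather than an upper bound. Once the problem is cast in the abstract form, the two lemmas do the real work and the closing estimates ($\log(1+x)\le x$, the geometric series summing to $\phi(\alpha,T)$, and $g^{-1}(K)\le T/K$) are routine.
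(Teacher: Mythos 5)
Your proposal is correct and takes essentially the same route as the paper's proof: the same chain-rule decomposition of $P(\vx^* \in S_T)$ with the oracle bound at each step, Lemma~\ref{lemma:optimal-action} to solve the budget-constrained maximization in the variables $z_t$, and then $\log(1+x)\le x$, the geometric sum giving $\phi(\alpha,T)$, and the bound $g^{-1}(K) \le T/K$ from Lemma~\ref{lemma:g-property}. The only (harmless) difference is ordering: you substitute the worst-case sizes $|S_{t-1}| \le N2^{-(t-1)}$ into the weights \emph{before} the optimization, so that $w_t = \eta N^\alpha 2^{-\alpha(t-1)}$ is deterministic, whereas the paper optimizes with $w_t = \eta|S_{t-1}|^\alpha$ and applies that same size bound afterwards --- both yield the identical final bound.
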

\begin{proof}
First note that $|S_T| \le |S_0| / 2^T$ and thus $\frac{1}{|S_T|} \ge 2^T / N$. So we just need to bound $P(\vx^* \in S_T)$, which can be written as:
\begin{eqnarray}
    P(\vx^* \in S_T) = \prod_{t=1}^T P(\vx^* \in S_t | \vx^* \in S_{t-1}) \ge \prod_{t=1}^T \left(1 - \exp\left(- \frac{k_t}{\eta |S_{t-1}|^\alpha}\right)\right)
\end{eqnarray}
Therefore we have
\begin{equation}
\log P(\vx^* \in S_T) \ge \sum_{t=1}^T \log\left(1 - \exp\left(- \frac{k_t}{\eta |S_{t-1}|^\alpha}\right)\right) 
\end{equation}
We want to find the action sequence $\{k_t\}$ so that $\log P(\vx^*\in S_T)$ is maximized. Let $w_t := \eta|S_{t-1}|^\alpha$ and $z_t := k_t / w_t$, applying Lemma~\ref{lemma:optimal-action}, and we know that 
\begin{equation}
    \max_{\{k_t\}} \log P(\vx^* \in S_T) \ge - \sum_{t=1}^T\log (1+ \lambda w_t) 
\end{equation}
where the Lagrangian multiplier $\lambda$ satisfies the equation $g(\lambda) = K$.

Now we have:
\begin{eqnarray}
    \sum_{t=1}^T\log (1+ \lambda w_t) &\stackrel{\circle{1}}{\le}& \sum_{t=1}^T\log \left(1+ \frac{T}{K} w_t\right) \\ 
    &\stackrel{\circle{2}}{\le}& \sum_{t=1}^T\log \left(1+ \frac{T}{K} \eta (N/ 2^{t-1})^\alpha\right) \\ 
    &\stackrel{\circle{3}}{\le}& \frac{\eta T N^\alpha}{K} \sum_{t=1}^T \frac{1}{2^{\alpha(t-1)}} \\
    &= & \phi(\alpha, T)\frac{\eta T N^\alpha}{K}  
\end{eqnarray}
Here $\circle{1}$ is due to Lemma~\ref{lemma:g-property} which tells that $\lambda = g^{-1}(K) \le T / K$, $\circle{2}$ is due to $w_t := \eta |S_{t-1}|^\alpha$ and $|S_{t-1}|\le N / 2^{t-1}$, and $\circle{3}$ due to $\log(1+x) \le x$.  

Putting all of them together, we know that 
\begin{equation}
    r^* \ge \max_{\{k_t\}} \frac{1}{|S_T|}P(\vx^* \in S_T) \ge \frac{2^T}{N}\exp\left(- \phi(\alpha, T)\frac{\eta T N^\alpha}{K}\right)  
\end{equation}
\end{proof}
\textbf{Optimal action sequence $\{k^*_t\}$}. From the proof, we could also write down the optimal action sequence that achieves the best reward: $k^*_t = w_t \log \left(1 + \frac{1}{\lambda w_t}\right)$, where $w_t := \eta |S_{t-1}|^\alpha$. Using Lemma~\ref{lemma:g-property}, we could compute the upper and lower bound estimation of $\lambda = g^{-1}(K)$. Here $\bar w := T \left(\sum_{t=1}^T w_t^{-1}\right)^{-1}$ be the Harmonic mean of $\{w_t\}$ and $w_{\max} := \max_{t=1}^T w_t$:
\begin{equation}
    \frac{\bar w^{-1}}{\exp(\bar w^{-1} K / T) - 1} \le \lambda \le \frac{w^{-1}_{\max}}{\exp(w^{-1}_{\max} K / T) - 1}
\end{equation}
With $\lambda$, we could compute approximate $\{k^*_t\}$. Here we make a rough estimation of $\{k^*_t\}$ if we terminate the algorithm when $|S_T|$ is still fairly large. This case corresponds to the setting $T = \beta \log_2 N$ where $\beta < 1$ and all $w_t \sim N^\alpha$. With $K = \Theta(N^\alpha)$ as in semi-parametric case, $\bar w^{-1} K = \Theta(1)$, $\exp(\bar w^{-1}K / T) - 1 \approx \bar w^{-1}K / T$ and $\lambda w_t \sim \log_2 N \gg 1$. Since $\log(1+x)\approx x$ for small $x$, we have $k^*_t \approx w_t \frac{1}{\lambda w_t} = 1/\lambda$, which is independent of $t$. Therefore, a constant amount of sampling at each stage is approximately optimal. 

\begin{observation}
\label{observation:isotropic-appendix}
If all $f_j$ are isotropic, $f_j(\vx) = \|\vx - \vc_j\|^2_2$, then $\Omega_P = \mathrm{ConvexHull}(\vc_1, \ldots, \vc_M)$.
\end{observation}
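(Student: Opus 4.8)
The plan is to characterize the Pareto optimal set $\Omega_P$ for isotropic quadratic objectives directly from the definition of dominance, showing that a point $\vx$ fails to be Pareto optimal precisely when it lies outside the convex hull of the centers $\vc_1, \ldots, \vc_M$. Recall that each $f_j(\vx) = \|\vx - \vc_j\|_2^2$, so the gradient is $\nabla f_j(\vx) = 2(\vx - \vc_j)$. A point $\vx$ is \emph{not} Pareto optimal iff there is a direction $\vd$ that simultaneously decreases (weakly, and strictly for at least one) all objectives, i.e. a feasible descent direction common to every $f_j$. Since the $f_j$ are smooth and convex, the first-order condition is the right tool: $\vd$ is a common descent direction iff $\nabla f_j(\vx)^\top \vd \le 0$ for all $j$ with strict inequality somewhere, i.e. $(\vx - \vc_j)^\top \vd \le 0$ for all $j$.

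First I would establish the ``easy'' inclusion $\mathrm{ConvexHull}(\vc_1, \ldots, \vc_M) \subseteq \Omega_P$. Suppose $\vx = \sum_j \theta_j \vc_j$ with $\theta_j \ge 0$, $\sum_j \theta_j = 1$. If some common descent direction $\vd$ existed with $(\vx - \vc_j)^\top \vd \le 0$ for all $j$, I would take the convex combination $\sum_j \theta_j (\vx - \vc_j)^\top \vd = (\vx - \sum_j \theta_j \vc_j)^\top \vd = 0$. Because each summand is $\le 0$ and the weighted sum is exactly $0$, every term with $\theta_j > 0$ must vanish, which forces the strict-decrease term to vanish too; this contradicts the existence of a strictly improving direction, so no dominating point exists and $\vx \in \Omega_P$. (I would phrase this cleanly via the KKT/first-order stationarity characterization of Pareto points for convex vector optimization, namely that $\vx \in \Omega_P$ iff $\vzero \in \mathrm{ConvexHull}\{\nabla f_j(\vx)\}$, i.e. iff $\vzero = \sum_j \theta_j (\vx - \vc_j)$ for some weights $\theta_j$ in the simplex, which rearranges to $\vx = \sum_j \theta_j \vc_j$.)

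For the reverse inclusion $\Omega_P \subseteq \mathrm{ConvexHull}(\vc_1, \ldots, \vc_M)$, I would prove the contrapositive: if $\vx \notin \mathrm{ConvexHull}(\vc_1, \ldots, \vc_M)$, then $\vx$ is dominated. By the separating hyperplane theorem applied to the point $\vx$ and the compact convex hull, let $\vp$ be the (unique) projection of $\vx$ onto the hull; then $\vd := \vp - \vx \neq \vzero$ satisfies $(\vc_j - \vp)^\top \vd \le \|\vd\|^2$-type separation giving $(\vx - \vc_j)^\top \vd < 0$ for every $j$ (the projection's obtuse-angle property yields $(\vc_j - \vp)^\top(\vx - \vp)\le 0$, hence $(\vc_j - \vx)^\top \vd = (\vc_j - \vp)^\top \vd + \|\vd\|^2 \ge \|\vd\|^2 > 0$). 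Moving a small step from $\vx$ toward $\vp$ along $\vd$ then strictly decreases every $f_j$, so the new point dominates $\vx$, proving $\vx \notin \Omega_P$.

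The cleanest route, which I would actually write, collapses both directions into the single stationarity equivalence: for this convex vector optimization problem $\vx \in \Omega_P$ iff $\vzero$ is in the convex hull of the gradients $\{2(\vx - \vc_j)\}_j$, which holds iff $\vx$ is a convex combination of the $\vc_j$. I expect the main obstacle to be stating and justifying this first-order Pareto characterization rigorously --- in particular handling the boundary/weak-domination subtleties (strict versus weak inequalities in the definition of $\prec_\vf$ given in the footnote) and confirming that the convex combination weights are genuinely in the probability simplex rather than merely nonnegative up to scaling. I would resolve this by normalizing the KKT multipliers (they cannot all be zero since the gradients are affine in $\vx$ and not simultaneously zero unless $\vx$ is a center) and by treating the degenerate case where $\vx$ coincides with some $\vc_j$ separately, where $f_j(\vx)=0$ is already globally minimal along that objective.
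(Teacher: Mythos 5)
Your proposal is correct, and its computational kernel is exactly the paper's: the equivalence between Pareto stationarity and $\vzero \in \mathrm{ConvexHull}\bigl(\{\nabla f_j(\vx)\}_{j=1}^M\bigr)$, i.e.\ $\sum_j \mu_j(\vx - \vc_j) = \vzero$ for simplex weights $\mu$, which rearranges to $\vx = \sum_j \mu_j \vc_j$. The genuine difference is in what gets proved versus assumed. The paper simply \emph{defines} $\Omega_P := \{\vx : \exists \mu \in \Delta,\ \nabla_\vx \sum_j \mu_j f_j(\vx) = \vzero\}$ and computes; it never connects this weighted-sum stationarity set to the dominance-based definition of the Pareto optimal set used in the body of the paper, and that connection is not free (with weights allowed to vanish, simplex-weighted stationarity in general characterizes only \emph{weak} Pareto points). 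You instead derive both inclusions from the dominance definition: for $\mathrm{ConvexHull}(\vc_1,\ldots,\vc_M) \subseteq \Omega_P$, the convex-combination-of-gradients contradiction --- which, to close the loophole you correctly flagged about strict versus weak inequalities, should invoke strict convexity: if $\vx' \neq \vx$ dominates $\vx$ then $f_j(\vx') > f_j(\vx) + \nabla f_j(\vx)^\top(\vx' - \vx)$ gives $(\vx - \vc_j)^\top(\vx' - \vx) < 0$ for \emph{every} $j$, so the identity $\sum_j \theta_j (\vx - \vc_j)^\top(\vx' - \vx) = 0$ is an immediate contradiction regardless of which $\theta_j$ vanish; and for $\Omega_P \subseteq \mathrm{ConvexHull}(\vc_1,\ldots,\vc_M)$, the projection/obtuse-angle argument, where your computation $(\vx - \vc_j)^\top \vd \le -\|\vd\|_2^2 < 0$ for $\vd = \vp - \vx$ is right and in fact shows every $f_j$ decreases strictly along the whole segment from $\vx$ to the projection $\vp$, so $\vp$ itself dominates $\vx$. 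What each approach buys: the paper's proof is a three-line calculation but is rigorous only modulo the standard convex-scalarization characterization of Pareto points, which it takes by fiat; your version is self-contained, and the projection-based reverse inclusion genuinely fills the gap, while strict convexity (uniqueness of the minimizer of $\sum_j \mu_j f_j$) disposes of the zero-weight/weak-Pareto subtlety that the paper's definition sweeps under the rug.
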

\begin{proof}
Consider $J(\vx;\mu) := \sum_{j=1}^M\mu_j f_j(\vx)$ where the weights $\mu_j \ge 0$ satisfies $\sum_{j}\mu_j = 1$. For brevity, we write the constraint as $\Delta := \{\mu: \mu_j \ge 0, \sum_j \mu_j = 1\}$.  

Now consider the Pareto Set $\Omega_P := \{\vx: \exists\mu\in \Delta:\ \nabla_\vx J(\vx; \mu) = 0\}$. We have the following:
\begin{eqnarray}
& & \nabla_\vx J(\vx;\mu) = 0 \\
& \Longleftrightarrow{} & \sum_{j}\mu_j \nabla_\vx f_j(\vx)=0 \\
& \Longleftrightarrow{} & \sum_{j}\mu_j(\vx - \vc_j) = 0 \\
& \Longleftrightarrow{} & \vx = \frac{\sum_{j}\mu_j \vc_j}{\sum_{j}\mu_j} = \sum_{j}\mu_j \vc_j
\end{eqnarray}
The last step is due to the fact that $\sum_{j}\mu_j = 1$. Therefore, for any $\vx \in \Omega_P$, $\vx$ is a convex combination of $\{\vc_1,\dots,\vc_M\}$ and thus $\vx \in \mathrm{ConvexHull}(\vc_1, \ldots, \vc_M)$. Conversely, for any $\vx \in \mathrm{ConvexHull}(\vc_1, \ldots, \vc_M)$, we know $\nabla_\vx J(\vx;\mu) = 0$ and thus $\vx \in \Omega_P$.  
\end{proof}

\begin{observation}
If $M = 2$ and $f_j(\vx) = (\vx - \vc_j)^\top H_j (\vx-\vc_j)$ where $H_j$ are positive definite symmetric matrices, then there exists $\vw_1 := H_2(\vc_2 - \vc_1)$ and $\vw_2:= H_1(\vc_1-\vc_2)$, so that for any $\vx \in \Omega_P$, $\vw_1^\top(\vx - \vc_1) \ge 0$ and $\vw_2^\top(\vx - \vc_2) \ge 0$. 
\end{observation}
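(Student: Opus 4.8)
The plan is to reuse the scalarization description of the Pareto set from the proof of Observation~\ref{obs:isotropic}: setting $J(\vx;\mu) := \mu_1 f_1(\vx) + \mu_2 f_2(\vx)$ with $\mu \in \Delta$, every $\vx \in \Omega_P$ is a stationary point $\nabla_\vx J(\vx;\mu) = 0$ for some weight $\mu = (t, 1-t)$ with $t \in [0,1]$. Since $\nabla_\vx f_j(\vx) = 2H_j(\vx - \vc_j)$, this stationarity condition reads
\begin{equation*}
t\, H_1(\vx - \vc_1) + (1-t)\, H_2(\vx - \vc_2) = 0 .
\end{equation*}
First I would abbreviate $\vu := \vx - \vc_1$ and $\vv := \vx - \vc_2$, so that $\vu - \vv = \vc_2 - \vc_1$ and, using $H_j = H_j^\top$, the two target quantities become $\vw_1^\top(\vx-\vc_1) = (\vu-\vv)^\top H_2 \vu$ and $\vw_2^\top(\vx-\vc_2) = (\vv-\vu)^\top H_1 \vv$.

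For the first inequality I would treat $t=1$ separately (it forces $H_1\vu = 0$, hence $\vx = \vc_1$ and the bound holds with equality) and otherwise use the stationarity condition to write $H_2\vv = -\tfrac{t}{1-t} H_1 \vu$. Then $\vv^\top H_2\vu = \vu^\top(H_2\vv) = -\tfrac{t}{1-t}\vu^\top H_1\vu$, so that
\begin{equation*}
\vw_1^\top(\vx-\vc_1) = \vu^\top H_2\vu - \vv^\top H_2\vu = \vu^\top H_2\vu + \frac{t}{1-t}\,\vu^\top H_1\vu \ge 0 ,
\end{equation*}
where nonnegativity follows because $H_1$ and $H_2$ are positive definite and $\tfrac{t}{1-t}\ge 0$. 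The second inequality is completely symmetric: for $t>0$ (with $t=0$ forcing $\vx=\vc_2$) the stationarity condition gives $H_1\vu = -\tfrac{1-t}{t}H_2\vv$, whence $\vw_2^\top(\vx-\vc_2) = \vv^\top H_1\vv + \tfrac{1-t}{t}\,\vv^\top H_2\vv \ge 0$.

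The only genuinely non-routine ingredient is the first-order characterization of $\Omega_P$ that I am borrowing from Observation~\ref{obs:isotropic}: that every Pareto-optimal point is a stationary point of some convex combination $J(\cdot;\mu)$, $\mu\in\Delta$. For strictly convex objectives — which quadratics with positive-definite Hessians are — stationary points of $J(\cdot;\mu)$ are exactly its global minimizers, and sweeping $\mu$ over $\Delta$ recovers $\Omega_P$; granting this, the remainder is the short positive-definite algebra above. Hence I expect no real obstacle beyond stating that characterization cleanly and being careful with the boundary weights $t\in\{0,1\}$, where one of the two centers $\vc_j$ is itself the stationary point.
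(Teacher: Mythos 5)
Your proof is correct, and it rests on the same key ingredient as the paper's: the first-order scalarization characterization $\sum_j \mu_j H_j(\vx-\vc_j)=0$ with $\mu\in\Delta$ of the Pareto set, which the paper likewise imports from the proof of Observation~1 (your explicit remark that strict convexity is what makes stationarity-over-the-simplex recover all of $\Omega_P$ is a point the paper glosses over). Where you genuinely diverge is the final algebra. The paper solves the stationarity equation explicitly for $\vx$, writing $\vx = \vc_2 + M^{-1}\mu_1 H_1(\vc_1-\vc_2)$ with $M := \mu_1 H_1 + \mu_2 H_2$, and then gets $\vw_2^\top(\vx-\vc_2) \ge 0$ from positive semidefiniteness of the congruence $H_1 M^{-1} H_1$. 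You never invert anything: you use stationarity to trade the cross term $\vu^\top H_2 \vv$ for $-\tfrac{t}{1-t}\,\vu^\top H_1 \vu$, reducing each inequality to a nonnegative combination of the two positive-definite quadratic forms, e.g.\ $\vu^\top H_2\vu + \tfrac{t}{1-t}\,\vu^\top H_1\vu \ge 0$. Your route is more elementary (no matrix inversion, no congruence argument) at the cost of an explicit case split at the boundary weights $t\in\{0,1\}$, which the paper's inversion handles uniformly because $M$ is positive definite everywhere on the simplex; conversely, the paper's decomposition $\vx = \vc_k + M^{-1}\sum_{j\neq k}\mu_j H_j(\vc_j - \vc_k)$ is set up for an arbitrary number of objectives before specializing to two, whereas your division by $t$ or $1-t$ is tailored to the two-objective case. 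Incidentally, your computation sidesteps a small slip in the paper's display, which drops the factor $\mu_1$ from the quadratic term $(\vc_1-\vc_2)^\top H_1 M^{-1} H_1 (\vc_1-\vc_2)$ — harmless there since $\mu_1 \ge 0$, but your version avoids it altogether.
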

\begin{proof}
Following Observation~\ref{obs:isotropic}, similarly we have for all $x\in\Omega_P, \sum_{j}\mu_j H_j(\vx - \vc_j) = 0$, which gives:
\begin{equation}
 \vx = \left(\sum_{j}\mu_j H_j\right)^{-1}\sum_{j}\mu_j H_j \vc_j  \label{eq:x-appendix} 
\end{equation}
Note that this is an expression of the Pareto Set $\Omega_P$. 

Let $A_{j} := (\sum_{j}\mu_j H_j)^{-1}\mu_j H_j$. Then $\sum_{j}A_{j}=I$. Note that while $\sum_{j}\mu_j H_j$ and $(\sum_{j}\mu_j H_j)^{-1}$ are positive definite matrix. $A_j$ may not be. 

Let $M := \sum_{i}\mu_iH_i$. Since $\mu \in \Delta$, $M$ is a PD matrix. Note that we have 
\begin{eqnarray}
\sum_{j}\mu_jH_j\vc_j &=& \sum_{j}\mu_jH_j\vc_j - \sum_{j}\mu_jH_j\vc_k + \sum_{j}\mu_jH_j\vc_k \\
&=& \sum_{j\neq k}\mu_j H_j (\vc_j - \vc_k) + M\vc_k
\end{eqnarray}
Using Eqn.~\ref{eq:x-appendix}, we know that $\vx = M^{-1}\sum_{j}\mu_j H_j \vc_j  = \vc_{k} + M^{-1}\sum_{j\neq k}\mu_jH_j(\vc_j - \vc_k)$. 

For $M = 2$, we have $\vx = \vc_2 + M^{-1}\mu_1H_1(\vc_1 - \vc_2)$. So we have
\begin{eqnarray}
(\vc_1 - \vc_2)^\top H_1 \vx &=& (\vc_1 - \vc_2)^\top H_1 \vc_2 + (\vc_1 - \vc_2)^\top H_1M^{-1}H_1(\vc_1 - \vc_2) \\
& \ge & (\vc_1 - \vc_2)^\top H_1 \vc_2
\end{eqnarray}
This is because $(\vc_1 - \vc_2)H_1M^{-1}H_1(\vc_1 - \vc_2) \geq 0$ since $H_1M^{-1}H_1$ is a PSD matrix. Therefore, let $\vw_2 := H_1 (\vc_1 - \vc_2)$ and we have $\vw_2^\top (\vx - \vc_2) \ge 0$, which is independent of $\mu \in \Delta$. This means it holds for any $\vx \in \Omega_P$. 

Let $\vw_1 = H_2(\vc_2 - \vc_1)$, then similarly we have $\vw_1^\top (\vx - \vc_1) \ge 0$ for all $\vx \in \Omega_P$. 
\end{proof}

\section{Quality Indicators Comparison}
\label{app:qi}
\begin{table}[H]
\centering
\caption{The review of different scalarizing methods.}
\label{tab:scalarizing_review}
\small
\begin{tabular}{|l|c|c|c|c|}
\hline
Quality Indicator & Convergence & Uniformity & Spread & No reference set required \\ \hline
HyperVolume       & $\surd$        & $\surd$       & $\surd$   & $\surd$                      \\ \hline
GD                & $\surd$            &        &        &                           \\ \hline
IGD               & $\surd$        & $\surd$       & $\surd$   &                           \\ \hline
MS                &             &            & $\surd$   &                           \\ \hline
S                 &             & $\surd$       &        &                           \\ \hline
ONVGR             & $\surd$        &            &        &                           \\ \hline
\end{tabular}
\end{table}

Generational Distance(GD)~\citep{gd} measures the distance the pareto frontier of approximation samples and true pareto frontier, which requires prior knowledge of true pareto frontier and only convergence is considered. IGD~\citep{igd} is improved version of GD. IGD calculates the distance the points on true pareto frontier to the closest point on pareto frontier of current samples. Inverted Generational Distance(IGD) satisfies all three evaluation metrics of QI but requires true pareto frontier which is hardly to get in real-world problem. Maximum Spread(MS)~\citep{maxspread} computes the distance between the farthest two points of samples to evaluate the spread. Spacing(S)~\citep{spacing} measures how close the distribution of pareto frontier of samples is to uniform distribution. Overall Non-dominated Vector Generation and Ratio(ONVGR) is the ratio of number of samples in true pareto-frontier. The table~\ref{tab:scalarizing_review} demonstrates the good characteristics of each quality indicators.

\section{End-to-end LaMOO Pseudocode}
Below we list the pseudocode for the end-to-end workflow of LaMOO in Algorithm~\ref{alg:lamoo_full}. Specfically, it includes search space partition in \textbf{Function Split}. Node(promising region) selection in \textbf{Function Select}, and new samples generation in \textbf{Function Sample}. 

\algdef{SE}[SUBALG]{Indent}{EndIndent}{}{\algorithmicend\ }%
\algtext*{Indent}
\algtext*{EndIndent}


\begin{algorithm}[H]
    \small
	\caption{\ours{} Pseudocode.}
	\label{alg:lamoo_full}
	\begin{algorithmic}[1]
	\State {\bfseries Inputs:} Initial $D_0$ from uniform sampling, sample budget $T$.
	\For{$t = 0, \dots, T$}
	\State Set $\mathcal{L} \leftarrow \{\Omega_\root\}$ (collections of regions to be split).

    \State $\mathcal{V}, v, n \leftarrow \textbf{Split}$($\mathcal{L}$, $D_t$)
    \State $k \leftarrow $ $\textbf{Select}$($\mathcal{C}_{p}$, $D_t$)
    \State $D_{t+1} \leftarrow$ $\textbf{Sample}$($k$)

	\EndFor\\
	
	\State {\bfseries Function} $\textbf{Split}$($\mathcal{L}$, $D_t$)
	\Indent
    \While{$\mathcal{L} \neq \emptyset$}
	\State $\Omega_j \leftarrow \mathrm{pop\_first\_element}(\mathcal{L}),\ \  D_{t,j} \leftarrow D_t \cap \Omega_j, \ \ n_{t,j} \leftarrow |D_{t,j}|$. 
	\State Compute dominance number $o_{t,j}$ of $D_{t,j}$ using Eqn.~\ref{eq:dominance} and train SVM model $h(\cdot)$.
	\State \textbf{If} $(D_{t,j}, o_{t,j})$ is splittable by SVM, \textbf{then} $\mathcal{L} \leftarrow \mathcal{L} \cup \mathrm{Partition}(\Omega_j, {h(\cdot)})$.
	
	\EndWhile
	\EndIndent

	\State
    \State {\bfseries Function} $\textbf{Select}$($C_p$, $D_t$)
    \Indent
    \For{$k = \root$, $k$ is not leaf node}
	    \State $D_{t,k} \leftarrow D_t \cap \Omega_k, \ \ v_{t,k} \leftarrow \mathrm{HyperVolume}(D_{t,k}),\ \ n_{t,k} \leftarrow$ $|D_{t,k}|$.
	    \State $k \leftarrow \displaystyle\arg\max_{c\ \in \ \mathrm{children}(k)} \mathrm{UCB}_{t,c}$, where $\mathrm{UCB}_{t,c} := v_{t,c} + 2 C_p \sqrt{\frac{2\log(n_{t,k}}{n_{t,c}}}$
	\EndFor
	\State \textbf{return} $k$
    \EndIndent
    \State
    \State {\bfseries Function} $\textbf{Sample}$($k$)
    \Indent
    \State $D_{t+1} \leftarrow$ $D_{t} \cup D_{\mathrm{new}}$, where $D_{\mathrm{new}}$ is drawn from $\Omega_{k}$ based on qEHVI or CMA-ES.  
    \State \textbf{return} $D_{t} \cup D_{new}$
    \EndIndent
  \end{algorithmic}
\label{alg:lamoo_full}
\end{algorithm}

\section{Exploration factor($C_p$) setup with unknown maximum Hypervolume}

\begin{figure}[H]

\centering 
\subfloat[][BraninCurrin]{\includegraphics[height=1.33in]{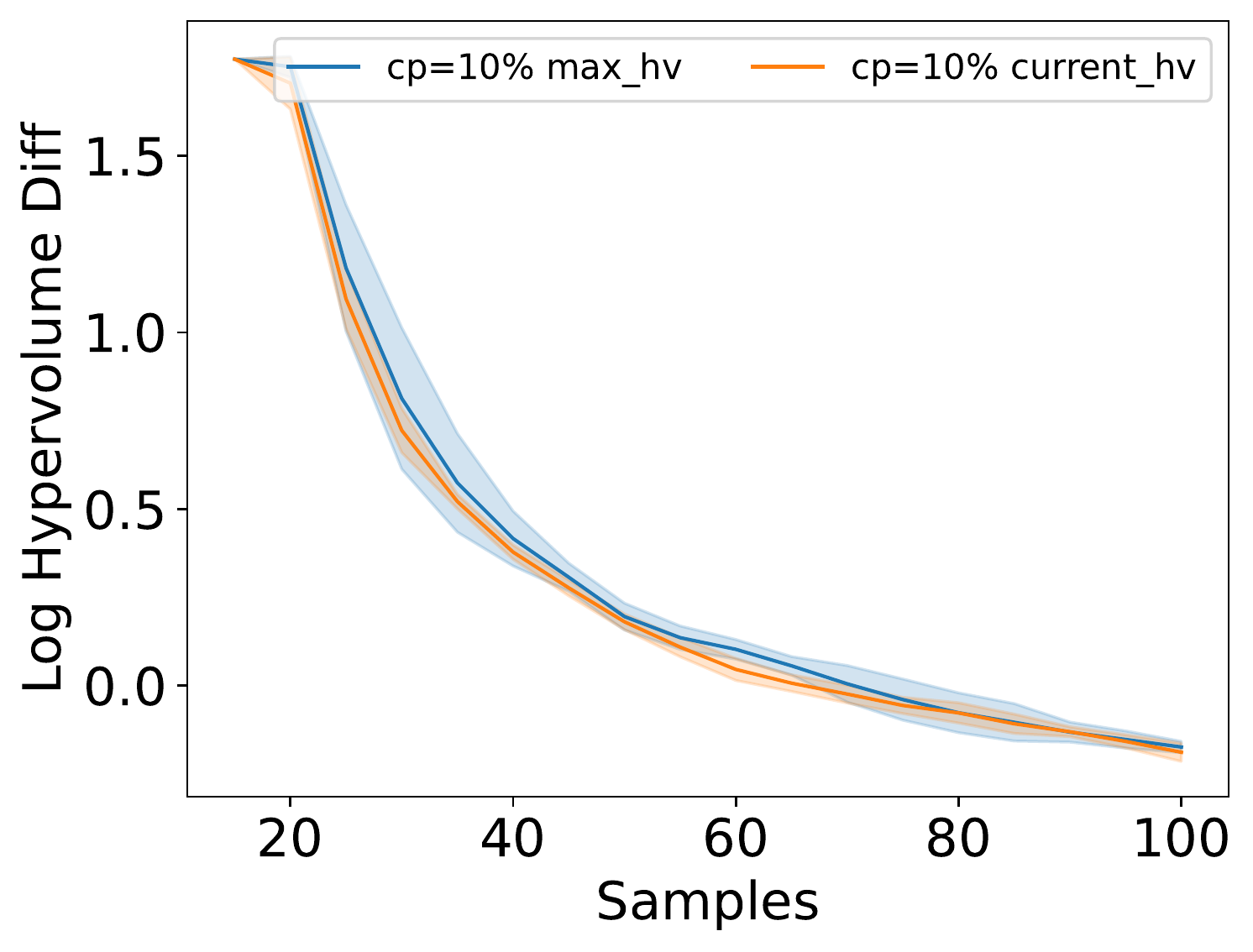}}
\subfloat[][VehicleSafety]{\includegraphics[height=1.33in]{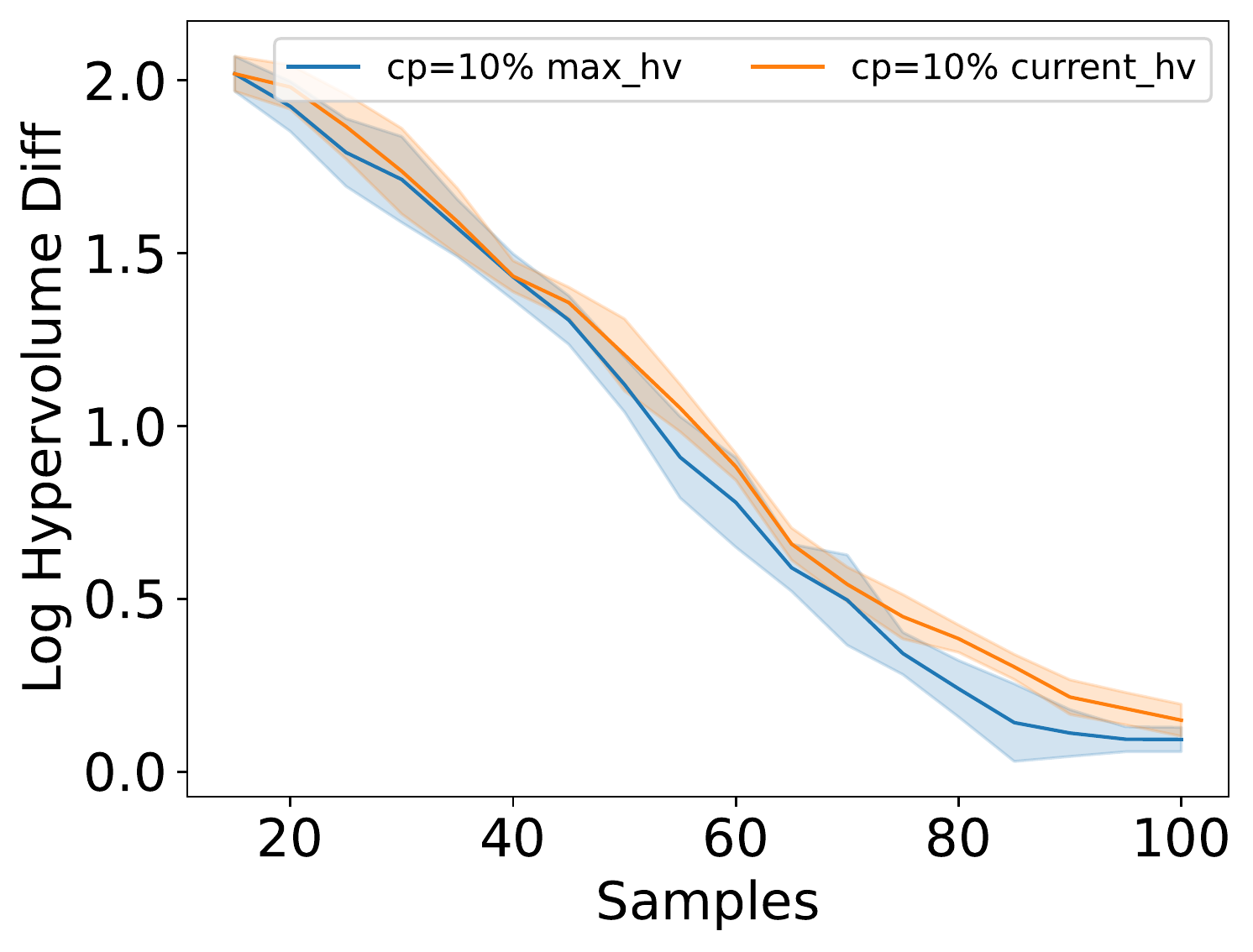}}  
\subfloat[][Nasbench201]{\includegraphics[height=1.33in]{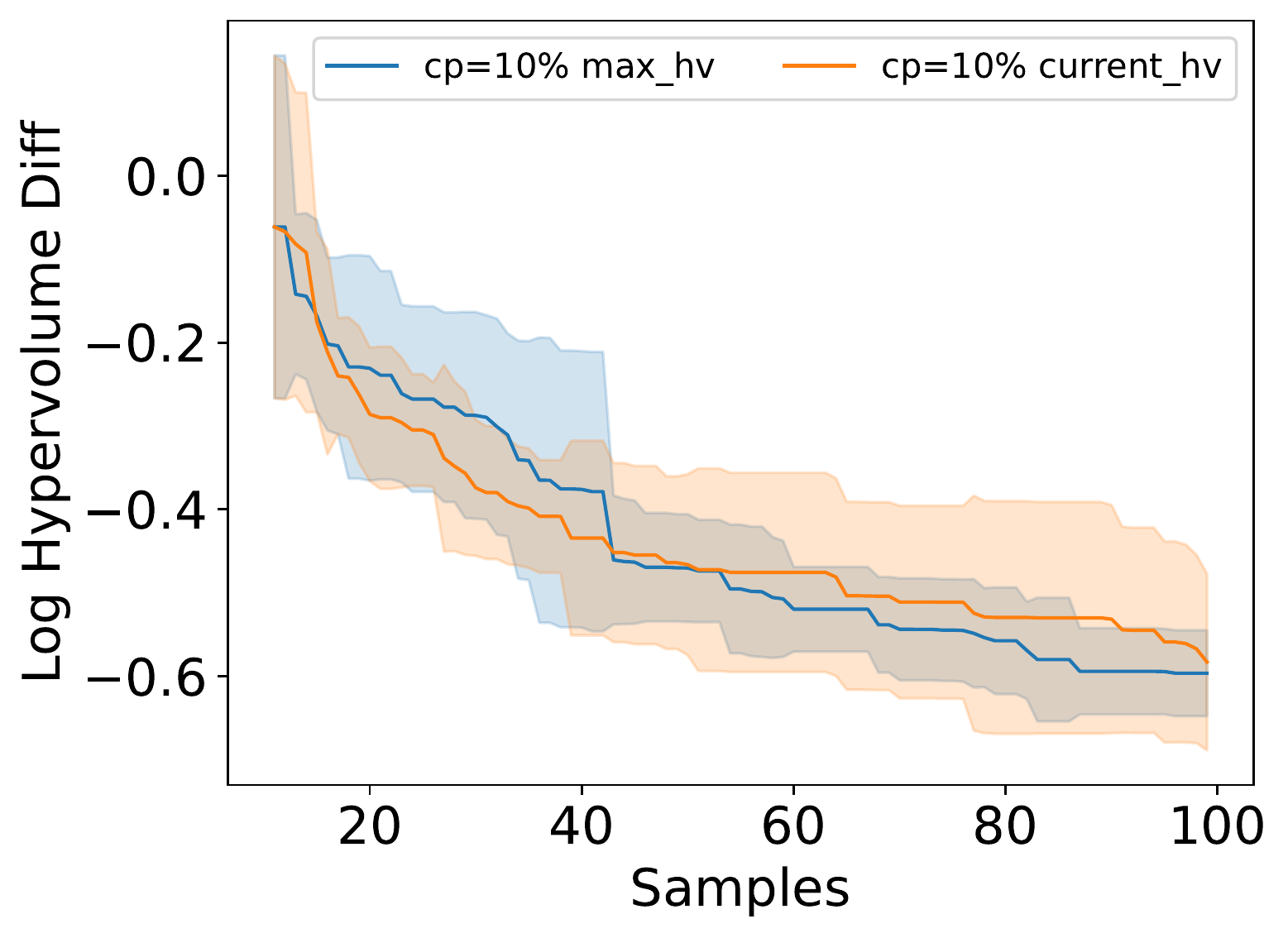}}  
\quad
\caption{\small Sampling with static $C_p$(10\% of $HV_{max}$) and dynamic $C_p$((10\% of $HV_{current}$))}
\label{fig:cp_test}
\end{figure}

As we mentioned in the paper, a "rule of thumb" is to set the $C_p$ to be roughly 10\% of the maximum hypervolume HVmax. If HVmax is unknown, $C_p$ can be dynamically set to 10\% of the hypervolume of current samples in each search iteration. The figures below demonstrate the difference between 10\% HVmax and 10\% current hypervolume in three problems(Branin-Currin, VehicleSafety, and  Nasbench201 from left to right). The final performances by using 10\% HVmax and 10\% current hypervolume are similar.

\section{Wall clock time in different problems}

\begin{figure}[H]

\centering 
\subfloat[][BraninCurrin]{\includegraphics[height=1.33in]{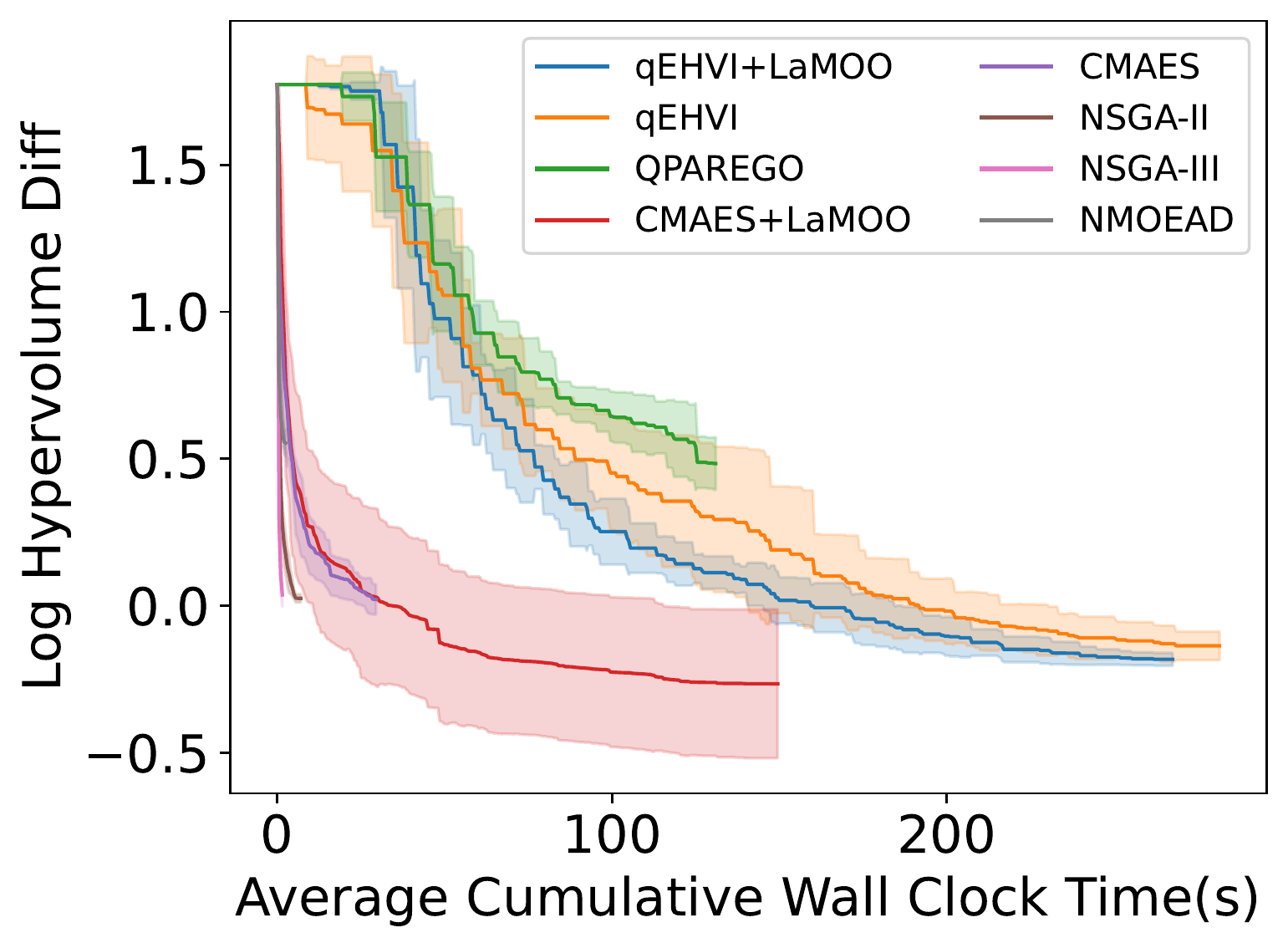}}
\subfloat[][VehicleSafety]{\includegraphics[height=1.33in]{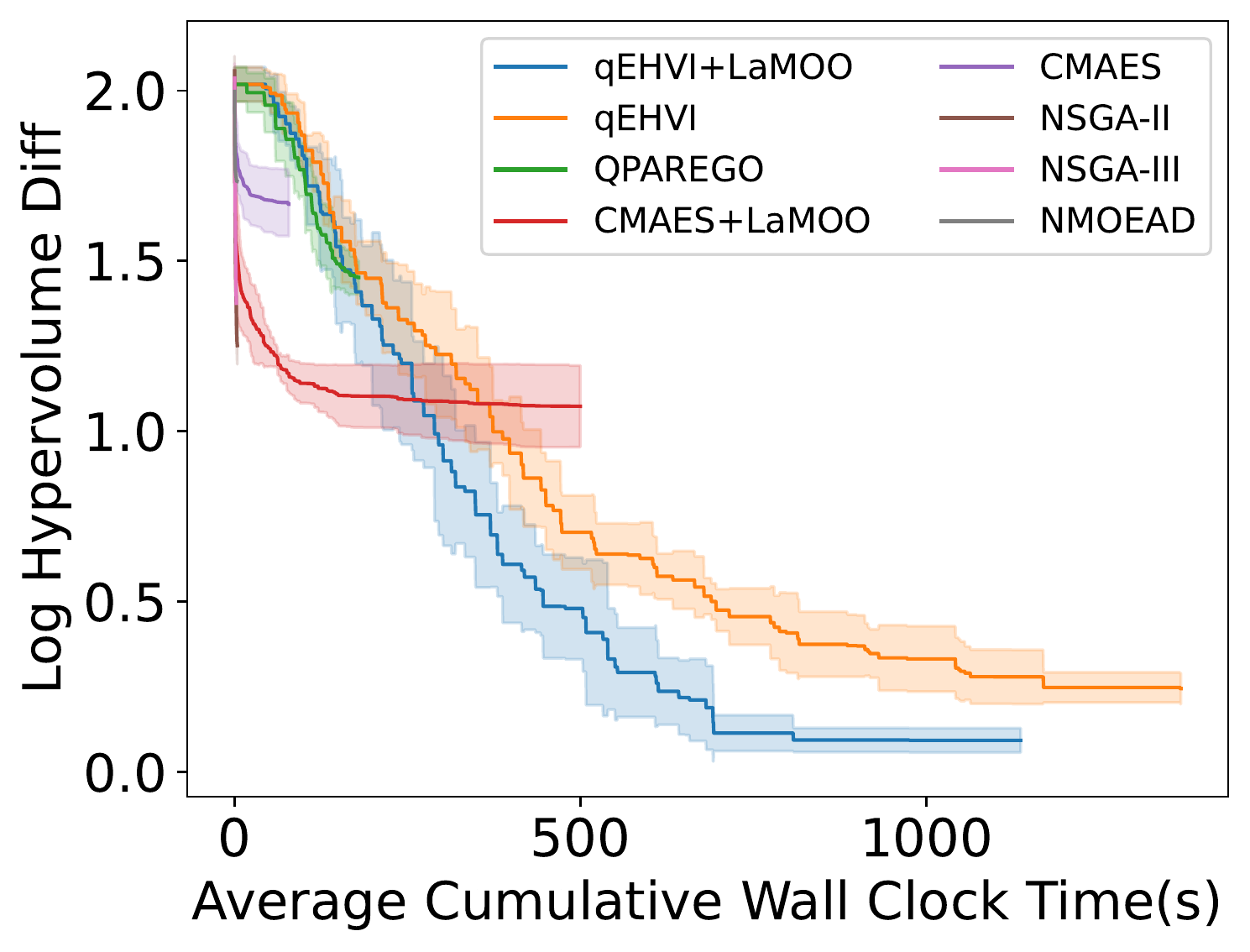}}  
\subfloat[][Nasbench201]{\includegraphics[height=1.33in]{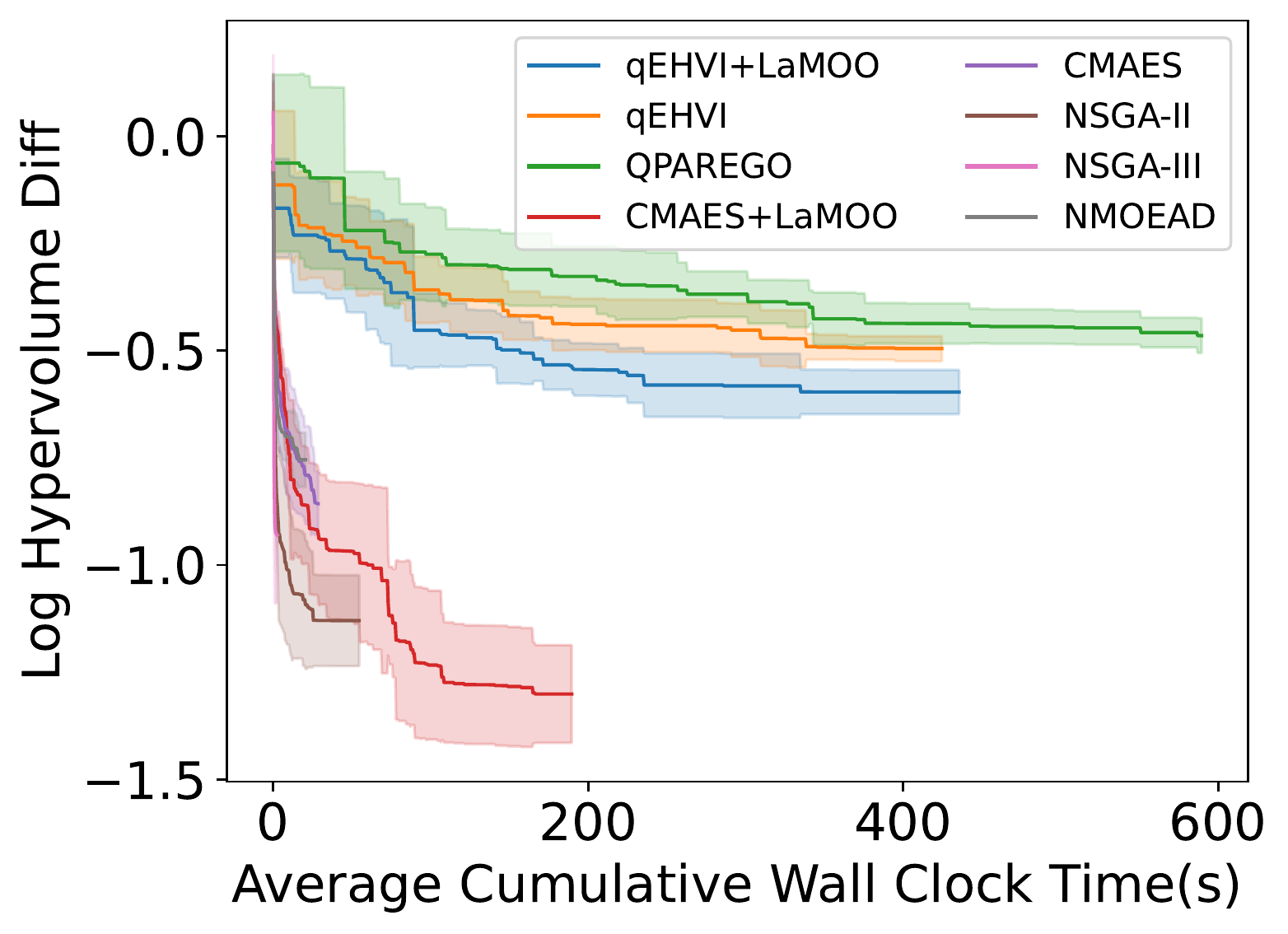}}  
\quad
\caption{\small  Wall clock time in different problems}
\label{fig:wall_clock}
\end{figure}

Fig.~\ref{fig:wall_clock} shows the wall clock time of different search algorithms in BraninCurrin\citep{bc_func}, VehicleSafefy~\citep{vehicle_safety} and Nasbench201~\citep{nasbench201}.

\section{Details of Benchmark Problems}
\subsection{Problem description}
\textbf{BraninCurrin~\citep{bc_func}}:
\begin{align*}
f^{(1)}(x_1, x_2) &= (15x_2 - \frac{5.1 * (15x_1 - 5)^2} {4\pi^2} + \frac{75x_1 - 25}{\pi} - 5)^2 + (10 - \frac{10}{8\pi}) * \cos(15x_1 - 5)\\
f^{(2)}(x_1, x_2) &= \bigg[1 - \exp\bigg(-\frac{1} {(2x_2)}\bigg)\bigg] \frac{2300 x_1^3 + 1900x_1^2 + 2092 x_1 + 60}{100 x_1^3 + 500x_1^2 + 4x_1 + 20}\\
\end{align*}
where $x_1, x_2 \in [0,1]$.

\textbf{VehicleSafefy~\citep{vehicle_safety}}:
\begin{align*}
    f_1(\bm x) &=1640.2823 + 2.3573285x_1 + 2.3220035x_2 + 4.5688768x_3 + 7.7213633x_4 + 4.4559504x_5\\
    f_2(\bm x) &= 6.5856+ 1.15x_1 - 1.0427x_2 + 0.9738x_3+ 0.8364x_4 - 0.3695x_1x_4 + 0.0861x_1x_5\\
    &+ 0.3628x_2x_4 +  0.1106x_1^2 - 0.3437x_3^2 + 0.1764x_4^2\\
    f_3(\bm x) &= -0.0551 + 0.0181x_1 + 0.1024x_2 + 0.0421x_3 - 0.0073x_1x_2 + 0.024x_2x_3-0.0118x_2x_4\\ 
    &- 0.0204x_3x_4 - 0.008x_3x_5 - 0.0241x_2^2 + 0.0109x_4^2\\
\end{align*}
where $\bm x \in [1,3]^5$.

\textbf{Nasbench201~\citep{nasbench201}}:

\begin{figure}[ht]
\centering 
\includegraphics[width=0.80\columnwidth]{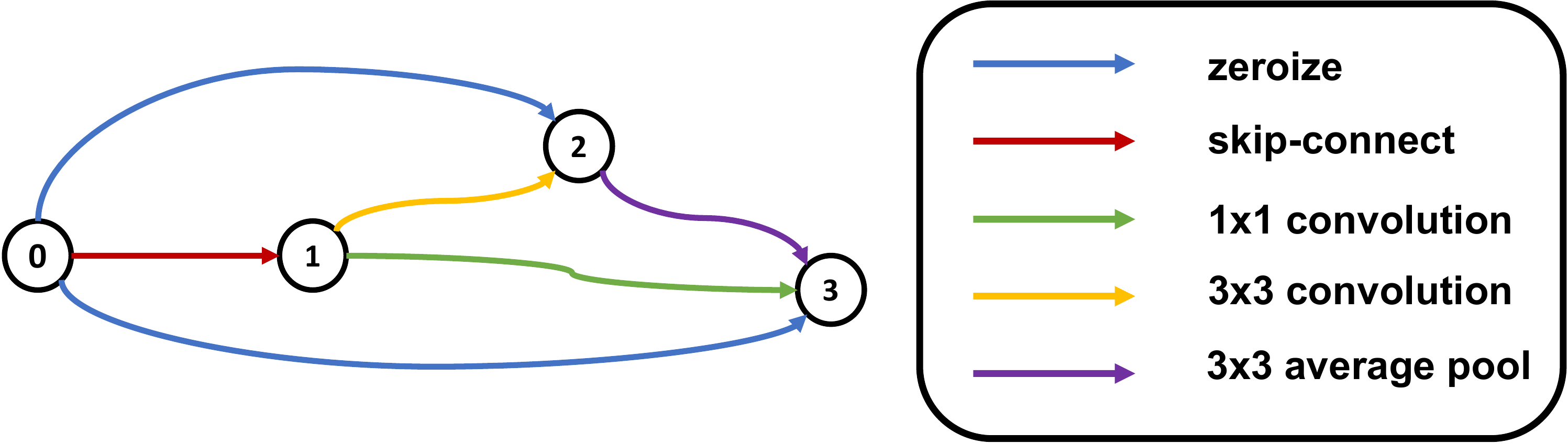}
\caption{\small A general architecture of Nasbench201}
\label{fig:nasbench201_arch}
\end{figure}

In Nasbench201, the architectures are made by stacking the cells together. The difference among architectures in Nasbench201 is the design of the cells, see fig~\ref{fig:nasbench201_arch}. Specifically, each cell contains 4 nodes, and there is a particular operation connecting to two nodes including zeroize, skip-connect, 1x1 convolution, 3x3 convolution, and 3x3 average pooling. Therefore, there are $C^{2}_{4}$ = 6 edges in a cell and ${5^6}$ =15625 unique architectures in Nasbench201. According to this background, Each architecture can be encoded into a 6-dimensional vector with 5 discrete numbers (i.e., 0, 1, 2, 3, 4 that corresponds to zeroize, skip-connect, 1x1 convolution, 3x3 convolution, and 3x3 average pooling). 
\begin{align*}
    f_1(\bm x) &= Accuracy{(\bm x)}\\
    f_2(\bm x) &= \#FLOPs{(\bm x)}\\
\end{align*}
where $\bm x \in \{0, 1, 2, 3, 4\}^6$.

\textbf{DTLZ2~\citep{dtlz}}:
\begin{align*}
    f_1(\bm x) &= (1+ g(\bm x_M))\cos\big(\frac{\pi}{2}x_1\big)\cdots\cos\big(\frac{\pi}{2}x_{M-2}\big) \cos\big(\frac{\pi}{2}x_{M-1}\big)\\
    f_2(\bm x) &= (1+ g(\bm x_M))\cos\big(\frac{\pi}{2}x_1\big)\cdots\cos\big(\frac{\pi}{2}x_{M-2}\big) \sin\big(\frac{\pi}{2}x_{M-1}\big)\\
    f_3(\bm x) &= (1+ g(\bm x_M))\cos\big(\frac{\pi}{2}x_1\big)\cdots\sin\big(\frac{\pi}{2}x_{M-2}\big)\\
    \vdots\\
     f_M(\bm x) &= (1+ g(\bm x_M))\sin\big(\frac{\pi}{2}x_1\big)
\end{align*}
where $g(\bm x) = \sum_{x_i \in \bm x_M} (x_i - 0.5)^2, \bm x \in [0,1]^d,$ and $\bm x_M$ represents the last $d - M +1$ elements of $\bm x$.

\subsection{Visualization of Pareto-Frontier for Benchmark Problems}
\begin{figure}[H]

\centering 
\subfloat[][BraninCurrin]{\includegraphics[height=1.33in]{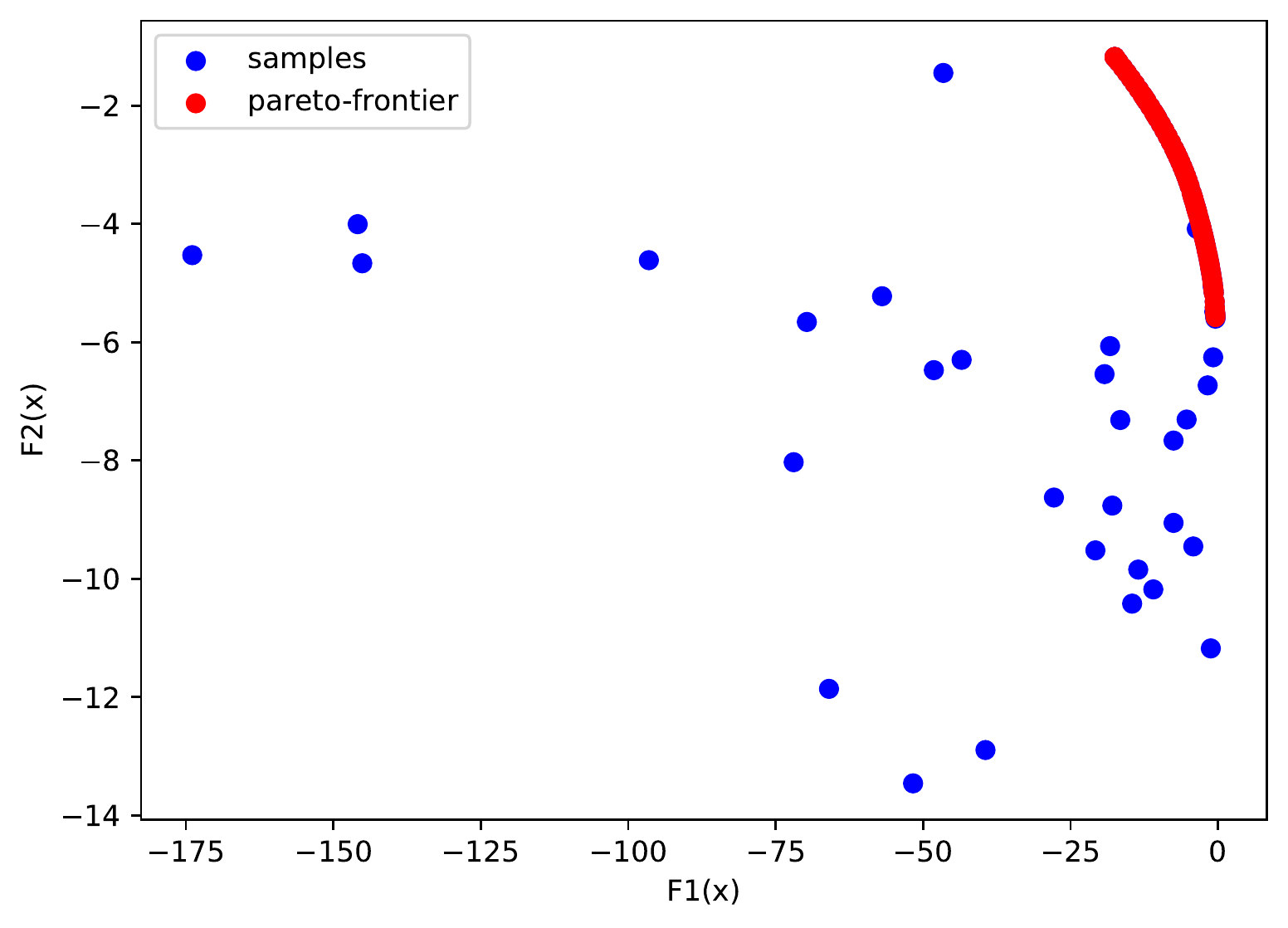}}
\subfloat[][VehicleSafety]{\includegraphics[height=1.56in]{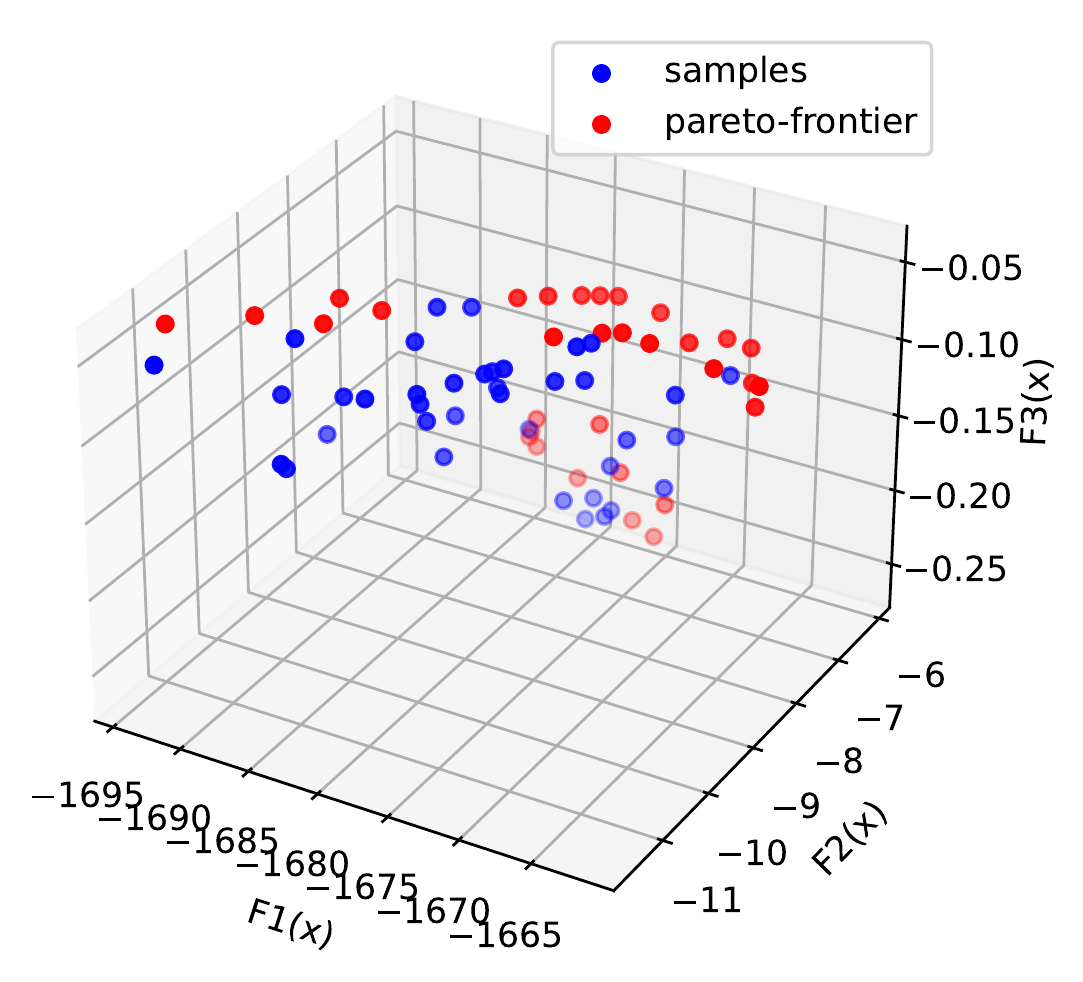}}  
\subfloat[][Nasbench201]{\includegraphics[height=1.33in]{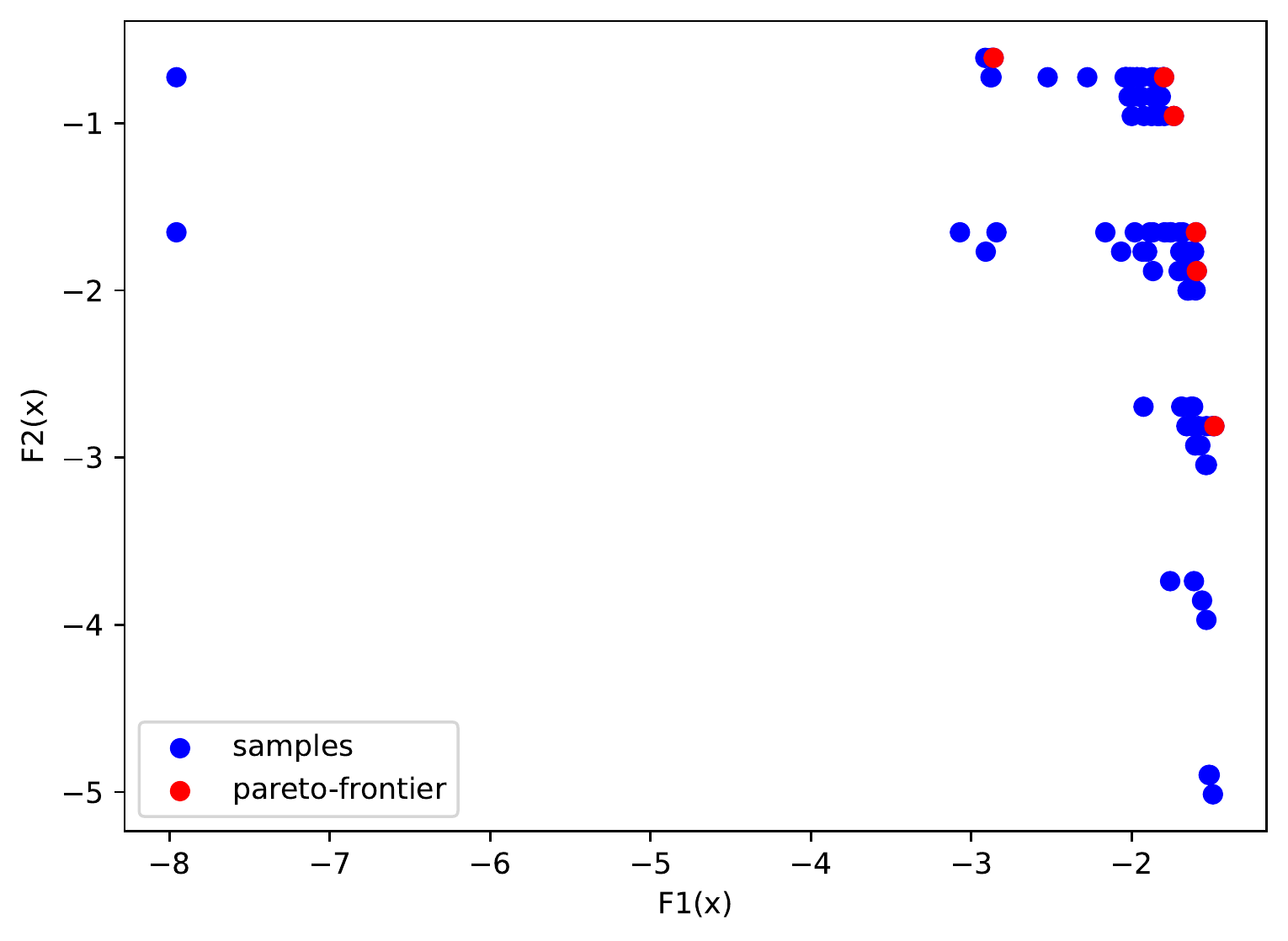}}  
\quad
\caption{\small  Visualization of Pareto-frontier in BraninCurrin, VehicleSafety as well as Nasbench201.}
\label{fig:pareto}
\end{figure}

\subsection{Reference Points}

\begin{table}[H]

\centering

\begin{tablenotes}
    \centering
     \item $^\dagger$: $M$ represents the number of objectives.
\end{tablenotes}
\begin{tabular}{|c|c|}
\hline
Problem            & Reference Point                         \\ \hline
BraninCurrin       & (18.0, 6.0)                             \\ \hline
VehicleSafety      & (1864.72022, 11.81993945, 0.2903999384) \\ \hline
Nasbench201        & (-3.0, -6.0)                            \\ \hline
DTLZ2              & (1.1, .., 1.1) $\in \mathbb {R^M} ^{\dagger}$                         \\ \hline
Molucule Discovery & (0.0, ..., 0.0) $\in \mathbb {R^M} ^{\dagger}$                       \\ \hline
\end{tabular}
\caption{ The reference points for all problems in this work.}
\label{tab:ref}
\end{table}

The reference point $R \in \rr^M$ is defined to measure the hypervolume of a problem. Different reference point would result in a different hypervolume. The details can be found at sec ~\ref{sec:intro}.
Table~\ref{tab:ref} elaborates the reference points in the problems throughout the paper. 

\subsection{Maximum Hypervolume of each problem}

\begin{table}[H]
\centering
\begin{tabular}{|c|c|}
\hline
Problem              & Maximum Hypervolume \\ \hline
BraninCurrin         & 59.36011874867746   \\ \hline
VehicleSafety        & 246.81607081187002  \\ \hline
Nasbench201          & 8.06987476348877    \\ \hline
DTLZ2(2 objectives)  & 1.4460165933151778  \\ \hline
DTLZ2(10 objectives) & 2.5912520655298095  \\ \hline
Molucule Discovery   & N/A                 \\ \hline
\end{tabular}
\caption{ The maximum hypervolume for all problems in this work.}
\label{tab:maxhv}
\end{table}

Table~\ref{tab:maxhv} elaborates the observed maximal hypervolume in the problems throughout the paper. We used these value to calculate the log hypervolume difference in fig~\ref{fig:small_problems} and  fig~\ref{fig:DTLZ2}.

\section{Experiments Setup}

Experiment details: For small-scale problems(i.e. Branin-Currin, VehicleSafety, and  Nasbench201) and DTLZ2 with 2 and 10 objectives. We randomly generate 10 samples as the initialization. For multi-objective molecule discovery, the number of initial samples is 150. In each iteration, we update 5 batched samples(q value) for all search algorithms. 

Hyperparameters of LAMOO: For all problems, we leverage polynomials as the kernel type of SVM and the degree of the polynomial kernel function is set to 4. The minimum samples in the leaf of MCTS is 10. The cp is roughly set to 10\% of maximum of hypervolume(i.e. Branin-Currin -> 5, VehicleSafety -> 20, Nasbench201 -> 6, DTLZ2(2 objectives) -> 0.1, DTLZ2(10 objectives) -> 0.25,  molecule discovery(2 objectives) -> 0.03, molecule discovery(3 objectives) -> 0.2, molecule discovery(4 objectives) -> 0.06). 

Hyperparameters of qEHVI and qParEGO: The number of q is set to 5.  The acquisition function is optimized with L-BFGS-B (with a maximum of 200 iterations). In each iteration, 256 raw samples used for the initialization heuristic are generated to be selected by the acquisition function. In original work~\cite{qehvi}, they used 1024 raw samples but we decrease this number to 256 to sample budget of all methods for comparison, which speeds up the search but may lead to lower performance such as vehiclesafty problem in fig.~\ref{fig:small_problems}. As the same claim in Daulton et al. (2020), each generated sample is modeled with an independent Gaussian process with a Matern 5/2 ARD kernel.

\section{Verification of LAMOO on Many-objective Problems}

\begin{figure}[ht]

\centering 
\includegraphics[width=1.00\columnwidth]{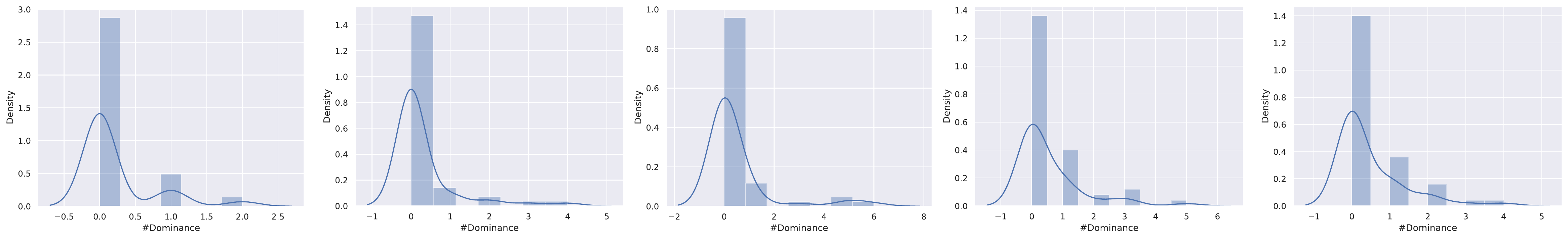}
\vspace{-0.1in}
 \caption{\small  Dominance number distribution with 50 random samples on DTLZ2(10 objectives)}
\label{fig:many_test1}
\end{figure}

\begin{figure}[ht]

\centering 
\includegraphics[width=0.50\columnwidth]{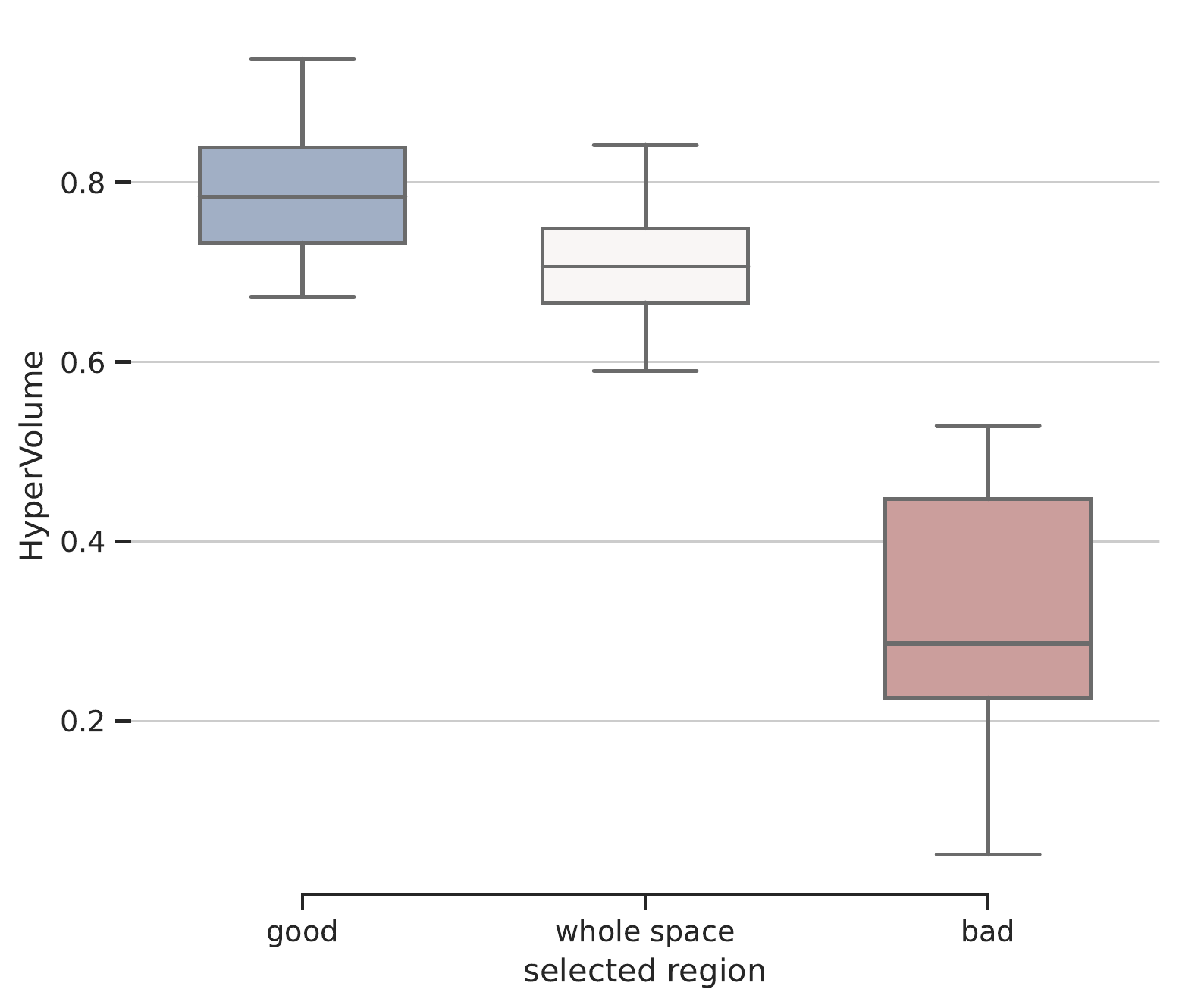}
\vspace{-0.1in}
 \caption{\small  The range of hypervolume for 50 samples randomly generated from different regions in DTLZ2(10 objectives). We generate 25 times of 50 samples in total. }
\label{fig:many_test2}
\end{figure}

While it is theoretically hard to label the samples into good and bad based on their dominance number in many-objective problems due the the lack of dominance pressure(All samples are non-dominated with each other). If number of objective is not too large(i.e. $M\leq 10$), the samples can be still split by dominance number. Given the problem(DTLZ2 with 10 objectives) shown in fig~\ref{fig:DTLZ2}, we randomly generate 50 samples in the search space and draw the dominance distribution of them(see fig~\ref{fig:many_test1}). We did this experiment 5 times.

We then partition the search space by a SVM classifier based on the labeled samples into “good” and “bad”, and randomly generate 50 samples in “good region”, “bad region”, and whole space, respectively. We did this process 5 times with different initial samples. Fig.~\ref{fig:many_test2} shows the range of hypervolume of the samples generated from good regions, the whole space, and bad regions.  From the figure, we can see that the hypervolume of samples generated from good regions are significantly higher than others.

\section{Computational complexity analysis of ~\ours{}}

Here is a detailed breakdown of the computational complexity of our algorithm (Alg.~\ref{alg:lamoo})

Line.6: Compute dominance: $O(M{N_{node}}^{2})$ where $N_{node}$ is the number of samples in the node and $M$ is the number of dimensions. 

Line 7: Time complexity of SVM : $O({N_{node}}^{2})$ where $N_{node}$ is the number of samples in the node.

Line 10: Hypervolume: $O(N^{\frac{M}{2}} + N\log{N})$($M>3$)~\citep{hv_comp1} or $O(N\log{N})$($M\leq3$)~\citep{hv_comp2}, where $N$ is number of searched samples in total and $D$ is the number of dimensions. 

Total time complexity: $\sum_{i=1}^{t} O({N_{i}}^{2})$($M<3$), where $t$ is the total number of nodes. $\sum_{i=1}^{t} O(N^{\frac{M}{2}} + N_i\log{N_i})$ ($M>3$), where $t$ is the total number of nodes. 

When there are more than 3 objectives ($M>3$), HV computation is the dominant factor. When $M\le 3$, the optimization cost of SVM is the dominant factor.

\section{Variation of ~\ours{} with a cheaper overhead}
\algdef{SE}[SUBALG]{Indent}{EndIndent}{}{\algorithmicend\ }%
\algtext*{Indent}
\algtext*{EndIndent}


\def\root{\mathrm{root}}

\begin{algorithm*}[h]
    \small
	\caption{ \ours{} Pseudocode with leaf based selection.}
	\begin{algorithmic}[1]
	\State {\bfseries Inputs:} Initial $D_0$ from uniform sampling, sample budget $T$.
	\For{$t = 0, \dots, T$}
	\State Set $\mathcal{L} \leftarrow \{\Omega_\root\}$ (collections of regions to be split). 
	\While{$\mathcal{L} \neq \emptyset$}
	\State $\Omega_j \leftarrow \mathrm{pop\_first\_element}(\mathcal{L}),\ \  D_{t,j} \leftarrow D_t \cap \Omega_j, \ \ n_{t,j} \leftarrow |D_{t,j}|$. 
	\State Compute dominance number $o_{t,j}$ of $D_{t,j}$ using Eqn.~\ref{eq:dominance} and train SVM model $h(\cdot)$.
	\State \textbf{If} $(D_{t,j}, o_{t,j})$ is splittable by SVM, \textbf{then} $\mathcal{L} \leftarrow \mathcal{L} \cup \mathrm{Partition}(\Omega_j, {h(\cdot)})$.
	\EndWhile
	\For{$k = \root$, $k$ is not leaf node}
	    \State $D_{t,k} \leftarrow D_t \cap \Omega_k, \ \,\ \ n_{t,k} \leftarrow$ $|D_{t,k}|$.
	\EndFor
	
	\For{$l$ is leaf node}
	    \State $v_{t,l} \leftarrow \mathrm{HyperVolume}(D_{t,l})$
	\EndFor
	\State $k \leftarrow \displaystyle\arg\max_{l\ \in \ \mathrm{leaf\ nodes}} \mathrm{UCB}_{t,l}$, where $\mathrm{UCB}_{t,l} := v_{t,l} + 2 C_p \sqrt{\frac{2\log(n_{t,l})}{n_{t,p}}}$, where $p$ is the parent of $l$.

	\State $D_{t+1} \leftarrow$ $D_{t} \cup D_{\mathrm{new}}$, where $D_{\mathrm{new}}$ is drawn from $\Omega_{k}$ based on qEHVI or CMA-ES.  
    \EndFor

  \end{algorithmic}
\label{alg:lamoo_leaf}
\end{algorithm*}

\begin{figure}[H]

\centering 
\subfloat[][BraninCurrin]{\includegraphics[height=1.33in]{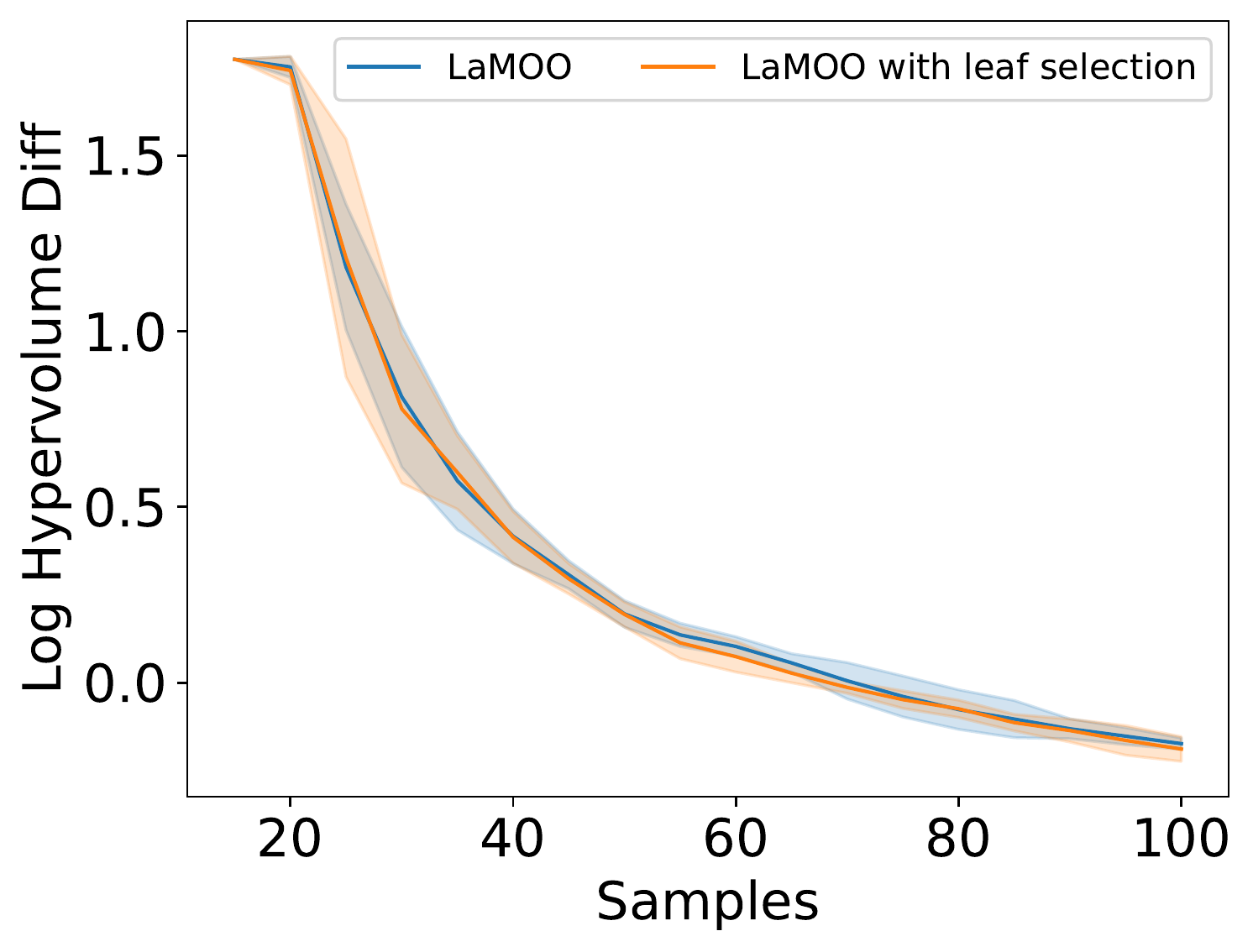}}
\subfloat[][VehicleSafety]{\includegraphics[height=1.33in]{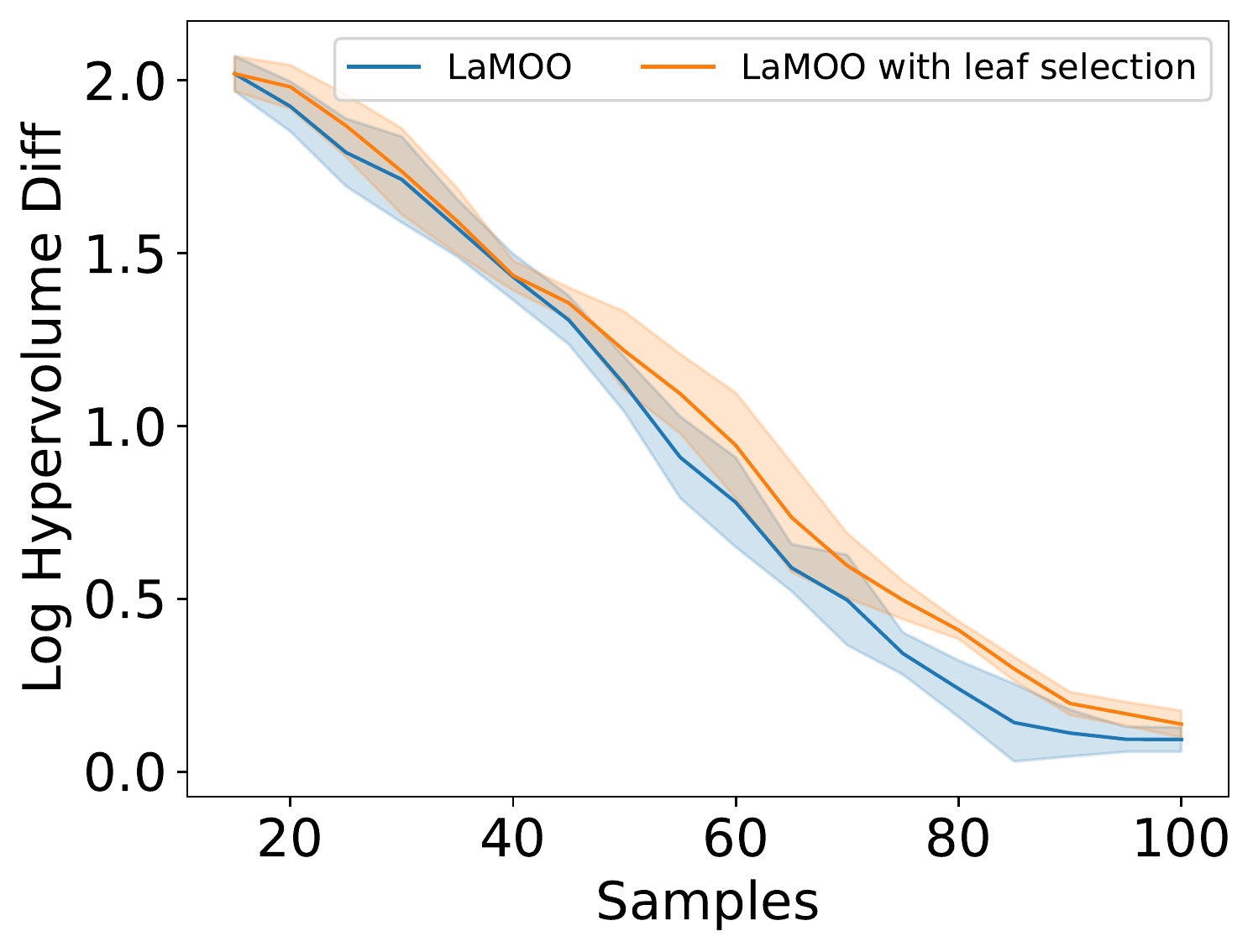}}  
\subfloat[][Nasbench201]{\includegraphics[height=1.33in]{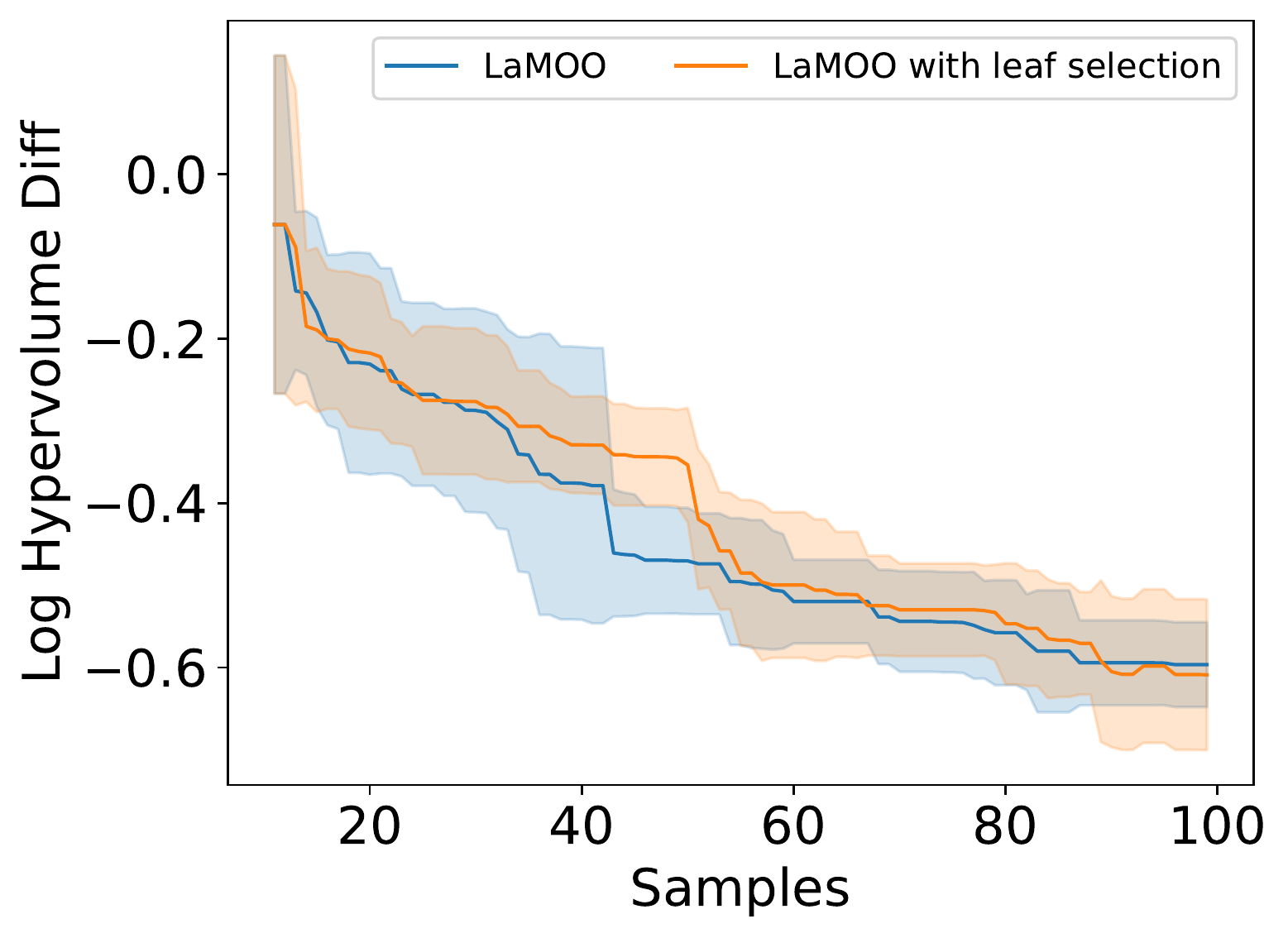}}  
\quad
\caption{ \small Search progress with sample}
\label{fig:leaf_sample}
\end{figure}

\begin{figure}[H]

\centering 
\subfloat[][BraninCurrin]{\includegraphics[height=1.33in]{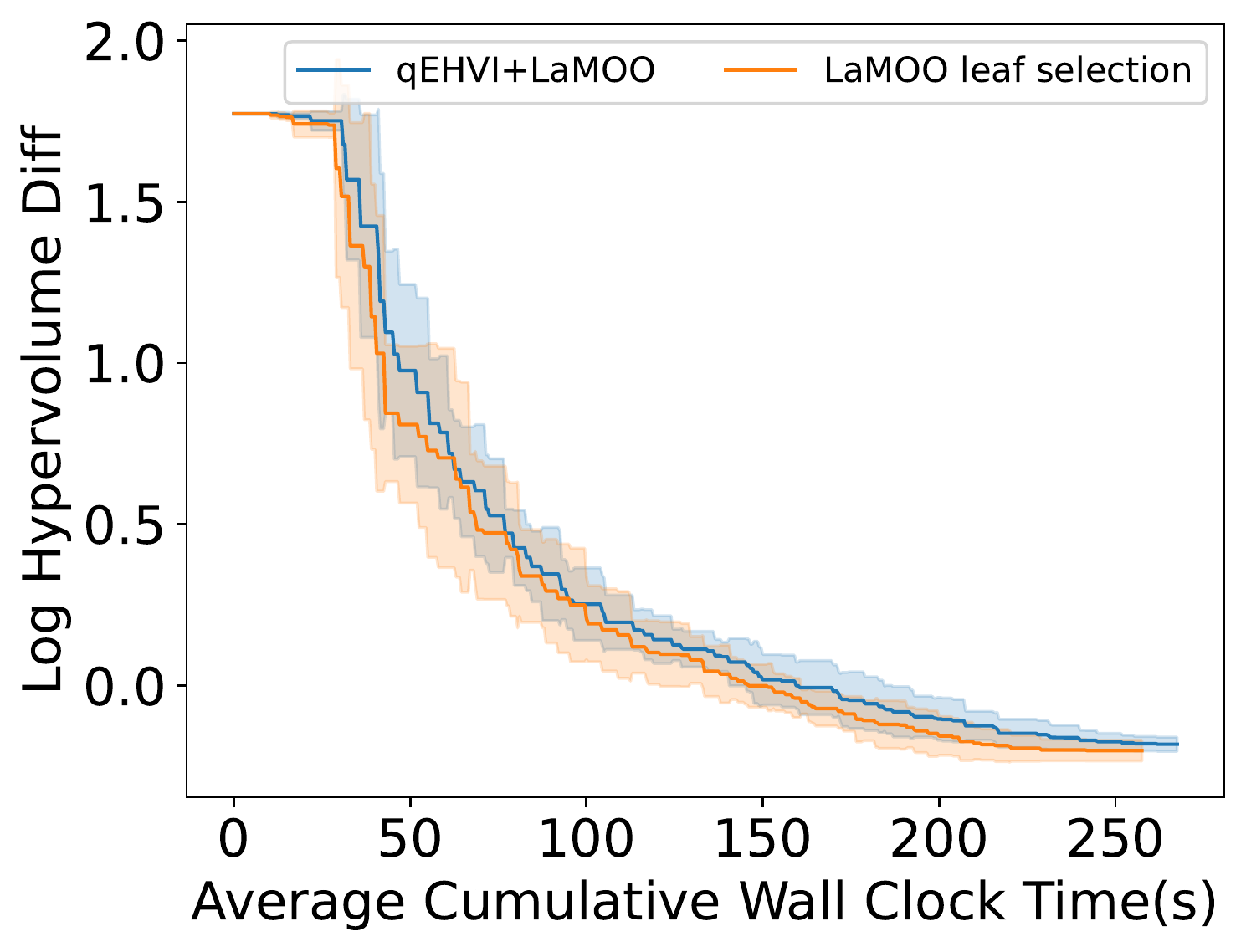}}
\subfloat[][VehicleSafety]{\includegraphics[height=1.33in]{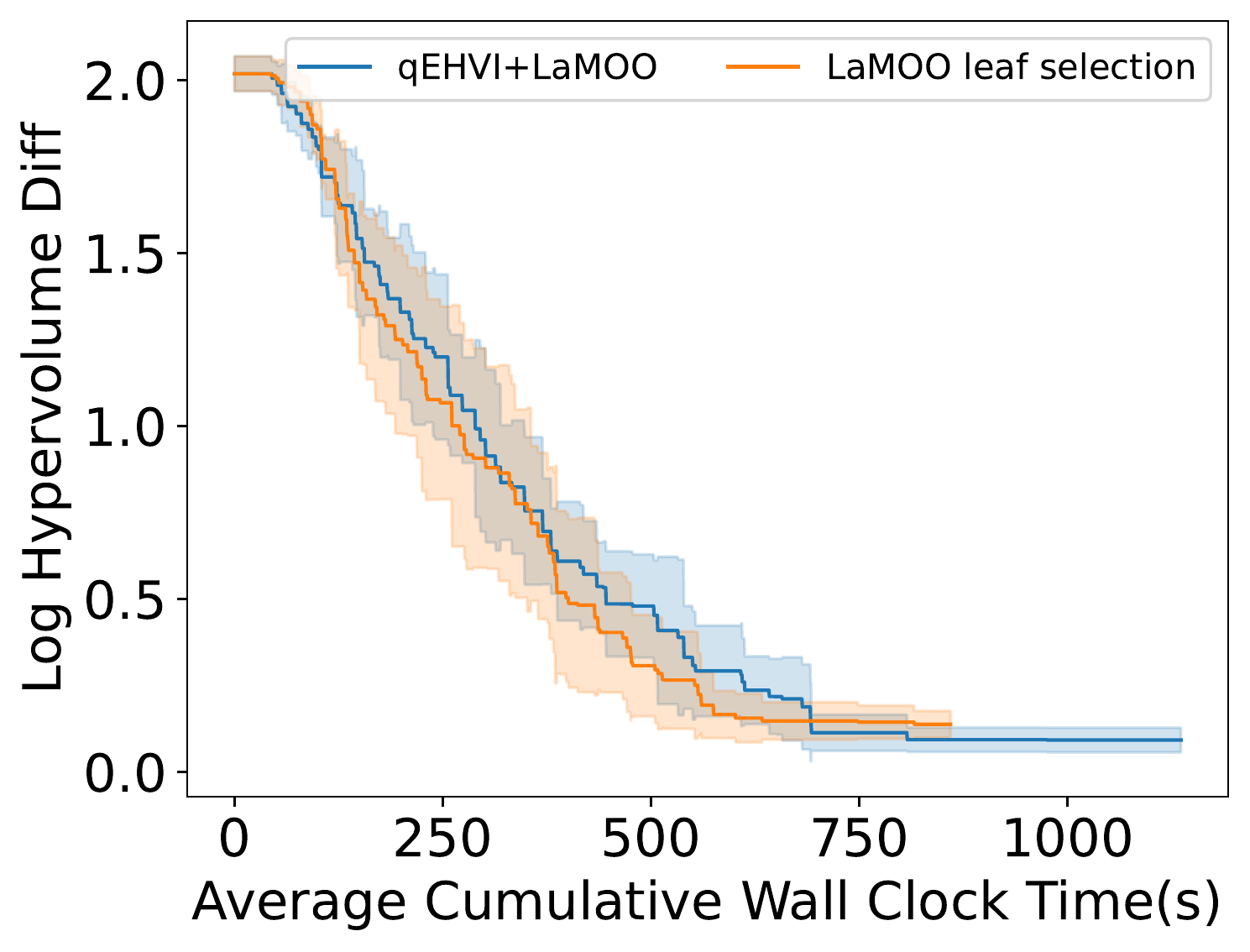}}  
\subfloat[][Nasbench201]{\includegraphics[height=1.33in]{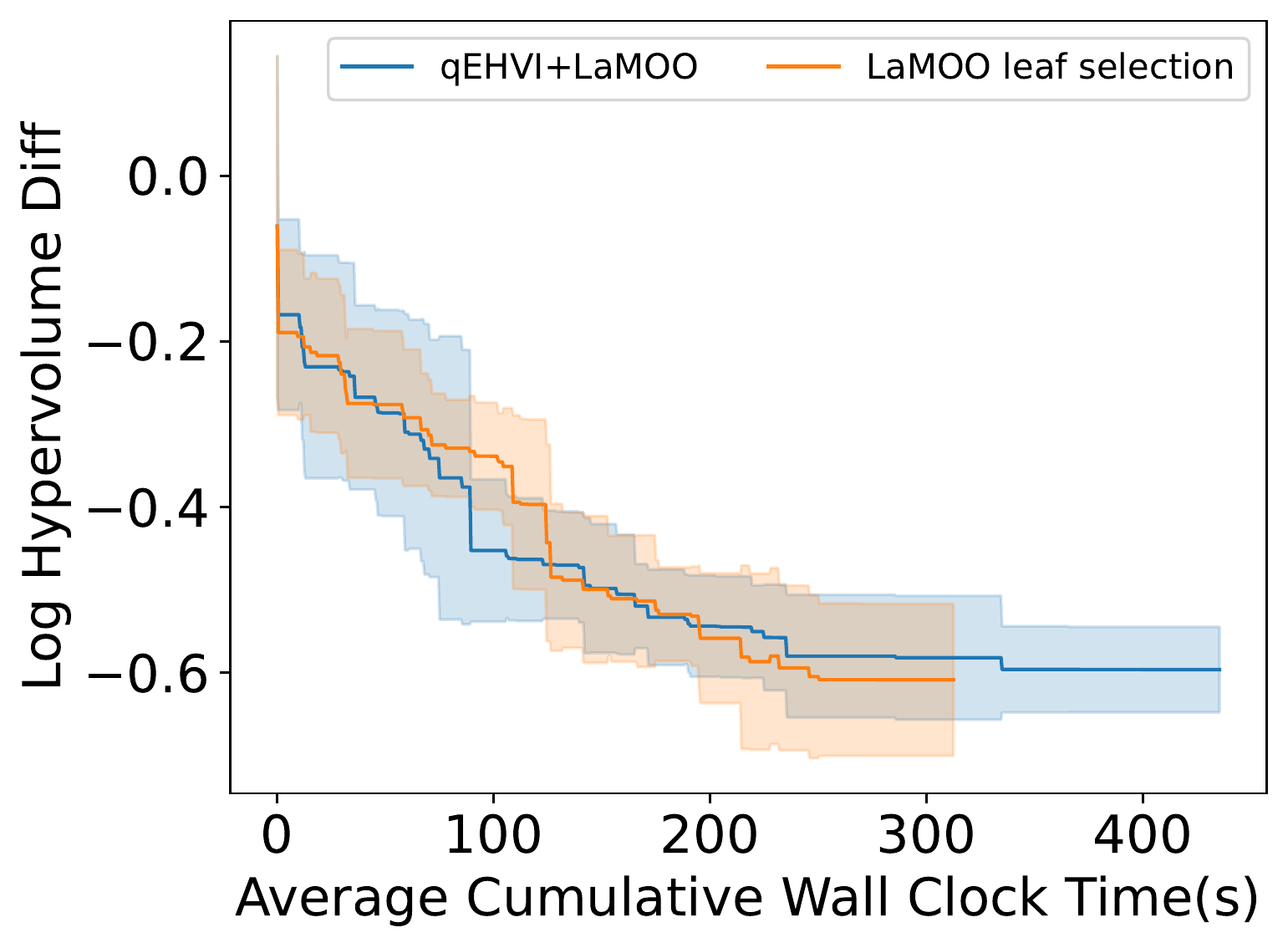}}  
\quad
\caption{ \small Search progress with time}
\label{fig:leaf_time}
\end{figure}

Instead of traversing down the search tree to trace the current most promising search path, this variation of ~\ours{} directly select the leaf node with the highest \emph{UCB value}. Algorithm.~\ref{alg:lamoo_leaf} illustrates the detail of this variation. Therefore, this variation avoids calculating the hypervolume in the non-leaf nodes of the tree, where hypervolume calculation is the main computational cost of ~\ours{} especially in many-objective problems. Figure.~\ref{fig:leaf_sample} and figure.~\ref{fig:leaf_time} validate the variation that is able to reach similar performance of searched samples but saves lots of time. We leave the validation of others problems in the future works. 

\section{additional related works}

~\citep{addr1, addr2, addr3} indeed leverage classifiers to partition search space and draw the new samples from the good region. However, ~\citep{addr1, addr2, addr3} randomly sampled in selected regions without integrating existing optimizers (e.g. Bayesian optimization, evolutionary algorithms). In addition, they progressively select the good regions without a trade-off of exploration and exploitation as we did by leveraging Monte Carlo Tree Search (MCTS). ~\citep{addr4, addr5, addr6} can be seen as the first work to use MCTS to build hierarchical space partitions. But their partitions are predefined (e.g., Voronoi graph, axis-aligned partition, etc) without learning (or adapting to) observed samples so far, except for ~\citep{lanas, wang2020learning, plalam}, which are learning extensions coupled with MCTS. However, they all deal with single-objective optimization. 

For multi-objective optimization, ~\citep{addr7, addr8, addr9} learns to predict the dominance rank of samples, without computing them algorithmic-ally, a slow process with many previous samples, in order to speed up the MOEAs algorithms. Unlike our paper, they do not partition the search space into good/bad regions. In contrast, \ours{} computes the rank algorithmic-ally. Therefore, our contributions are complementary to theirs. We leave a combination of both as one of the future works.

\ours{} V.S. LaMCTS/LaNAS: First, the mechanism of the partitioning of the search space is different. \ours{} uses dominance rank to separate good from bad regions, while LaMCTS uses a k-mean for region separation. LaNAS is even more simple: it uses the median from the single objectives of currently collected samples and a linear classifier to separate regions.

\ours{} V.S. LaP$^3$: LAP$^3$ is a planning algorithm tailored to RL with a single objective function. LAP$^3$ also utilizes the representation learning for the partition space and planning space, while our \ours{} doesn't.

\end{document}